\algnewcommand\algorithmicinput{\textbf{INPUT:}}
\algnewcommand\INPUT{\item[\algorithmicinput]}  
\algnewcommand\algorithmicoutput{\textbf{OUTPUT:}}
\algnewcommand\OUTPUT{\item[\algorithmicoutput]}
 \DeclareMathOperator*{\argmin}{arg\,min}
\newcommand{\R}{r}
\renewcommand{\r}{\R}
\newcommand{\ri}{{\rho}}
\renewcommand{\l}{\ell}
\newcommand{\li}{{\eta}}
\newcommand{\m}{m}
\newcommand{\mi}{{\mu}}
\newcommand{\ki}{k}
\newcommand{\bvec}[1]{\bm{#1}}
\newcommand{\timesS}{{\times_y \mathcal P_{\mathcal S}}}
\DeclareMathOperator*{\rank}{Rank}
\DeclareMathOperator*{\range}{Range}
\DeclareMathOperator*{\Tr}{Tr}
\DeclareMathOperator*{\KDE}{KDE}
\DeclareMathOperator*{\VRS}{VRS}
 \DeclareMathOperator*{\Span}{Span}
\DeclareMathOperator*{\op}{op}
 \DeclareMathOperator*{\OP}{O_{\mathbb P}}
 \DeclareMathOperator*{\bigO}{O}
 \DeclareMathOperator*{\smallo}{o}
 \DeclareMathOperator{\var}{Var}
 \newcommand{\xiapp}{ \xi_{ (r_1,\ldots, r_d)}^*}
\providecommand{\keywords}[1]{\textbf{\textit{Keywords---}} #1}
 \newcommand{\p}{\mathbb P}
  \newcommand{\h}{{  \mathcal H } }
 \newcommand{\C}{{  \mathcal C} }
   \newcommand{\sigmak}{  \lambda^{\mathbb K}   }
   \newcommand{\phik}{\phi^{\mathbb K} }
    \newcommand{\BZ}{{\bm Z} }
    \newcommand{\K}{ \mathbb K  }
    \newcommand{\timesML}{ \times_x  \mathcal{P}_{\mathcal M} \times_y \mathcal{P}_{\mathcal S}  }
   \newcommand{\timesL}{    \times_y \mathcal P_\mathcal L  }
         \newcommand{\M}{ \mathcal M  }
   \newcommand{\timesMD}{{ \times_1 \mathcal P_  \mathcal M \times \ldots \times_d \mathcal P _ \mathcal M} }
   \newcommand{\lt}{{  {\bf L}_2 } }
    \newcommand{\E}{ { \mathbb{E} } }
\newtheorem{assumption}{Assumption}
\newtheorem{lemma}{Lemma}
\newtheorem{corollary}{Corollary}
\newtheorem{definition}{Definition}
\newtheorem{remark}{Remark}
\newtheorem{theorem}{Theorem}
  \newtheorem{example}{Example}
\title{Multivariate Density Estimation via Variance-Reduced Sketching}
\author[1]{Yifan Peng}
\author[2]{Yuehaw Khoo\thanks{Corresponding author: \texttt{yuehaw.khoo@uchicago.edu}}}
\author[3]{Daren Wang}
\affil[1]{Committee on Computational and Applied Mathematics, University of Chicago}
\affil[2]{Department of Statistics, University of Chicago}
\affil[3]{Department of Applied and Computational Mathematics and Statistics, University of Notre Dame}
\date{}  % omit date 
\begin{document}
\maketitle

\begin{abstract}

Multivariate density estimation is of great interest in various scientific and engineering disciplines.  In this work, we introduce a new framework  called Variance-Reduced Sketching (VRS), specifically designed to estimate multivariate  density functions with a reduced curse of dimensionality.
Our VRS framework conceptualizes multivariate functions as infinite-size matrices/tensors, and facilitates a new sketching technique motivated by the numerical linear algebra  literature  to reduce the variance in density estimation problems. We demonstrate the robust numerical performance of VRS through a series of simulated experiments and real-world data applications. Notably, VRS shows remarkable improvement over   existing neural network density estimators and classical kernel methods in numerous distribution models. Additionally, we offer theoretical justifications for VRS to support  its ability to deliver   density estimation  with a reduced curse of dimensionality. 
 
\end{abstract}
\keywords{Density estimation, matrix and tensor sketching, range estimation,   curse of dimensionality reduction %\red{require 5 or 6 keywords in alphabetical order}
}

\section{Introduction}
\label{section:introduction}

Multivariate  density estimation has extensive applications across diverse fields, including  genetics \citep{alquicira2021nebulosa,tian2024local}, neuroscience \citep{grieves2023estimating,torao2021up}, ecology \citep{dudik2007maximum,torres2012can}, and epidemiology \citep{shi2021extended,yang2021vision}.  In this context, the goal is to estimate the underlying   density function based on a collection of  independently and identically distributed data. 
 Classical density estimation methods such as histograms (\cite{scott1979optimal} and \cite{scott1985averaged}) and kernel density estimators (\cite{parzen1962estimation} and \cite{davis2011remarks}), known for their numerical robustness and statistical stability in lower-dimensional settings, often
suffer from the curse of dimensionality even in moderately higher-dimensional spaces. Mixture models (\cite{dempster1977maximum} and \cite{escobar1995bayesian}) offer a potential solution for higher-dimensional problems, but these methods lack the flexibility to extend to nonparametric settings. Additionally, adaptive methods (\cite{liu2007sparse} and \cite{liu2023convergence}) have been developed to address higher-dimensional challenges.   Recently, deep generative modeling has emerged as a popular technique to approximate high-dimensional densities from given samples, especially in the context of images. This category includes generative adversarial networks (GAN) (\cite{goodfellow2020generative,liu2021density}), normalizing flows (\cite{dinh2014nice,rezende2015variational,dinh2016density}), and autoregressive models (\cite{germain2015made,uria2016neural,papamakarios2017masked,huang2018neural}).
%, and diffusion models (\cite{song2019generative,song2020score,ho2020denoising,kingma2021variational}). 
Despite their remarkable performance, particularly with high-resolution images, statistical guarantees   for these neural network methods continue  to be a challenge.   In this paper, we aim to develop a new framework specifically designed to estimate  multivariate nonparametric density functions. Within this framework, we conceptualize multivariate density functions as matrices or tensors and extend matrix/tensor approximations algorithm to this nonparametric setting, in order to reduce the curse of dimensionality in higher dimensions.

\subsection{Variance-Reduced Sketching}

Suppose the  data $\{ \BZ_i\}_{i=1}^N \subset \mathbb R^d$  are independently sampled from an unknown density $A^* \in \lt(\mathbb R^d )$. 
 In this work,  we  develop  a new matrix/tensor-based sketching framework to estimate    $A^*$, which  we  refer to as  \textit{Variance-Reduced Sketching} (VRS). In VRS, we view density functions as order $d$ tensors in the infinite-dimensional function space 
 $$\lt(\mathbb R^d ) =\lt(\mathbb R )\otimes \cdots \otimes \lt(\mathbb R ). $$
In the setting of estimating an $\alpha$-times differentiable density function in $d$ dimensions, 
the VRS estimator   achieves error rates 
\begin{align} 
\label{eq:intro rates} 
\E ( \| A^* - \widehat A_{\VRS}\|_{\lt} )  = \bigO \bigg( \frac{\sqrt{\prod_{j=1}^d r_j} }{N^{ \alpha/(2\alpha+1)}}   +\xiapp\bigg). 
\end{align}
  Here, $\{ r_j\}_{j=1}^d$ are the user-specified ranks for the   estimator $\widehat A_{\VRS}$, which is an order $d$ tensor in the function space; and  $\xiapp$    represents   the best possible error one could achieve  using any  rank-\((r_1,\ldots, r_d)\) tensor to approximate    \(A^*\)   in $\lt(\mathbb R^d ) $ :
  \begin{align*}  \xiapp = \min_{    B \text{  has   ranks  }    (r_1,\ldots, r_d) }
  \{ \| A^* - B \|_{\lt  }  \}.
\end{align*}
The precise definition of $\xiapp$ can be found at \eqref{eq:definition of low rank bias main}.
In contrast,   the error rate of the classical kernel density estimator (KDE)  satisfies (see e.g.\cite{wasserman2006all})
\begin{align} 
\label{eq:intro KDE rates} 
\E ( \| A^* - \widehat A_{\KDE}\|_{\lt} )   = \bigO \bigg(\frac{1}{N^{ \alpha/(2\alpha+d)}} \bigg),
\end{align}

In \Cref{section:examples of low rank}, we show that functions from many well-known  models such as the additive model\footnote{$f:\mathbb R^d\to \mathbb R $ is additive if $f(z_1,\ldots, z_d) =f_1(z_1) +f_2(z_2)+ \ldots + f_d(z_d)$.} and the mean-field model\footnote{$p:\mathbb R^d\to \mathbb R $ is mean-field if  $p(z_1,\ldots, z_d)= p_1(z_1) \cdot p_2(z_2) \cdots p_d(z_d)$.}  are      low-rank tensors    with  bounded ranks   $(r_1,\ldots, r_d)$ and  approximation error $\xiapp=0$.
In such  settings, the error rate in \eqref{eq:intro rates} reduces to $ \bigO \big( \frac{1}{N^{2\alpha/(2\alpha+1)}} \big) $, which corresponds to the nonparametric error rate in one dimension, significantly better than \eqref{eq:intro KDE rates}.  

To complement our theoretical findings,  we present a simulation study in \Cref{fig:intro} that compares the numerical performance  of    deep neural network estimators, classical kernel density estimators, and VRS. Additional empirical studies are provided in \Cref{sec: numerical experiments}. Extensive numerical evidence suggests that VRS significantly outperforms various deep neural network estimators and KDE by a considerable margin.

 \begin{figure} [h]
 
  \centering\includegraphics[width=10cm,height=6cm]{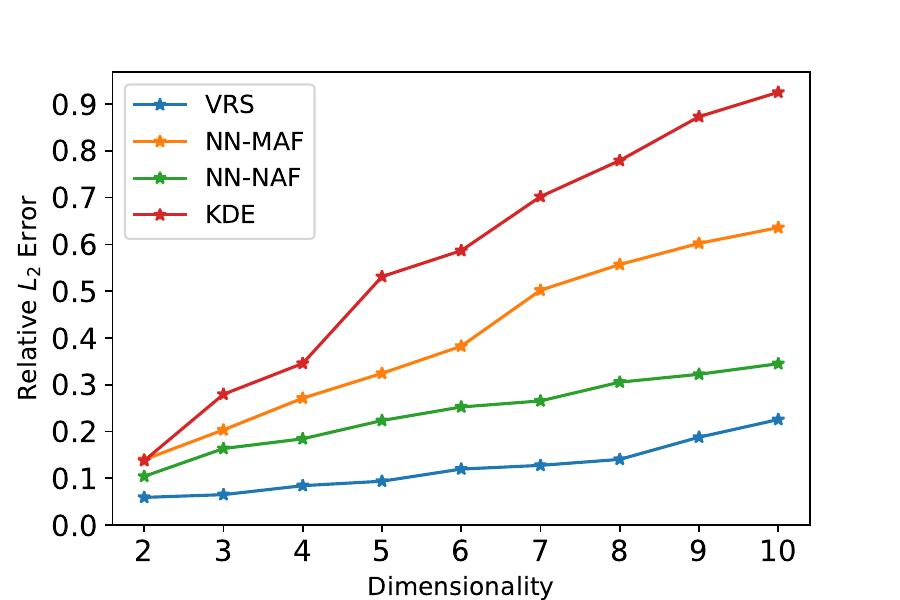}
 
     \caption{Density estimation   with  data sampled from the Ginzburg-Landau density  in \eqref{eq:GL density}. The $x$-axis represents dimensionality, varying from $2$ to $10$. NN-MAF corresponds to the Masked Autoregressive Flow  method \citep{papamakarios2017masked} and NN-NAF corresponds to the Neural Autoregressive Flows  method \citep{huang2018neural}  The performance of different estimators is evaluated using $\lt$-errors. Additional details are provided in  {\bf Simulation $\mathbf{III}$} of \Cref{sec: numerical experiments}.    }
    \label{fig:intro}
\end{figure}

The promising performance of Variance-Reduced Sketching (VRS) comes from its ability to reduce the problem of multivariate density estimation to the problem of estimating low-rank functional matrices/tensors using their moments.

In the finite-dimensional low-rank matrix estimation setting, estimating the matrix is equivalent to estimating its range, which is determined by its singular vectors. This intuition extends to  the density estimation    setting. As a concrete example, suppose \( A^*(x,y) : \mathbb R^2 \to \mathbb R^+ \) is a two-dimensional density function. It can be viewed as an infinite-dimensional matrix in $\lt(\mathbb R )\otimes \lt(\mathbb R )$. If, in addition, \( A^* \) is low-rank, then estimating the matrix \( A^* \) is equivalent to estimating its function range. Here, the function range of \( A^* \) with respect to the variable \( x \) is defined as
\[
{\range}_x(A^*) = \bigg\{ \int A^*(x, y) g(y) \, \text{d}y : g(y) \in \lt(\mathbb{R}) \bigg\}.
\]

Note that the function range of \( A^*(x, y) \) forms a linear subspace of functions in the variable \( x\in \mathbb R  \). 
The moral   from  linear algebra indicates that, by multiplying a low-rank matrix with some carefully  chosen vectors, the range can be readily obtained.  To estimate the function range, as demonstrated   \Cref{fig:intro sketch}, in VRS  we integrate   \( A^* \) with carefully chosen functions, analogous to finite-dimensional matrix-vector multiplications.   Integrating \( A^* \) against low-frequency functions results in    moments of \( A^* \) that are  highly correlated to the signals. We refer to this technique as a variance-reduced sketch for matrices in function spaces. These moments, typically of low orders, are selected to ensure that the range of the density can be accurately recovered while maintaining low variance in range estimation. 

In this way, VRS transforms a multidimensional density estimation problem into the task of estimating one-dimensional moments,   achieving the univariate estimation error. Additionally, this moment sketching strategy requires only a single pass over the samples and dimensions, resulting in a computational cost that scales linearly with both the dimensionality \( d \) and the sample size \( N \).
The moment-based range estimation technique  can be readily generalized to density estimation in arbitrary dimensions.

 We emphasize that our VRS framework is fundamentally different from the randomized sketching algorithm for finite-dimensional matrices, as randomly chosen sketching does not address the curse of dimensionality in multivariable density estimation models.  Additionally, the statistical guarantees for VRS are derived from a computationally tractable algorithm.  In contrast, deep learning methods exhibit generalization errors that depend strongly on the architecture of the neural network estimators. Moreover, the statistical analysis of these generalization errors is not necessarily aligned with the optimization errors achieved by computationally tractable optimizers.

 \begin{figure}[h]
  
  \centering\includegraphics[width=1\linewidth]{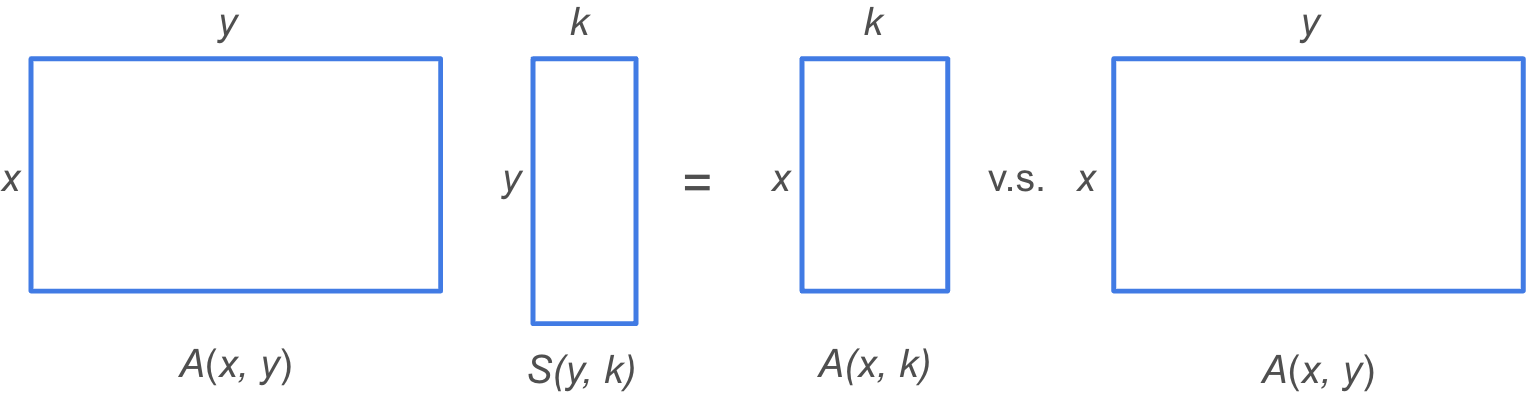}  
     \caption{ The sketched function $AS$ by VRS   retains the range in the variable $x$ of  $A (x, y)$. The complexity of estimating  the range   of $A$ using $AS$ is much lower than  the complexity of directly estimating $A$. }
    \label{fig:intro sketch}
\end{figure}

 \subsection{Related literature}
\par 
Matrix approximation algorithms, such as singular value decomposition and QR decomposition, play a crucial role in computational mathematics and statistics. A notable advancement in this area is the emergence of randomized low-rank approximation algorithms. These algorithms excel in reducing time and space complexity substantially without sacrificing too much numerical accuracy.  Seminal contributions to this area are outlined in works such as \cite{liberty2007randomized} and \cite{halko2011finding}.  Additionally, review papers like \cite{woodruff2014sketching}, \cite{drineas2016randnla}, \cite{martinsson2019randomized}, and \cite{martinsson2020randomized}  have provided  comprehensive summaries of these randomized approaches, along with their theoretical stability guarantees. Randomized low-rank approximation algorithms  typically  start by estimating  the range of a large low-rank matrix $A\in\mathbb{R}^{n\times n}$ by forming a reduced-size sketch. This is achieved  by right multiplying $A$ with a random matrix  $S\in\mathbb{R}^{n\times k}$, where $k\ll n$. The random matrix $S$ is selected to ensure that the range of $AS$ remains a close  approximation of the range of $A$, even when the column size of $AS$ is significantly reduced from $A$.  As such, the random matrix $S$ is referred to as randomized linear embedding or sketching  matrix by \cite{tropp2017practical} and \cite{nakatsukasa2021fast}. The sketching  approach reduces the cost in singular value decomposition from $\bigO(n^3)$ to $\bigO(n^2)$, where $\bigO(n^2)$ represents the complexity of matrix multiplication.

Recently, a series of studies have extended the matrix sketching technique to range estimation for high-order tensor structures, such as the Tucker structure  (\cite{che2019randomized,sun2020low,minster2020randomized}) and  the tensor train structure 
(\cite{al2023randomized,kressner2023streaming,shi2023parallel}). These studies developed specialized structures for  sketching to  reduce computational complexity while maintaining high levels of numerical accuracy in handling high-order tensors.

Previous work has also explored  randomized  sketching techniques in specific  estimation problems. For instance, \cite{mahoney2011randomized} and \cite{raskutti2016statistical} utilized  randomized  sketching to solve unconstrained least squares problems. \cite{williams2000using} , \cite{rahimi2007random}, \cite{kumar2012sampling} and \cite{tropp2017fixed} improved the Nystr{\"o}m method with randomized sketching techniques. Similarly, \cite{alaoui2015fast}, \cite{wang2017sketched}, and \cite{yang2017randomized} applied  randomized  sketching to kernel matrices in kernel ridge regression to reduce computational complexity. While these studies mark significant progress in the literature, they usually require extensive sampling or evaluation of the estimated function to employing the randomized sketching technique, in order to maintain acceptable accuracy. This step would be significantly expensive for the higher-dimensional setting.  Notably, \cite{hur2023generative} and subsequent studies \cite{tang2022generative,ren2023high,peng2023generative,chen2023combining,tang2023solving} aaddressed these issues by incorporating the variance of the data generation process into the design of sketching techniques for high-dimensional tensor estimation.  This sketching technique allows for the direct estimation of the  range of a tensor with reduced sample complexity, rather than directly estimating the full tensor.

\subsection{Organization}

Our manuscript   is organized as follows. 
In \Cref{section:general range estimation}, we  detail the procedure for implementing the range estimation through sketching and study  the corresponding error analysis. In \Cref{section:tensor estimation},   we extend our study  to density function estimation based  on the range  estimators developed in \Cref{section:general range estimation}, and provide the corresponding statistical error analysis  with a reduced curse of dimensionality.   In  \Cref{sec: numerical experiments}, we present  comprehensive numerical results to demonstrate the superior numerical performance of  our method.   Finally,  \Cref{sec:conclusion} summarizes our conclusions. 

\subsection{Notations}\label{sec:notation}

 Let  $\mathbb N =\{1,2,3,\ldots\}$ and $\mathbb R$   denote the set of all the real numbers.
  We denote  $X_n=\OP(a_n)$  if for any given $\epsilon>0$, there exists a  
$K_\epsilon>0$ such that 
$  \limsup_{n\to \infty} \p(|X_n/a_n| \ge K_\epsilon ) < \epsilon.$   
For  real numbers $\{ a_n\}_{n=1} ^\infty $, $\{ b_n\}_{n=1}^\infty $ and $\{ c_n\}_{n=1} ^\infty$, we denote 
$a_n=\bigO(b_n)$ if 
$ \lim_{n\to \infty }a_n/b_n < \infty $ and $a_n=\smallo(c_n) $ if 
$ \lim_{n\to \infty }a_n/c_n  =0$.
Let $[0,1]$ denote the unit interval in $\mathbb R$. For   $d \in \mathbb N $,  denote 
  the $d$-dimensional unit cube as $ [0,1]^d   $.

  Let $\{ f_i\}_{i=1}^n$ be a collection of elements in the Hilbert space $\h$. Then 
\begin{align}\label{eq:definition span} \Span\{ f_i\}_{i=1}^n =\{ b_1 f _1+\cdots+b_nf_n: \{b_i\}_{i=1}^n \subset \mathbb R\}. 
\end{align}
Note that $\Span\{  f_i\}_{i=1}^n $ is a  linear subspace of $\h$. 
 
For a generic measurable set $\Omega \subset \mathbb R^d$, denote
$  \lt(\Omega) = \big\{ f:\Omega \to \mathbb R : \|f\|^2_{\lt(\Omega)} = \int_{\Omega} f^2(z)\mathrm{d}z < \infty \big\}.$ 
  For any $f, g\in \lt(\Omega) $, let the inner product between $ f$ and $g$   be 
$$ \langle f, g\rangle  =\int_\Omega   f(z)g(z) \mathrm{d}z. $$ 
We say that $\{\phi_k\}_{k=1}^\infty$ is a collection of orthonormal   functions in $\lt(\Omega)$ if 
\begin{align*}
    \langle \phi_k , \phi_l\rangle = \begin{cases}
1 &\text{if } k=l;
\\
0 &\text{if } k\not = l.
\end{cases}
\end{align*}  
 We say that $\{\phi_k\}_{k=1}^\infty$ is a complete basis  of $\lt(\Omega)$ if $\{\phi_k\}_{k=1}^\infty$ is orthonormal and $\Span \{\phi_k\}_{k=1}^\infty =\lt(\Omega)$.
Let $ Z \in  \Omega $  and  $\delta_{Z }$ be the  indicator function at the point $Z$.  Define 
$$   \langle    \delta_{Z }  ,f  \rangle =  f(Z).    $$ 

In what follows,  we   briefly  introduce the notation   for Sobolev spaces. Let $\Omega \subset \mathbb R^d$ be any measurable set.  For multi-index $\beta =(\beta _1,\ldots,\beta  _d) \in \mathbb N^{d}  $, and 
$ f(z_1,\ldots, z_d) : \Omega \to \mathbb R$, define the $\beta $-derivative of $f$ as 
 $D^{\beta   }f = \partial ^{ \beta_1} _1 \cdots \partial ^{ \beta_d} _d    f    .$ 
Then 
$$ W^{\alpha  }_2(\Omega) :=\{ f\in \lt(\Omega): \  D^\beta  f \in \lt(\Omega)   \text{ for all } |\beta  | \le \alpha  \} ,$$
where $|\beta  |= \beta  _1+\cdots+ \beta _d$ and $\alpha \in \mathbb N$ represents the total order of derivatives. The Sobolev norm of $f \in W^{\alpha  }_2 $ is 
$ \|f\|_{W^{\alpha    }_2(\Omega) }^2 = \sum_{0\le |\beta| \le \alpha } \| D^\beta f\|_{\lt(\Omega)}^2  . $ 

\subsection{Background: linear algebra in function spaces}\label{sec:linear algebra in function spaces}

Linear algebra in tensorized function spaces   is the key component  to develop our Variance-Reduced Sketching (VRS) framework for nonparametric density estimation. 
 
We start with   a convenient notation for partial  integration of two functions. 
For $j=1,2,3$  let $\Omega_j \subset \mathbb R^{d_j}$  be any regular domain.  
Let $A(x, y) \in  \lt(\Omega_1\times \Omega_2 ) $ and $Q(y,z) \in  \lt(\Omega_2\times \Omega_3 ) $ be two functions. Define
\begin{align}\label{eq:integration as matrix multiplication main}  ( A\times _y Q ) (x, z)  = \int_{\Omega_2} A(x, y) Q(y,z) \mathrm{d}y.  
\end{align}
For $j\in \{1,\ldots,d \}$, let $\Omega_j  \subset \mathbb R^{d_j}$ be a measurable subset. Let  $ \mathcal S_j   $ be   arbitrary  linear subspaces of $ \lt(\Omega_j)$ where functions in $\mathcal S_j $ correspond to  the variable $z_j$. Define
$$ \mathcal S_{1}\otimes \cdots \otimes  \mathcal S_{d} = \Span\{g_1(z_1) \cdots g_d(z_d) : g_1(z_1) \in   \mathcal S_{1}, \ldots, g_d(z_d) \in   \mathcal S_{d}\}. $$
So $ \mathcal S_{1}\otimes \cdots \otimes  \mathcal S_{d} \subset \lt( \Omega_1) \otimes \cdots \otimes  \lt( \Omega_d) = \lt(\Omega_1 \times \cdots \times \Omega_d)$.
Let $ \{ \phi_{\mi} ^{(j)} (z_j)  \}_{\mi=1}^ \infty$ be a complete basis in $\lt(\Omega_j)$, and 
   define  
\begin{align*}  \mathcal M_j =   \Span \{ \phi_{\mi}^{(j)} (z_j)  \}_{\mi=1}^ m  . 
\end{align*}
So $\mathcal M _j\subset \lt(  \Omega_j) $  and  $ \mathcal M_{1} \otimes \cdots \otimes \mathcal M_d \subset \lt( \Omega_1) \otimes \cdots \otimes  \lt( \Omega_d)  $. Denote 
$$ \mathcal P_{\mathcal M_j} (z_j, z_j') = \sum_{\mi=1}^m \phi^{(j)} _\mi(z_j) \phi^{(j)} _\mi(z_j')  . $$
For any function $A \in \lt(\Omega_1 \times \cdots \times \Omega_d) $,   we define 
$A  \times_j \mathcal P _{\mathcal M_j} (z_1, \ldots, z_d)$ using  \eqref{eq:integration as matrix multiplication main} as 
$$ A  \times_1 \mathcal P _{\mathcal M_1} (z_1, \ldots, z_d) = \int_{\mathcal O} A(z_1, \ldots, z_{j-1}, z_j',z_{j+1},\ldots, z_d ) \mathcal P_{\mathcal M_j} (z_j', z_j) \mathrm{d} z'_j.$$
Therefore 
$A  \times_1 \mathcal P _{\mathcal M_1} \times \ldots \times_d  \mathcal P_{\mathcal M_d}    $  can be defined inductively.  A direct calculation shows that   
$$  A {\times_1 \mathcal P _{\mathcal M_1} \times \ldots \times_d  \mathcal P_{\mathcal M_d}}(z_1, \ldots, z_d) = \sum_{\mi_1=1}^m \cdots \sum_{\mi_d=1}^m  \langle A, \phi_{\mi_1} \otimes \cdots \otimes \phi_{\mi_d}\rangle  \phi_{\mi_1}(z_1) \cdots   \phi_{\mi_d} (z_d) . $$
Therefore   $  A {\times_1 \mathcal P _{\mathcal M_1} \times \ldots \times_d  \mathcal P_{\mathcal M_d}} $ is a function in the space $\mathcal M_{1} \otimes \cdots \otimes \mathcal M_d  $.
Given data $\{ \BZ_i\}_{i=1}^N \subset \Omega_1 \times \cdots \times \Omega_d $,  denote the corresponding  empirical measure as 
$  \widehat A    = \frac{1}{N} \sum_{i=1}^N \delta _{\BZ_i}  ,$ 
where $ \delta _{\BZ_i }$ is the indicator function   at the  point $\BZ_i$.
We can also project the empirical measure $\widehat A$ onto $\mathcal M_{1} \otimes \cdots \otimes \mathcal M_d  $ as 
\begin{align}\nonumber 
    \widehat A  {\times_1 \mathcal P _{\mathcal M_1} \times \ldots \times_d  \mathcal P_{\mathcal M_d}}  (z_1, \ldots, z_d) =  \sum_{\mi_1=1}^m \cdots \sum_{\mi_d=1}^m  \langle  \widehat A, \phi ^{(1)}_{\mi_1} \otimes \cdots \otimes \phi_{\mi_d}^{(d)}\rangle  \phi_{\mi_1}^{(1)} (z_1) \cdots   \phi_{\mi_d}^{(d)} (z_d) 
   \\ \label{eq:projection of measure}
   =  \sum_{\mi_1=1}^m \cdots \sum_{\mi_d=1}^m \bigg \{ \frac{1}{N } \sum_{i=1}^N    \phi_{\mi_1} ^{(1)} \otimes \cdots \otimes \phi_{\mi_d}^{(d)} (\BZ_i)  \bigg\}   \phi_{\mi_1}^{(1)}(z_1) \cdots   \phi_{\mi_d}^{(d)} (z_d)  .
\end{align}

\section{Density range estimation by sketching}
\label{section:general range estimation}

 In this section, we introduce a new technique called sketching, which allows us to estimate the range of a  density function  based on discrete samples with a reduced curse of dimensionality. The definition of function range can be found   in \eqref{eq:range definition}.  As demonstrated in \Cref{section:tensor estimation}, range estimation    plays a crucial role in the  VRS density estimation framework.

Throughout this section, we set $d_1, d_2$ to be two arbitrary positive integers.  Let $\Omega_1 \subset \mathbb R^{d_1}$ and $\Omega_2 \subset \mathbb R^{d_2}$ be two measurable sets. Let $A^*(x, y):\Omega_1\times \Omega_2 \to \mathbb R$ be the unknown density function.

\subsection{Ranges  and ranks of functions}\label{sec:function as matrix}
  Let $B(x, y) \in \lt(\Omega_1\times \Omega_2)$. We define the range space of $B(x, y)$ in the variable $x$ as 
\begin{align} \label{eq:range definition}
{ \range}_x (B) = \bigg \{ f(x) : f(x) = \int_{\Omega_2} B(x, y) g (y) \mathrm{d} y \text{ for any } g(y) \in \lt(\Omega_2) \bigg \}.
\end{align}
In VRS, we   first estimate the range of the unknown density $A^*$ with respect to each variable, and then incorporate the range estimators of all variables to estimate the density function $A^*$.

We define the dimensionality of $B$ with respect to variables $(x, y)  \in \Omega_1 \times \Omega_2$ by using the singular value decomposition of $B$ in the function space $\lt(\Omega_1 \times \Omega_2)$.
\begin{theorem}
\label{theorem:svd Hilbert main}
[Singular value decomposition in function space] 
Let $B(x, y):\Omega_1 \times \Omega_2 \to \mathbb R$ be any function such that $\|B\|_{\lt(\Omega_1 \times \Omega_2)} < \infty$. There exists a collection of strictly positive singular values $\{ \sigma_\ri (B) \}_{\ri=1}^{\R} \subset \mathbb R^+$ in decreasing  order, and two collections of orthonormal basis functions $\{ \Phi_\ri(x) \}_{\ri=1}^{\R} \subset \lt(\Omega_1)$ and $\{ \Psi_\ri(y) \}_{\ri=1}^{\R} \subset \lt(\Omega_2)$, where $\R \in \mathbb N \cup \{+\infty\}$, such that 
\begin{align}\label{eq:svd Hilbert main}
B(x, y) = \sum_{\ri=1}^{\R} \sigma_{\ri}(B) \, \Phi_\ri(x) \, \Psi_\ri(y).
\end{align}
In this case, we say that the rank  of $B(x, y)$ is $\R$.
\end{theorem} 
Suppose \eqref{eq:svd Hilbert main} holds. An equivalent definition of ${\range}_x(B)$ is 
\[
{\range}_x(B) = \Span\{\Phi_\ri(x) \}_{\ri=1}^{\R}.
\]
Therefore, $\dim({\range}_x(B))$ equals the rank of $B$. This illustrates the well-known fact that the dimensionality of the column space of a matrix agrees with its rank.

\subsection{The sketching algorithm }
Let $\{ \BZ_i \}_{i=1}^N \subset \Omega_1 \times \Omega_2 $ be the observed data sampled from $A^* (x, y)$.  
Based on the observed data, our goal in this section is to estimate ${ \range} _x (A^*)$  defined in  \eqref{eq:range definition}.
 We denote  by   $\widehat A  $     the   empirical measure  formed by $\{ \BZ_i\}_{i=1}^N \subset\mathbb R^d$ as 
\begin{align} \label{eq:density estimator example} \widehat A  (x, y)   = \frac{1}{N}\sum_{i=1}^N \delta _{  \BZ_i  }  (x, y) .
\end{align} 

 We will work with two  user-specified   linear subspaces $\mathcal M $ and $\mathcal S  $,
where $\mathcal M \subset \lt(\Omega_1)$  serves  as an estimation subspace, and 
$\mathcal S  \subset \lt (\Omega_2)$  serves    as a sketching subspace. Suppose that 
\begin{align}\label{eq:choice of linear spaces in sketching}\M = \Span\{   v_{\mi}(x) \}_{\mi=1}^{\dim(\mathcal M)} \quad \text{and}\quad  \mathcal S  = \Span\{   w_{\li}(y) \}_{\li=1}^{\dim(\mathcal S)}  , 
\end{align}
where $ \{   v_{\mi}(x) \}_{\mi=1}^{\dim(\mathcal M)}$ are   orthogonal functions  in $ \lt(\Omega_1) $ and $ \{   w_{\li}(y) \}_{\li=1}^{\dim(\mathcal S)} $   are   orthogonal functions  in $  \lt(\Omega_2)$. Specific choices of these subspaces can be found at 
\eqref{eq:projection basis for sketching 1} and \eqref{eq:projection basis for sketching 2}.
Our procedure is composed of the following three stages.\par

\begin{itemize}[leftmargin=0.4cm]
    
\item[$\bullet$]  
 
Sketching stage.  
We apply the projection operator $\mathcal P_\mathcal S$ to $\widehat A$ by computing   
\begin{equation}\label{eq:sketching stage}
\bigg\{ \int_{\Omega_2} \widehat{A}(x,y)w_{\li}(y)\mathrm{d} y \bigg \} _{\li=1} ^{ \dim(\mathcal S)} .
\end{equation}
Note  that for each $\li=1,\ldots, \dim(\mathcal S )$, $\int_{\Omega_2} \widehat{A}(x,y)w_{\li}(y)\mathrm{d} y$ is a function solely that  depends on $x$. 
 This stage   aims at reducing the curse of dimensionality associated to variable  $y$. 
     
\item[$\bullet$]  
Estimation stage.  We estimate  the   functions   
$\big\{ \int_{\Omega_2} \widehat{A}(x,y)w_{\li}(y)\mathrm{d} y  \big\}_{\li=1}^{\text{dim}(\mathcal S)}$ by utilizing the estimation space $\M$. Specifically, for each $\li=1,\ldots,\dim(\mathcal S)  $, we approximate $\int_{\Omega_2} \widehat{A}(x,y)w_{\li}(y)\mathrm{d} y  $ by %via
 \begin{equation} \label{eq:column estimation}
     \widetilde  f_\li (x)=   \argmin_{f \in \M }  \bigg \|\int_{\Omega_2} \widehat{A}(x,y)w_{\li}(y)\mathrm{d} y   - f(x)\bigg\|^2_{\lt(\Omega_1)}.
    \end{equation}
\item[$\bullet$]  Orthogonalization stage. Let   
\begin{align}\label{eq:sketch and estimation} \widetilde A(x, y)= \sum_{\li=1}^{\dim(\mathcal S)} \widetilde f_\li(x)w_\li(y). 
\end{align}
Compute the leading singular functions in the variable $x$ of $\widetilde A(x, y)$  to estimate the   $\range_x(A^*)$. 

\end{itemize}

We formally summarize our procedure in \Cref{algorithm: range estimation}.

\begin{algorithm}[H]
\begin{algorithmic}[1]
	\INPUT Empirical measure $\widehat   A    = \frac{1}{N}\sum_{i=1}^N \delta _{  \BZ_i  }    $, parameter $\R \in \mathbb Z^+$, linear subspaces $\M = \Span\{   v_{\mi}(x) \}_{\mi=1}^{\dim(\mathcal M)}$  and   $\mathcal S  = \Span\{   w_{\li}(y) \}_{\li=1}^{\dim(\mathcal S)}  $.
 	 
 	\State Compute
  $   \big \{ \int_{\Omega_2} \widehat{A}(x,y)w_{\li}(y)\mathrm{d} y \big  \} _{\li=1} ^{ \dim(\mathcal S)},  $ 
  the projection of  $\widehat{A} $ onto  $\{ w_{\li}(y) \}_{\li=1}^{\dim(\mathcal S)}$. 
 	 
 	\State Compute the estimated functions $\{ \widetilde f_\li (x)\}_{\li=1}^{\dim(\mathcal S) }$ in $\mathcal M$ by \eqref{eq:column estimation}.
  
 	\State  
Compute  the leading $\R$ singular functions in the variable $x$ of   $ \widetilde A(x, y)= \sum_{\li=1}^{\dim(\mathcal S)} \widetilde f_\li(x)w_\li(y)  $ and denote them as $\{\widehat \Phi_\ri(x)\}_{\ri=1}^{\R}$.

    \OUTPUT  $\widehat {\mathcal P}_x (x, x')  = \sum_{\ri=1}^\R  \widehat \Phi_\ri   (x) \widehat \Phi_\ri(x')  $.
 \caption{Density range Estimation via Variance-Reduced Sketching} 
\label{algorithm: range estimation}
\end{algorithmic}
\end{algorithm}

   An explicit expression of $\widetilde A(x, y)$  in \eqref{eq:sketch and estimation} can be obtained   from $\widehat A(x, y)$. More precisely,
in the sketching stage,  \eqref{eq:sketching stage} is equivalent to computing 
$\widehat A\timesS $. Indeed, by \eqref{eq:projection of measure},
$$\widehat A\timesS = \sum_{\li=1}^{\dim(\mathcal S)} \bigg( \int_{\Omega_2} \widehat{A}(x,y)w_{\li}(y)\mathrm{d} y \bigg) w_{\li}(y) .$$ 
In the estimation stage, using orthogonality of $ \{   v_{\mi}(x) \}_{\mi=1}^{\dim(\mathcal M)} $,    \eqref{eq:column estimation}    can  be explicitly expressed as 
$$ \widehat f_\li(x) = \sum_{\mi=1}^{\dim(\M)}  \langle \widehat A, v_\mi \otimes w_\li\rangle  v_\mi(x)  ,$$
where by definition,  $ \langle \widehat A, v_\mi \otimes w_\li\rangle  =\int_{\Omega_2}\int_{\Omega_1}  \widehat{A} (x, y)  v _{\mi}(x) w_{\li}(y)  \mathrm{d} x \mathrm{d} y    $. Therefore, $\widetilde A(x, y)$ in \eqref{eq:sketch and estimation} can be rewritten as 
  \begin{align} \widetilde A(x, y)=  \sum_{\mi=1}^{\dim(\M)} \sum_{\li=1}^{\dim(\mathcal S)} \langle \widehat A, v_\mi \otimes w_\li\rangle   v_\mi(x) w_\li(y). 
   \end{align} 
By  \eqref{eq:projection of measure},
$  \widehat A\timesML  $
 has the exact same expression as $\widetilde A $. Therefore, we establish  the identification  
 \begin{align}\label{eq:closed form} \widetilde A =\widehat A\timesML.   
\end{align}

We discuss the specific choices of $\mathcal M$ and $\mathcal S$.
 Let  $\mathcal O$ be a  measurable subset of  $\mathbb R $. Suppose that 
\begin{align}\label{eq:domain assumption of sketching} 
\Omega_1 \subset \mathcal  O^{d_1} \quad \text{and} \quad  \Omega_2 \subset \mathcal  O^{d_2}
 \end{align}
 In   density estimation, it is sufficient to assume 
 ${\mathcal O}=[0,1]$.   Indeed, if the density    function $A^*$  has   compact support, through necessary scaling,   we can assume the support $\Omega_1\times \Omega_2$ is a subset of $\mathcal O^{d_1} \times \mathcal O^{d_2} =[0,1]^{d_1+d_2} $.
 
Let 
$  \{ \phi_{k}\}_{k=1}^\infty  \subset \lt({\mathcal O}) 
$  be a collection of    orthonormal one-dimensional basis functions generated     from either  the reproducing kernel Hilbert spaces or the Legendre polynomial  system.  A detailed discussion of  about the reproducing kernel Hilbert spaces or the Legendre polynomial  system  can be found in \Cref{section: justify approximation theory}.    
       Let  $x_j\in \mathbb R $ for  $j  \in \{ 1,\ldots,d_1\}$  and 
       let  $y_k\in \mathbb R $ for  $k  \in \{ 1,\ldots,d_2\}$.
       For $\m, \l\in \mathbb N$, let 
 \begin{align} \label{eq:projection basis for sketching 1}
 \mathcal M  &=\text{span} \bigg\{\phi_{\mi_1 }(x_1 )\phi_{\mi_2 }(x_2 )  \cdots \phi_{\mi_{d_1} }(x_{d_1} )\bigg \}_{{\mi_1,\ldots, \mi_{d_1}=1} }^{\m } \quad \text{and} \quad 
\\\label{eq:projection basis for sketching 2}
 \mathcal S  &=\text{span} \bigg\{\phi_{\li_1 }(y_1 )\phi_{\li_2 }(y_2 )  \cdots \phi_{\li_{d_2} }(y_{d_2} )\bigg \}_{{\li_1,\ldots, \li_{d_2}=1} }^{\l}.
\end{align} 
Therefore $\dim(\mathcal M) = m^{d_1}$ and $\dim(\mathcal S) = \ell^{d_2}$. 
In     the following result,   we show that  the  range estimator in \Cref{algorithm: range estimation} is consistent.

\begin{theorem}\label{lemma:general projection deviation main} 
  Suppose that the rank of $A^*$ is $\R <\infty$, and let $\sigma_\R^* = \sigma_\R(A^*(x, y))$ denote the minimal nonzero singular value\footnote{The definition of the singular values of a two-variable function can be found in \eqref{eq:svd Hilbert main}.}   of $ A^*(x, y)$.
In \Cref{algorithm: range estimation}, let $\mathcal M$ and $\mathcal S$ be defined as in \eqref{eq:projection basis for sketching 1} and \eqref{eq:projection basis for sketching 2}.
For a sufficiently large constant $C $, suppose that
\begin{align}\label{eq:eigenvalue low bound} \sigma_\R ^*   >C   \max \bigg \{   \l^{-\alpha}  , \m^{-\alpha }  ,\sqrt{ \frac{ ( \m^{d_1} + \l^{d_2}) \log(N) }{N}}     \bigg\}  .\end{align}
Let $  {\mathcal P}_x^* $ be the projection function onto   $\range_x(A^*)$, and 
let $\widehat {\mathcal P}_x $  be the output of \Cref{algorithm: range estimation}. Then 
\begin{align}\label{eq:range estimation error bound} \| \widehat {\mathcal P} _x - \mathcal P^*_x \|_{\lt(\Omega_1 \times \Omega_1) }^2  =\OP  \bigg\{  \frac{r}{ (\sigma_{\R} ^*) ^2 }  \bigg( \m^{-2\alpha }  +  \frac{ (   \m^{d_1}+    \l^{d_2}) \log(N) }{N}       \bigg)  \bigg\}   . 
\end{align}
\end{theorem}

The proof of \Cref{lemma:general projection deviation main}  can be found in \Cref{section:proof of sketching}.  
It follows that if $\m\asymp N^{1/(2\alpha+d_1)}$, $\l =C_L ( \sigma_\R ^*) ^{   -1/\alpha} $ for a  sufficiently large constant $C_L$,  and sample size 
 $N\ge C_{\sigma}\max\{ ( \sigma_\R ^*)   ^{-2 -d_1/\alpha }  ,  ( \sigma_\R ^*)  ^{-2 -d_2/\alpha }  \}   $ 
for  a sufficiently large constant $C_\sigma$,   then   \Cref{lemma:general projection deviation main} implies that   up to a logarithm factor,
 \begin{align}\label{eq:range estimation error bound 2} \| \widehat {\mathcal P} _x - \mathcal P^*_x \|_{\lt(\Omega_1 \times \Omega_1) } ^2 = \OP   \bigg\{ \frac{ \R }{ N^{ 2\alpha /(2\alpha+d_1 ) } ( \sigma_{\R}^*) ^{ 2}  } +   \frac{ 
 \R }{N (\sigma_{\R}^*)^{ 2 +d_2/\alpha } }  \bigg\}  . 
 \end{align}
In \Cref{section:examples of low rank}, we demonstrate  that both additive  functions and the mean-field functions    are       finite-rank functions    with  minimal singular values being strictly positive.   In these settings,   $\sigma_{\R}^*$ and $\R$ are both absolute constants, and      up to a logarithm factor, \eqref{eq:range estimation error bound 2} further reduces to,
  $$\| \widehat {\mathcal P} _x - \mathcal P^*_x \|_{\lt(\Omega_1 \times \Omega_1) }^2 = \OP   \bigg\{ \frac{1 }{ N^{ 2\alpha /(2\alpha+d_1 )} }  \bigg\}  , $$
  which matches the  optimal nonparametric density estimation rate in   $ d_1$ dimensions.  This indicates  that our sketching method is able to estimate  $\range_x ( A^*)  $ without the curse of dimensionality introduced by  the variable $y\in \mathbb R^{d_2}$. Utilizing  the sketching estimators,  we can further estimate the unknown function $A^* $ with a reduced curse of 
 dimensionality as detailed in \Cref{section:tensor estimation}.

\section{Density estimation by sketching}
\label{section:tensor estimation}

  In this section, we propose a new   multi-dimensional density   estimator by utilizing the range estimator outlined in \Cref{algorithm: range estimation}.  Let  $\mathcal O \subset \mathbb R $ be a  measurable  set   and  $A^*(z_1, \ldots, z_d) : {\mathcal O}^d\to \mathbb R $ be the unknown population function.  In   density estimation, it is sufficient to assume 
 ${\mathcal O}=[0,1]$.   Indeed, if  $A^*$  has   compact support, through necessary scaling,   we can assume the support is a subset of $\mathcal O^d =[0,1]^{d} $.

     In \Cref{algorithm:analysis version of general svd version 1}, we formally  summarize  our tensor-based estimator of $A^*$.
The selection of tuning parameters   in \Cref{algorithm:analysis version of general svd version 1} will be discussed  in \Cref{subsection:tuning parameters selection}.

 \begin{algorithm}[h!]
\begin{algorithmic} 
	\INPUT Data $\{\BZ_i\}_{i=1}^N$, parameters $\{ {\R}_j \}_{j=1}^d\subset  \mathbb Z^+$, estimation subspaces $\{ \mathcal M_j\}_{j=1}^d$ as in \eqref{eq:projection basis for tensor estimation 1 main} and sketching subspaces $ \{ \mathcal S_j\} _{j=1}^d $ as in \eqref{eq:projection basis for tensor estimation 2 main}. 
 \State Set empirical measures  $\widehat   A  =\frac{1}{N/2 }\sum_{i=1}^{N/2} \delta_{\BZ_i}   $, and   $\widehat   A'  =\frac{1}{N/2 }\sum_{i=N/2 +1}^{N } \delta_{\BZ_i}   $ 	 
 	 \State [{\bf  Subspaces estimation via Sketching}]
 	\For {$ j  \in \{ 1,  \ldots,   d  \}$} 
    \State Set   $y=(z_1,\ldots, z_{j-1}, z_{j+1},\ldots, z_d)$.
 
     \State  Compute 
       the leading $\R_j$ singular functions in the variable $z_j$ of  the function   $$  \bigg(    \widehat A\times_{ j} \mathcal P_{\mathcal M_j}\times _y \mathcal P_{\mathcal S_j} \bigg)  (z_j, y)  ,$$  
       \State and  denote them as $\{\widehat \Phi_{\ri }(z_j)\}_{\ri=1}^{\R_j}$.
\State   Set the projection function  $\mathcal P^{\Phi}_j (z_j, z_j')  = \sum_{\ri =1}^{\R_j }\widehat \Phi_{\ri }(z_j) \widehat \Phi_{\ri }(z_j')  $.
    
 	\EndFor 
  \vskip 0.5cm
  \State [{\bf  Sketching using  estimated subspaces}]
  
    \For {$ j  \in \{ 1,  \ldots,   d  \}$}  
 \State Set    
  $$ \widehat B_j = \widehat A\times_{ j}\mathcal P_{\mathcal M_j}\times _1   \mathcal P_1^{\Phi} \times \cdots \times_{j-1}  \mathcal P_{j-1} ^{\Phi}\times_{j+1}  \mathcal P_{j+1} ^{\Phi} \cdots \times_d  \mathcal P_{d} ^{\Phi} . $$
 
\State  Set  $y=(z_1,\ldots, z_{j-1}, z_{j+1},\ldots, z_d)$.  
\State Compute 
       the leading $\R_j$ singular functions in the variable 
         $z_j$ of     the function $    \widehat B_j(z_j, y), $   and \State denote them as $\{\widehat \Psi_{\ri }(z_j)\}_{\ri =1}^{\R_j}$.
  \State  Set the projection function  $\mathcal P^{\Psi}_j (z_j , z_j')  = \sum_{\ri =1}^{\R_j }\widehat \Psi_{\ri }(z_j) \widehat \Psi_{\ri }(z_j')  $.
   \EndFor
     \State  Compute the estimated   density function 
    $$   \widetilde  A (z_1,\ldots,z_d)
    =  \bigg(  \widehat A' \times_ { 1} \mathcal P^{\Psi}_{1}  \times \cdots  \times_ { d}  \mathcal P^{\Psi}_ {d}\bigg) (z_1,\ldots,z_d) .$$  
	\OUTPUT   The  estimated  density function $\widetilde A(z_1,\ldots,z_d).$

 \caption{Multivariable Density  Estimation via Variance-Reduced Sketching}
\label{algorithm:analysis version of general svd version 1}
\end{algorithmic}
\end{algorithm}

 We formally introduce the estimation and the sketching spaces for VRS. Let 
$  \{ \phi_{k}\}_{k=1}^\infty  \subset \lt({\mathcal O}) 
$  be a collection of  one-dimensional   orthonormal basis functions. 
  For $j  \in \{ 1,\ldots,d\}$, let $\m \in \mathbb N  ,  \l_j \in  \mathbb N$     and denote 
 \begin{align} \label{eq:projection basis for tensor estimation 1 main}
 \mathcal M_j &=\text{span} \bigg\{\phi_{\mi }(z_j )   \bigg \}_{{\mi=1} }^{\m } \quad \text{and} \quad 
\\\label{eq:projection basis for tensor estimation 2 main}
\mathcal S_j  &= \text{span}\bigg\{  \phi _{\li_1 }(z_1)\cdots  \phi _{\li _{j-1} }(z_{j-1}) \phi _{\li _{j+1} }(z_{j+1}) \cdots  \phi _{\li_{d}  }(z_{d})  \bigg\}_{\li _1, \ldots,\li _{j-1}, \li_{j+1}, \ldots, \li_{d}=1}^{\l_j}.
\end{align}

\begin{assumption} \label{assume: baised in projected space in operator norm main}
Let $\{ \phi_{\mi}   \}_{\mi=1}^\infty  $ be a collection of one-dimensional  basis functions  in $\lt(\mathcal O)$ generated from either   the reproducing kernel Hilbert spaces or the Legendre polynomials system.
\end{assumption} 
Reproducing kernel Hilbert spaces and Legendre polynomials are widely used numerical techniques in the machine learning literature.  Next, we summarize the regularity condition used to establish the consistency of VRS.

\begin{assumption} \label{assumption:regularity of density main}
    Suppose that  the data $\{ \BZ_i\}_{i=1}^N\subset \mathcal O  ^{d}$ are  independently sampled from the density $A^* : \mathcal O   ^{d} \to \mathbb R ^+ $. Suppose in addition that   $\|A^*\|_{W^\alpha_2 (\mathcal O ^d)} < \infty  $ with $\alpha\ge 1$  and that  $\|A^*\|_\infty <\infty$.  
\end{assumption} 

\Cref{assumption:regularity of density main} is widely used in the density estimation literature.   We  aim at   establishing the consistency of   VRS   even when  the underlying density function   is infinite-rank. This leads us to assume the following singular-gap condition for $A^*$.

\begin{assumption} \label{assume:spectral gap}
    For $j=1,\ldots, d$, denote  
$y_j= (z_1,\ldots, z_{j-1}, z_{j+1}, \ldots, z_d)$, and 
 $  \sigma_{j,\ri }^* = \sigma_{\ri} ( A^*(z_j, y_j ))$  for any  $ \ri \in \mathbb N.$
Here
 $ \sigma_{\ri} ( A^*(z_j, y_j ))  $  is  the $\ri$-th singular value of the two-variable function $A^* (z_j, y)$. 
 Suppose that 
\begin{align}\label{eq:main result snr} 
N/\log(N)   >C_{snr }     
   \big( \sigma _{j,\R_j} ^*   \big) ^{ \frac{-d+1}{\alpha} - 2 }    \quad \text{and} \quad  \sigma^*_{j,\R_j+1}\le \sigma^*_{j,\R_j }/N^{\alpha/(2\alpha+1)},
\end{align} 
where    $C_{snr}$ is a sufficiently large constant independent of $N$.
\end{assumption}
\Cref{assume:spectral gap} postulates that  when  we  reshape $A^*$ into the two-variable function $A^*(z_j, y_j)$, the singular gap between its  $\R_j$-th  and $(\R_j+1)$-th singular values   is sufficiently large. Note that \Cref{assume:spectral gap}  trivially holds for additive functions and mean-field functions. 

Finally, we quantify the oracle approximation error when finite-rank tensors are used to approximate densities that may be infinite-rank. Denote
\begin{align}\label{eq:definition of low rank bias main}
\xiapp =  \inf_{\substack{   \dim(\range_{z_j}(B)) \le \R_j \\ \text{ for all } j \in \{ 1,\ldots,d \} }}   \| A^* - B \|_{\lt(\mathcal O^d)}  .
\end{align}  
In other words, \(\xiapp\) represents   the best possible error one could achieve  using any  rank-\((r_1,\ldots, r_d)\) tensor approximation of \(A^*\) in \(\lt(\mathcal O^d)\). In     the following result,   we show that  the  density estimator in \Cref{algorithm:analysis version of general svd version 1} is consistent.

	\begin{theorem} \label{theorem:density bound main} 
  Suppose      \Cref{assume: baised in projected space in operator norm main},  \Cref{assumption:regularity of density main} and \Cref{assume:spectral gap} hold. 
In \Cref{algorithm:analysis version of general svd version 1}, suppose that the inputs satisfy  $\l_j=C_L (\sigma_{j ,{\R}_j}^* )^{-1/\alpha } $ for some sufficiently large constant $C_L$ and  $ \m \asymp  N^{1/(2\alpha+1)}$, and the output is $\widetilde A$.
 Then  it holds that 
\begin{align}\label{eq:density l2 bounds}   \|  \widetilde A  - A  ^* \|  _{\lt(\mathcal O  ^d)}   = \OP\bigg(              \frac{ \sqrt{ \prod_{j  =1}^{d} {{\R}}_j }}{N^{ \alpha/(2\alpha+ 1)} }  + \xiapp\bigg)  .    
\end{align}

\end{theorem}   
	 
The proof of \Cref{theorem:density bound main} can be found in \Cref{section:proof of main result}. In \Cref{section:examples of low rank}, we show that functions from many well-known  models such as the additive model  and the mean-field model  are      low-rank tensors    with  bounded ranks   $(r_1,\ldots, r_d)$ and  approximation error $\xiapp=0$. In these settings,     \eqref{eq:density l2 bounds} reduces to 
  $$\|  \widetilde A - A  ^* \|^2_{\lt({\mathcal O}^d)}    =\OP \bigg( \frac{1}{N ^{2\alpha/(2\alpha+1 ) }} \bigg) ,$$
which matches the minimax optimal rate of estimating  density functions in one dimension.

 \section{Simulations and real data examples}
 \label{sec: numerical experiments}

In this section, we compare  the numerical performance of the proposed estimator  VRS   with  classical kernel methods and neural network estimators through  various density   models.  
  The implementation of VRS can be found at \url{https://github.com/darenwang/variance-reduced-tensor-sketching/tree/main}. 

\subsection{Implementations} 
\label{subsection:tuning parameters selection}
 As detailed in \Cref{algorithm:analysis version of general svd version 1},  our approach involves  three groups of tuning parameters: $\m$, $\{\l_j\}_{j=1}^d$, and $ \{\R _j\}_{j=1}^d$.  In all our numerical experiments, the optimal choices for $\m$ and $\{\l_j\}_{j=1}^d$ are determined  through cross-validation.    To select $\{\R_j\}_{j=1}^d$, we apply  a popular method in low-rank matrix estimation known as adaptive thresholding. Specifically, for each $j = 1, \ldots, d$, we compute $\{ \widehat \sigma_{j,k} \}_{k=1}^\infty$, the set of singular values of $\widehat A\times_{j }\mathcal{P}_{\mathcal M_j}\times_{y} \mathcal{P}_{\mathcal S_j}$. Let $ \tau$ be a prespecified parameter. We set $r_j =k -1$   if     $k$ is first index such that 
$$ \widehat \sigma_{j, k}^2  < \tau \sum_{\mu=1}^{k-1}\widehat \sigma_{j, \mu}^2.  $$
This ensures that we can adaptively remove nonzero singular values that are associated  with   the noise of the data.  
In our simulations, we set $\tau =1/50$.
Adaptive thresholding is a very popular strategy in the matrix completion literature (\cite{candes2010matrix}) and it has  been proven to be empirically robust in many scientific and engineering applications.  We use   built-in functions provided by the popular Python package scikit-learn to train kernel estimators, and scikit-learn also utilizes cross-validation for tuning parameter selection.  For neural networks, we use PyTorch to train various models and  make predictions. The implementations of our numerical studies  can be found  at this \href{https://github.com/IvanPeng0414/Nonparametric-estimation-via-variance-reduced-sketching}{link}.
\
\\

We study the numerical performance of   Variance-Reduced Sketching (VRS), kernel density estimators (KDE), and neural networks (NN)  in  various density estimation problems.    For neural network estimators, we use two popular density estimation  architectures: Masked Autoregressive Flow (MAF) (\cite{papamakarios2017masked}) and Neural Autoregressive Flows (NAF) (\cite{huang2018neural}) for comparisons. The details of implementing neural network density estimators  are  provided  in \Cref{sec:numerical appendix}. 
We measure the estimation accuracy by the relative $\lt$-error defined as 
$$ \frac{\|A^*-\widetilde  {A}\|_{\lt (\Omega) }}{\|A^*\|_{\lt(\Omega)}},$$ 
where $\widetilde   {A}$ is the density estimator produced by a given  estimator. We also compute the standard Kullback–Leibler (KL) divergence to measure the distance between two probability density functions: $$D_{KL}=\mathbb{E}_{A^*}[\log ({A^*} /\tilde{A})].$$ 
\\
$\bullet$ {\bf  Simulation $\mathbf{I}$: Four-mode  Gaussian mixture model.} We consider  a  four-mode  Gaussian mixture model in two dimensions. We  generate 20,000 data from the density
\begin{align*}
    A^*(\bvec{x}) = \sum_{i=1}^4 \frac{1}{4}\frac{\exp\left(-\frac{1}{2}(\bvec{x}-\bvec{\mu}_i)^T\bvec{\Sigma}_i^{-1}(\bvec{x}-\bvec{\mu}_i)\right)}{\sqrt{(2\pi)^2|\bvec{\Sigma}_i|}},
\end{align*}
where $\bvec{\mu}_1=\begin{pmatrix}-0.5 \\-0.5 \end{pmatrix},
\bvec{\mu}_2=\begin{pmatrix}0.5 \\0.5 \end{pmatrix},
\bvec{\mu}_3=\begin{pmatrix}-0.5 \\0.5 \end{pmatrix},
\bvec{\mu}_4=\begin{pmatrix}0.5 \\-0.5 \end{pmatrix},
$ and
$\bvec{\Sigma}_1=\bvec{\Sigma}_2=\begin{pmatrix}
0.25^2 & 0.03^2  \\
0.03^2 & 0.25^2 \\
\end{pmatrix}$,
$\bvec{\Sigma}_3 = \bvec{\Sigma}_4 = \begin{pmatrix}
0.1^2 & -0.05^2  \\
-0.05^2 & 0.1^2 \\
\end{pmatrix}
$. This setting is more difficult than {\bf  Simulation $\mathbf{I}$} due to more modes and stronger singularity in the true density.  We report  the relative $\lt$ error and KL divergence   for each method in \Cref{table:four modes Gaussian}.
\begin{table}[H]
    \centering
    \begin{tabular}{|c|c|c|c|c|}
    \hline
         & VRS & KDE & MAF & NAF \\
         \hline
      Relative $\lt$ Error & 0.0721(0.0029) & 0.3987(0.0039) & 0.2441(0.0411) & 0.4617(0.0621) \\
         \hline 
        KL Divergence & 0.0142(0.0015) & 0.1223(0.0014) & 0.0819(0.0161) & 0.2356(0.0417) \\
         \hline
    \end{tabular}
    \caption{\label{table:four modes Gaussian}Relative $\lt$ errors and KL divergences for four different methods of the two-dimensional Gaussian mixture model in {\bf Simulation IV}. The experiments are repeated  for 50 times. The average errors are reported and standard deviations are shown in the bracket.}
\end{table}
 To further visualize  the performance of the four different methods, the true density and the estimated density from each method are plotted in Figure~\ref{fig:2dgaussian-hard}. Direct comparison  in 
Figure~\ref{fig:2dgaussian-hard} demonstrates  VRS  provides a relatively better estimate for the true density.
\begin{figure}[H]
    \centering
    \includegraphics[width=1.0\linewidth]{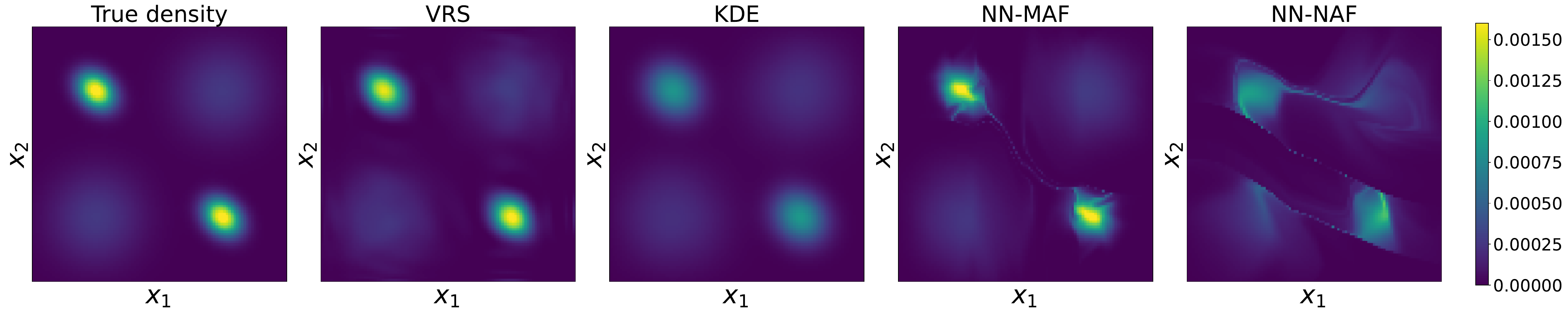}
    \caption{Density functions from the  two-dimensional Gaussian mixture model in {\bf  Simulation $\mathbf{IV}$}. From left to right are  the ground truth density, estimates from  VRS, KDE, MAF, and NAF, respectively. The values in the colorbar on the right represent   function values.}
    \label{fig:2dgaussian-hard}
\end{figure}

\
\\
$\bullet$ {\bf  Simulation $\mathbf{II}$: Two-dimensional Gaussian mixture model.}   In the first simulation, we use a  two-dimensional two-mode  Gaussian mixture model  to numerically compare    VRS, KDE and neural network estimators.  We generate 1,000 samples from the  following density function
\begin{align*}
    A^*(\bvec{x}) = 0.4\frac{\exp\left(-\frac{1}{2}(\bvec{x}-\bvec{\mu}_1)^T\bvec{\Sigma}_1^{-1}(\bvec{x}-\bvec{\mu}_1)\right)}{\sqrt{(2\pi)^2|\bvec{\Sigma}_1|}} + 0.6\frac{\exp\left(-\frac{1}{2}(\bvec{x}-\bvec{\mu}_2)^T\bvec{\Sigma}_2^{-1}(\bvec{x}-\bvec{\mu}_2)\right)}{\sqrt{(2\pi)^2|\bvec{\Sigma}_2|}},
\end{align*}
where $\bvec{\mu}_1=\begin{pmatrix}-0.35 \\-0.35 \end{pmatrix},
\bvec{\Sigma}_1=\begin{pmatrix}
0.25^2 & -0.03^2  \\
-0.03^2 & 0.25^2 \\
\end{pmatrix},
\bvec{\mu}_2 = \begin{pmatrix}0.35 \\0.35 \end{pmatrix},
\bvec{\Sigma}_2 = \begin{pmatrix}
0.35^2 & 0.1^2  \\
0.1^2 & 0.35^2 \\
\end{pmatrix}.$
   The relative $\lt$ error and KL divergence for each method are reported  in \Cref{table:two-mode Gaussian}.  As demonstrated in this example, VRS  achieves decent accuracy in the classical low-dimensional setting.

\begin{table}[H]
    \centering
    \begin{tabular}{|c|c|c|c|c|}
    \hline
         & VRS & KDE & NN-MAF & NN-NAF\\
         \hline
      relative $\lt$ error   & 0.1270(0.0054) & 0.1636(0.0068) & 0.2225(0.0135)& 0.3265(0.0103)\\
         \hline 
        KL divergence & 0.0092(0.0033) & 0.0488(0.0029) & 0.0785(0.0134) & 0.0983(0.0098)\\
         \hline
    \end{tabular}
    
    \caption{\label{table:two-mode Gaussian}Relative $\lt$ errors and KL divergences for four  different methods of the two-dimensional  Gaussian mixture model in {\bf  Simulation $\mathbf{I}$}.  The experiments are repeated  for 50 times. The average errors are reported and standard deviations are shown in the bracket.}
\end{table}
%Furthermore, we discretize the domain $[-1,1]^2$ with $100\times 100$ grid points and evaluate the estimated density function on those grid points. 
To further visualize  the performance of the four different methods, the true density and the estimated density from each method are plotted in Figure~\ref{fig:2dgaussian-simple}. Direct comparison  in 
Figure~\ref{fig:2dgaussian-simple} demonstrates that  VRS  provides a relatively better estimator for the true density.
\begin{figure}[H]
    \centering
    \includegraphics[width=1.0\linewidth]{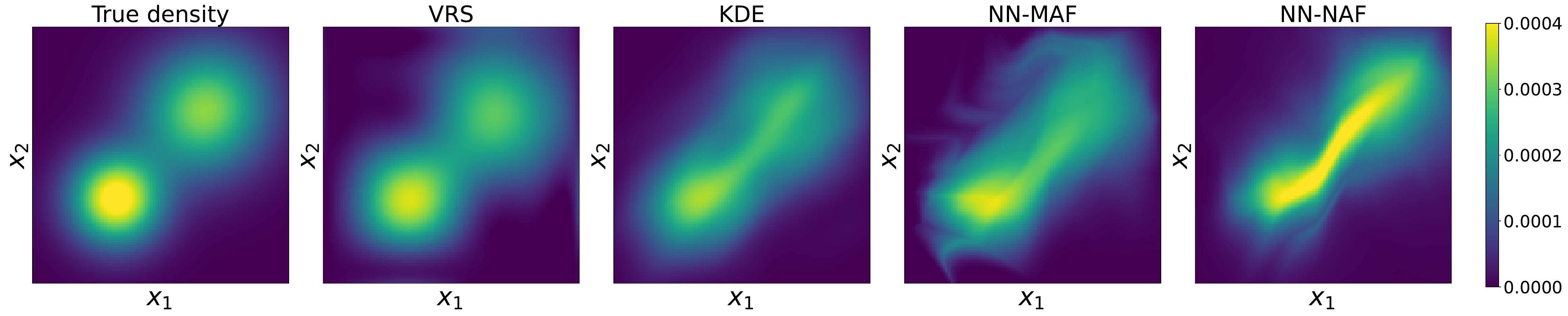}
    \caption{Density functions from the  two-dimensional Gaussian mixture model in {\bf  Simulation $\mathbf{I}$}. From left to right: the ground truth density, and estimations from VRS, KDE, MAF, and NAF. The values in the colorbar on the right represent function values.}
    \label{fig:2dgaussian-simple}
\end{figure}
\
\\
$\bullet$ {\bf  Simulation $\mathbf{III}$: Gaussian mixture in 30 dimensions.} We consider  a higher dimensional density model  in this simulation. Specifically, we generate $10^5$ data points  from the 30-dimensional Gaussian mixture density function
\begin{align*}  
\begin{pmatrix}  x_1 \\x_2 \\x_3\end{pmatrix}
&\overset{\text{ i.i.d. }}{\sim} \frac{1}{2} \mathcal{N}
\left(
\begin{pmatrix}-0.5 \\-0.5 \\-0.5\end{pmatrix},
\begin{pmatrix}
0.1^2 & 0.06^2 & 0 \\
0.06^2 & 0.1^2 & 0 \\
0 & 0 & 0.1^2
\end{pmatrix}
\right)
+ \frac{1}{2} \mathcal{N}
\left(
\begin{pmatrix}0.5 \\0.5 \\0.5\end{pmatrix},
0.1^2 I_{3 \times 3}
\right),
\\
  x_4, x_5 &\overset{\text{ i.i.d. }}{\sim} \mathcal{N}(0, 0.04),
\\
  x_6, \dots, x_{30} &\overset{\text{ i.i.d. }}{\sim} \frac{1}{2} \mathcal{N}(-0.4, 0.3^2) + \frac{1}{2} \mathcal{N}(0.4, 0.3^2).
\end{align*}
  The experiments are repeated for 50 times. Since computing high-dimensional $\lt$ errors is {\it NP-hard}, we only report the averaged KL divergence for performance evaluation.  Table~\ref{table:30dgaussian} showcases that VRS outperforms kernel  and neural network estimators with a remarkable margin.
\begin{table}[H]
    \centering
    \begin{tabular}{|c|c|c|c|c|}
    \hline
         & VRS & KDE & MAF & NAF\\
         \hline
        KL divergence & 0.0195(0.0056) & 4.3823(0.0047) & 0.9260(0.0523) & 0.1613(0.0823)\\
         \hline
    \end{tabular}
    \caption{KL divergence for four methods of the 30-dimensional density model in {\bf  Simulation $\mathbf{II}$}. The experiments are repeated  for 50 times. The average errors are reported and standard deviations are shown in the bracket. }
    \label{table:30dgaussian}
\end{table}
To further visualize  the performance of the four different methods, 
 we provide visualization of  a few estimated marginal densities and compare them with the ground truth marginal density.
Figure~\ref{fig:30dgaussian} depicts the comparison of the two-dimensional   marginal densities corresponding to   $(x_1,x_2)$, $(x_4,x_8)$, and $(x_{10},x_{20})$, respectively.  
Direct comparison  in 
Figure~\ref{fig:30dgaussian} demonstrates  VRS  provides a relatively better fit for the true density.  

\begin{figure}[h!]
    \centering
    \includegraphics[width=1.0\linewidth]{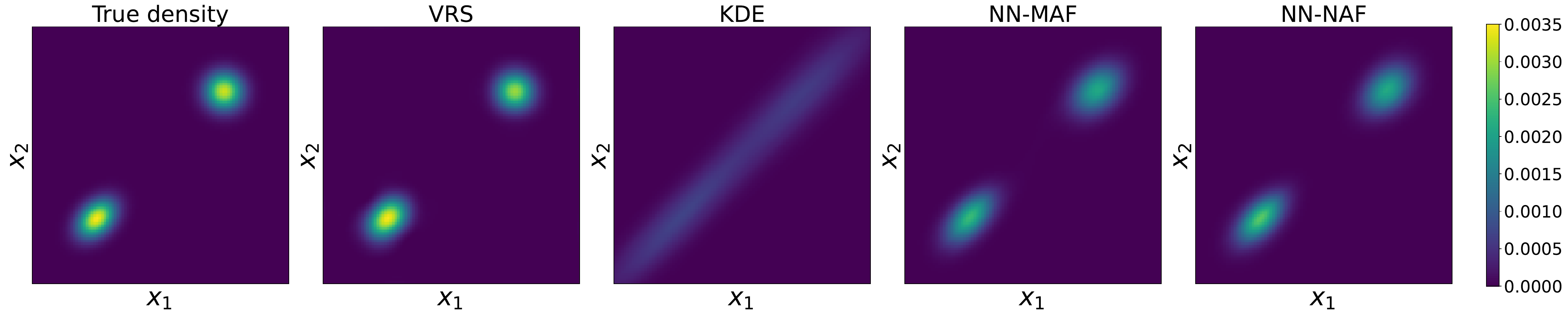}
    \includegraphics[width=1.0\linewidth]{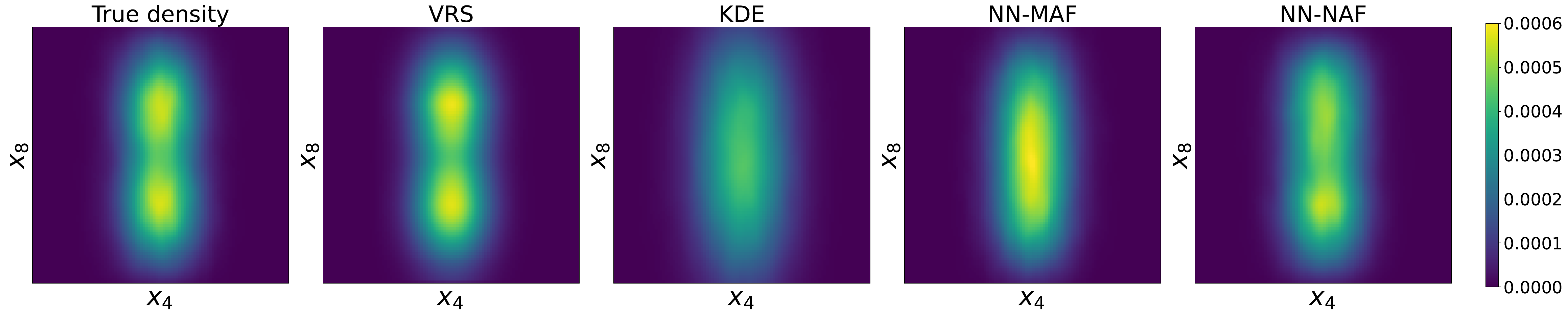}
    \includegraphics[width=1.0\linewidth]{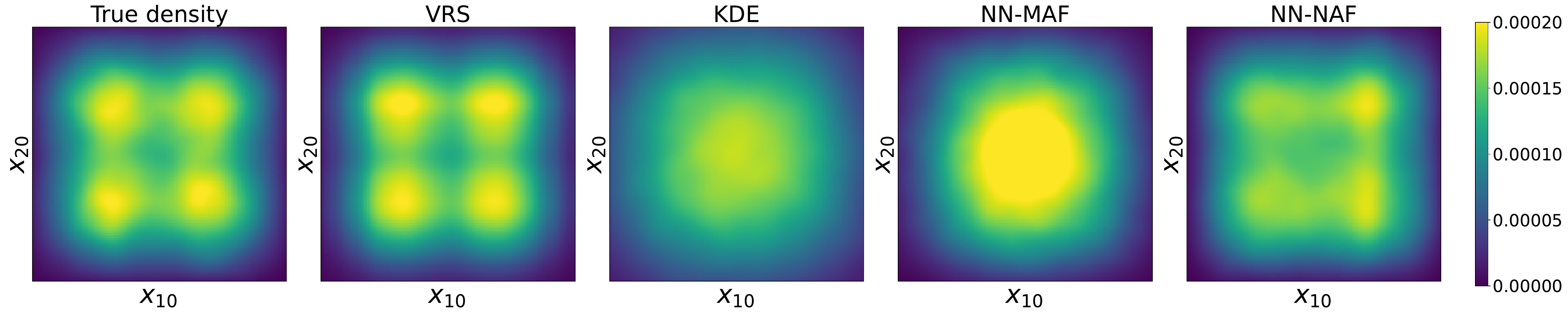} 
    \caption{Marginal densities from the 30-dimensional Gaussian mixture model in {\bf Simulation II}. From left to right: the ground truth density, and estimations from VRS, KDE, MAF, and NAF. From top to bottom: two-dimensional marginal densities corresponding to $(x_1, x_2)$, $(x_4, x_8)$, and $(x_{10}, x_{20})$. The values in the colorbar on the right represent    function values.}
    \label{fig:30dgaussian}
\end{figure}
\    
\\
$\bullet$ {\bf  Simulation $\mathbf{IV}$: The Ginzburg-Landau model.} Ginzburg-Landau theory is widely used to model the microscopic behavior of superconductors. The Ginzburg-Landau density has the following expression 
\begin{align} \label{eq:GL density} 
A^*(x_1,\ldots,x_d)\propto \exp\left(-\beta \bigg\{\sum_{j=0}^d \frac{\lambda}{2}(\frac{x_j-x_{j+1}}{h})^2
+\sum_{j=1}^d\frac{1}{4\lambda}(x_j^2-1)^2\bigg\}\right)  , 
\end{align}
where  $x_0=x_{d+1}=0$. We sample  data from the Ginzburg-Landau  density 
with coefficient $\beta=1/8, \lambda=0.02, h=1/(d+1)$ using the Metropolis-Hastings sampling algorithm.  This type of density   concentrates on two centers $(+1,+1,\cdots,+1)$ and $(-1,-1,\cdots,-1)$, and all the coordinates $(x_1, \ldots, x_d)$ are   correlated  in a non-trivial way  due to the interaction term $\exp \big(-\beta\sum_{j=0}^d \frac{\lambda}{2}(\frac{x_j-x_{j+1}}{h})^2\big)$ in the density function.   We consider two sets of experiments for the Ginzburg-Landau density model. In the first set of experiments,  we fix $d=10$ and change the sample size  $N$ from $1\times 10^5$ to $5\times 10^5$. In the second set of experiments, we keep the sample size $N$ at $1\times 10^5$ and vary $d$ from $2$ to $10$. We summarize the averaged relative $\lt$-error  for each method in \Cref{fig:DE_GL}. 
Furthermore in Figure~\ref{fig:10dGL-visualize}, we visualize several two-dimensional marginal densities estimated by the  four different methods with sample size $1\times 10^5$. 
Direct comparison showcases that our VRS method  recovers these marginal densities with   decent accuracy.\par 

\begin{figure}[H]
    \centering
    \includegraphics[width=0.48\linewidth]{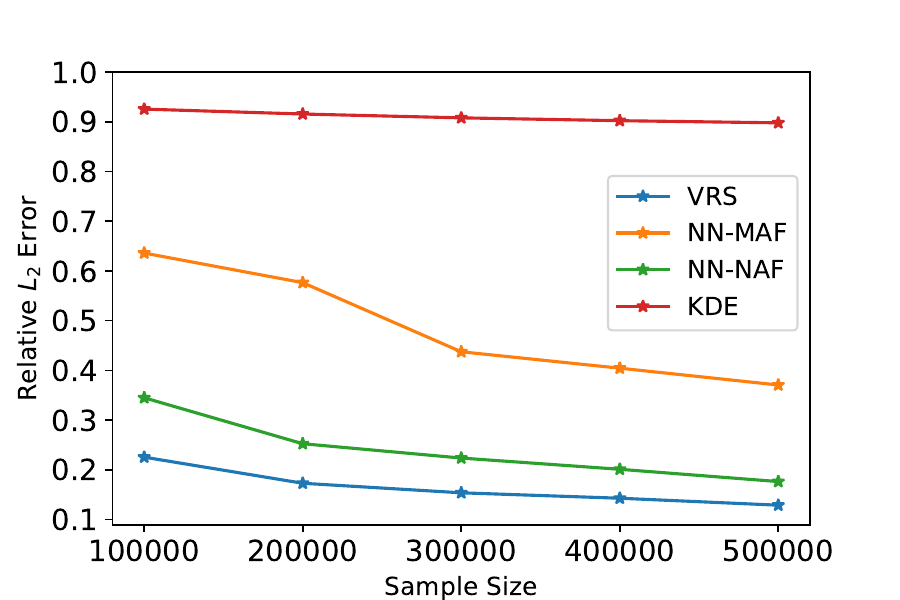}
    \includegraphics[width=0.48\linewidth]{fig/GL_L2_diffd.pdf}
    \caption{The plot on the left corresponds to {\bf Simulation $\mathbf{III}$} with $d=10$ and  $N$ varying from  $1\times 10^5$ to $5\times 10^5$ ; the plot on the right  corresponds to {\bf Simulation $\mathbf{III}$} with  $N$ being $1\times 10^5$ and $d$ varying from $2$ to $10$.   }
    \label{fig:DE_GL}
\end{figure} 

\begin{figure}%[H]
    \centering
    \includegraphics[width=1.0\linewidth]{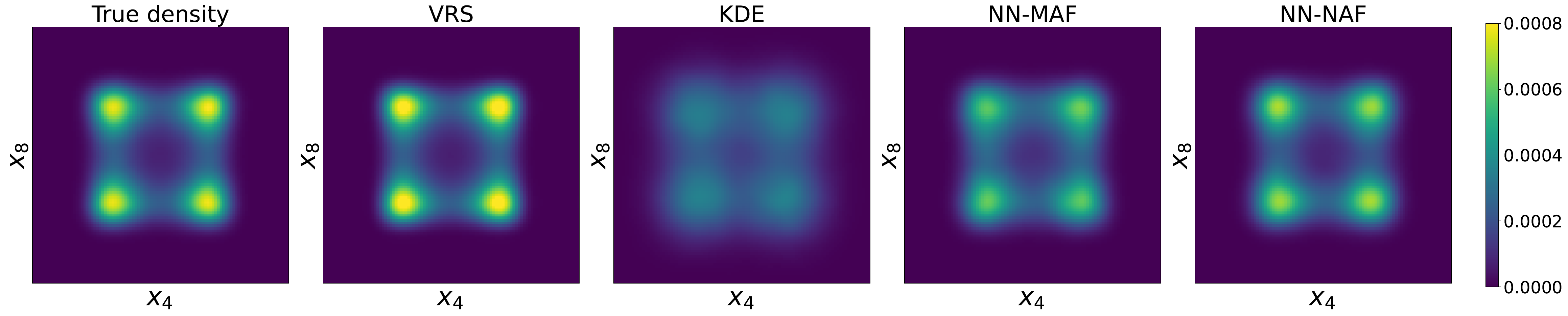}
    \includegraphics[width=1.0\linewidth]{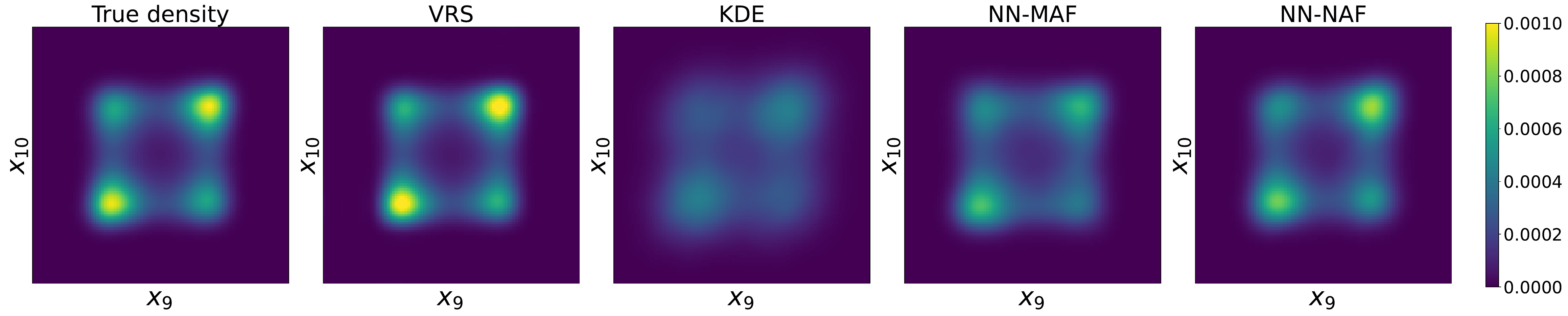}    
    \caption{Marginal densities from the 10-dimensional Ginzburg-Landau model in {\bf Simulation III}. and estimations from VRS, KDE, MAF, and NAF. From top to bottom: two-dimensional marginal densities corresponding to $(x_4, x_8)$ and $(x_9, x_{10})$. The values in the colorbar on the right represent the density function values.}
    \label{fig:10dGL-visualize}
\end{figure}
 
\  
\\
$\bullet$ {\bf  Real data example.} We analyze  the density estimation  for the  \href{https://archive.ics.uci.edu/dataset/186/wine+quality}{Portugal wine quality} dataset from the UCI Machine Learning Repository. This dataset contains $6497$ samples of red and white wines, along with $8$ continuous variables: volatile acidity, citric acid, residual sugar, chlorides, free sulfur dioxide, density, sulphates, and alcohol. To provide a comprehensive comparison between different methods, we estimate the joint density of the first  $d$ variables in this dataset, allowing 
$d$ to vary from 2 to 8. For instance, $d=2$ corresponds to the joint density of volatile acidity and citric acid. Since the true density is unknown, we randomly split the dataset into a 90\% training set and a 10\% test set, and   evaluate the performance of various approaches using the averaged log-likelihood of the test  data.  The averaged log-likelihood is defined as follows: let
$\widetilde p$ be the density estimator based on the training data. The averaged log-likelihood of the test  data $\{ Z_i \}_{i=1}^{ N _{\text{test}} } $ is 
$$ \frac{1}{N _{\text{test}} }\sum_{i=1}^{N_{\text{test}}  } \log\{  \widetilde{p}(Z_i )     \} . $$
The numerical performance of VRS, NN, and KDE   is summarized in \Cref{fig:DE_wine}.   Notably, VRS achieves the highest  averaged log-likelihood values, indicating  its superior numerical performance.

\begin{figure}
    \centering
    \includegraphics[width=0.5\linewidth]{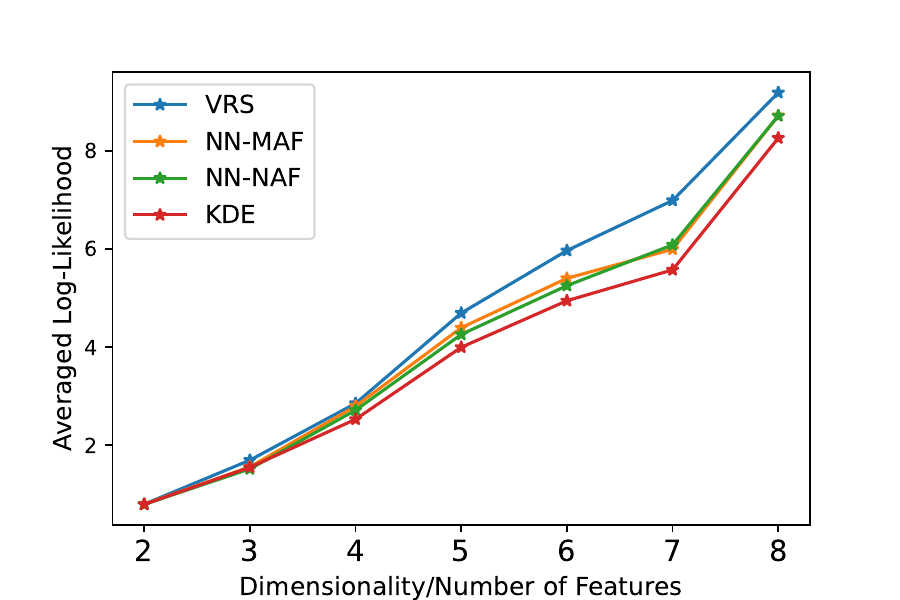}
    \caption{Density estimation for the Portugal wine quality dataset with VRS, KDE, and neural network estimators.  }
    \label{fig:DE_wine}
\end{figure}

\section{Examples of exactly  and approximately low-rank functions}
\label{section:examples of low rank}

In this section, we present three examples commonly encountered in nonparametric statistical literature that are approximately low rank.

\begin{example}[Additive models in regression] \label{example:additive}
In multivariate  nonparametric statistical literature, it is commonly assumed that the underlying unknown    nonparametric function 
  $f^*:[0,1]^d \to \mathbb R$   process an additive structure, meaning    that there exists  a collection of univariate functions $\{f_j^*(z_j):[0,1]\to \mathbb R \}_{j=1}^d$  such that 
  $$f^*(z_1,\ldots z_d) = f_1^*(z_1) + \cdots+ f_d^*(z_d)  \text{ for all } (z_1, \ldots, z_d) \in [0,1]^d$$
Let  $ y=(z_2,\ldots, z_d),$ we show that the rank of  $f^*(z_1, y)$ is at most 2.  By \eqref{eq:definition of range}, $\range_1(f^*) =\Span\{1, f^*_1(z_1)\}$.
The dimensionality of $ \range_{1}(f^*)$ is at most $2$, and  consequently  the rank  of $f^*(z_1, y) \in \lt([0,1])\otimes \lt([0,1]^{d-1})    $ is at most $2$. 
%\YP{do we need to emphasis that single function $f^*_i(z_i)$ could be well-approximated with given basis?}

\end{example}

\begin{example}[Mean-field models in density estimation]\label{example:mean field}
Mean-field theory is a popular framework   in computational physics and Bayesian probability   as it  studies the behavior of high-dimensional  stochastic  models.   
The main idea of the mean-field theory is to replace all interactions to any one body with an  effective interaction  in a  physical system. Specifically, the mean-field model assumes that 
the   density function $p^*(z_1,\ldots, z_d):[0,1]^d \to \mathbb R^+ $   can be well-approximated by 
$    p_1^*(z_1) \cdots p_d^*(z_d)   $,  where for $j=1,\ldots, d$, $p^*_j(z_j): [0,1]\to \mathbb R^+$ are univariate marginal density functions. The  readers are referred  to  \cite{blei2017variational} for further discussion. For simplicity, suppose 
  $$p ^*(z_1,\ldots, z_d)   = p_{  1}  ^*(z_1) \cdots p_{ d}  ^*(z_d)  .$$  Then   according to \eqref{eq:range definition}, $\range_{ 1}(p^*) =\Span\{ p_{ 1}^*(z_1) \}   $.
The dimensionality of $ \range_{ 1}(p^*)$ is at most $\alpha$, and therefore   the rank  of $p^*    $ in the variable $z_1$ is   1. 

\end{example}

\begin{example}[Multivariate Taylor expansion]\label{example:taylor}
Suppose  $ G :[0,1]^d \to \mathbb R    $ is an  $\alpha$-times continuously   differentiable function. Then    Taylor's theorem in the multivariate setting states that  for $ z=(z_1,\ldots, z_d) \in \mathbb R^d$ and $t=(t_1,\ldots, t_d) \in \mathbb R^d$, $G(z) \approx  T_t(z) $, where 
\begin{align} \label{eq:taylor multivariate}T_t (z) =  G(t) + \sum_{k=1}^\alpha \frac{1}{k! } \mathcal D^k G (t, z-t),
\end{align} 
and 
% $
% \mathcal D^\beta  G(x, h)= \sum_{\beta_1+ \ldots+\beta_D=|\beta|}    \partial^{\beta_1}_{1} \cdots \partial^{\beta_D}_{D}  G (x)   h^{\beta_1}_{1}\cdots h^{\beta_D}_{D}.
% $
$
\mathcal{D}^k  G(x,h)= \sum_{i_1,\cdots,i_k=1}^d    \partial_{i_1} \cdots \partial_{i_k}  G (x)   h_{i_1}\cdots h_{i_k}.
$
For example,	
$\mathcal D G(x, h)= \sum_{i=1}^d    \partial_i   G  (x)   h_i$,   $
\mathcal D^2 G (x, h)= \sum_{i=1}^d \sum_{j=1}^d  \partial_i  \partial_j  G (x)   h_ih_j,$ and so on. To simplify our discussion, let $t=0 \in \mathbb R^d$. Then \eqref{eq:taylor multivariate} becomes 
 $$T_0 (z) = G (0) + \sum_{i=1}^d    \partial_i    G  (0)  z_i +  \frac{1}{2!}\sum_{i=1}^d \sum_{j=1}^d  \partial_i \partial_j  G  (0)  z_iz_j + \ldots +
\frac{1}{\alpha!}\sum_{i_1, \ldots, i_\alpha=1}^d    \partial_{i_1} \cdots \partial_{i_\alpha}  G (0)   z_{i_1}\cdots z_{i_\alpha} . $$
Let 
  $  y=(z_2,\ldots, z_d).$   Then    by \eqref{eq:range definition},    $\range_1(T_0 ) =\Span\{ 1, z_1, z_1^2,\ldots, z_1^{\alpha }  \} $.
The dimensionality of $ \range_1(T_0)$ is at most $\alpha+1$, and therefore $  G   $  can be well-approximated by  finite rank functions.     
\end{example}

\section{Conclusion}\label{sec:conclusion}

In this paper, we develop a comprehensive framework  Variance-Reduced Sketching (VRS)  for nonparametric density estimation problems  in higher dimensions. Our approach leverages the concept of sketching from numerical linear algebra to address the curse of dimensionality in function spaces. Our method treats  multivariable functions as infinite-dimensional matrices or tensors and the selection of sketching is specifically tailored to the regularity of the estimated function. This design takes the variance of random samples in nonparametric problems into consideration, intended to reduce the curse of dimensionality in density estimation problems. Extensive simulated  experiments and real data examples  demonstrate that our sketching-based method substantially outperforms both neural network estimators and classical kernel density methods in terms of numerical performance.

  \bibliographystyle{plainnat}
\bibliography{citations}  

\begin{thebibliography}{66}
\providecommand{\natexlab}[1]{#1}
\providecommand{\url}[1]{\texttt{#1}}
\expandafter\ifx\csname urlstyle\endcsname\relax
  \providecommand{\doi}[1]{doi: #1}\else
  \providecommand{\doi}{doi: \begingroup \urlstyle{rm}\Url}\fi

\bibitem[Al~Daas et~al.(2023)Al~Daas, Ballard, Cazeaux, Hallman, Mi{{e}}dlar, Pasha, Reid, and Saibaba]{al2023randomized}
Hussam Al~Daas, Grey Ballard, Paul Cazeaux, Eric Hallman, Agnieszka Mi{{e}}dlar, Mirjeta Pasha, Tim~W Reid, and Arvind~K Saibaba.
\newblock Randomized algorithms for rounding in the tensor-train format.
\newblock \emph{SIAM Journal on Scientific Computing}, 45\penalty0 (1):\penalty0 A74--A95, 2023.

\bibitem[Alaoui and Mahoney(2015)]{alaoui2015fast}
Ahmed Alaoui and Michael~W Mahoney.
\newblock Fast randomized kernel ridge regression with statistical guarantees.
\newblock \emph{Advances in neural information processing systems}, 28, 2015.

\bibitem[Alquicira-Hernandez and Powell(2021)]{alquicira2021nebulosa}
Jose Alquicira-Hernandez and Joseph~E Powell.
\newblock Nebulosa recovers single-cell gene expression signals by kernel density estimation.
\newblock \emph{Bioinformatics}, 37\penalty0 (16):\penalty0 2485--2487, 2021.

\bibitem[Blei et~al.(2017)Blei, Kucukelbir, and McAuliffe]{blei2017variational}
David~M Blei, Alp Kucukelbir, and Jon~D McAuliffe.
\newblock Variational inference: A review for statisticians.
\newblock \emph{Journal of the American statistical Association}, 112\penalty0 (518):\penalty0 859--877, 2017.

\bibitem[Brezis(2011)]{brezis2011functional}
H~Brezis.
\newblock Functional analysis, sobolev spaces and partial differential equations, 2011.

\bibitem[Cai and Zhang(2018)]{cai2018rate}
T~Tony Cai and Anru Zhang.
\newblock Rate-optimal perturbation bounds for singular subspaces with applications to high-dimensional statistics.
\newblock \emph{The Annals of Statistics}, 46\penalty0 (1):\penalty0 60--89, 2018.

\bibitem[Candes and Plan(2010)]{candes2010matrix}
Emmanuel~J Candes and Yaniv Plan.
\newblock Matrix completion with noise.
\newblock \emph{Proceedings of the IEEE}, 98\penalty0 (6):\penalty0 925--936, 2010.

\bibitem[Che and Wei(2019)]{che2019randomized}
Maolin Che and Yimin Wei.
\newblock Randomized algorithms for the approximations of tucker and the tensor train decompositions.
\newblock \emph{Advances in Computational Mathematics}, 45\penalty0 (1):\penalty0 395--428, 2019.

\bibitem[Chen and Khoo(2023)]{chen2023combining}
Yian Chen and Yuehaw Khoo.
\newblock Combining particle and tensor-network methods for partial differential equations via sketching.
\newblock \emph{arXiv preprint arXiv:2305.17884}, 2023.

\bibitem[Chen et~al.(2021)Chen, Chi, Fan, Ma, et~al.]{chen2021spectral}
Yuxin Chen, Yuejie Chi, Jianqing Fan, Cong Ma, et~al.
\newblock Spectral methods for data science: A statistical perspective.
\newblock \emph{Foundations and Trends{\textregistered} in Machine Learning}, 14\penalty0 (5):\penalty0 566--806, 2021.

\bibitem[Davis et~al.(2011)Davis, Lii, and Politis]{davis2011remarks}
Richard~A Davis, Keh-Shin Lii, and Dimitris~N Politis.
\newblock Remarks on some nonparametric estimates of a density function.
\newblock \emph{Selected Works of Murray Rosenblatt}, pages 95--100, 2011.

\bibitem[Dempster(1977)]{dempster1977maximum}
Arthur Dempster.
\newblock Maximum likelihood estimation from incomplete data via the em algorithm.
\newblock \emph{Journal of the Royal Statistical Society}, 39:\penalty0 1--38, 1977.

\bibitem[Dinh et~al.(2014)Dinh, Krueger, and Bengio]{dinh2014nice}
Laurent Dinh, David Krueger, and Yoshua Bengio.
\newblock Nice: Non-linear independent components estimation.
\newblock \emph{arXiv preprint arXiv:1410.8516}, 2014.

\bibitem[Dinh et~al.(2016)Dinh, Sohl-Dickstein, and Bengio]{dinh2016density}
Laurent Dinh, Jascha Sohl-Dickstein, and Samy Bengio.
\newblock Density estimation using real nvp.
\newblock \emph{arXiv preprint arXiv:1605.08803}, 2016.

\bibitem[Drineas and Mahoney(2016)]{drineas2016randnla}
Petros Drineas and Michael~W Mahoney.
\newblock Randnla: randomized numerical linear algebra.
\newblock \emph{Communications of the ACM}, 59\penalty0 (6):\penalty0 80--90, 2016.

\bibitem[Dud{\'\i}k et~al.(2007)Dud{\'\i}k, Phillips, and Schapire]{dudik2007maximum}
Miroslav Dud{\'\i}k, Steven~J Phillips, and Robert~E Schapire.
\newblock Maximum entropy density estimation with generalized regularization and an application to species distribution modeling.
\newblock 2007.

\bibitem[Escobar and West(1995)]{escobar1995bayesian}
Michael~D Escobar and Mike West.
\newblock Bayesian density estimation and inference using mixtures.
\newblock \emph{Journal of the american statistical association}, 90\penalty0 (430):\penalty0 577--588, 1995.

\bibitem[Germain et~al.(2015)Germain, Gregor, Murray, and Larochelle]{germain2015made}
Mathieu Germain, Karol Gregor, Iain Murray, and Hugo Larochelle.
\newblock Made: Masked autoencoder for distribution estimation.
\newblock In \emph{International conference on machine learning}, pages 881--889. PMLR, 2015.

\bibitem[Goodfellow et~al.(2020)Goodfellow, Pouget-Abadie, Mirza, Xu, Warde-Farley, Ozair, Courville, and Bengio]{goodfellow2020generative}
Ian Goodfellow, Jean Pouget-Abadie, Mehdi Mirza, Bing Xu, David Warde-Farley, Sherjil Ozair, Aaron Courville, and Yoshua Bengio.
\newblock Generative adversarial networks.
\newblock \emph{Communications of the ACM}, 63\penalty0 (11):\penalty0 139--144, 2020.

\bibitem[Grieves(2023)]{grieves2023estimating}
Roddy~M Grieves.
\newblock Estimating neuronal firing density: A quantitative analysis of firing rate map algorithms.
\newblock \emph{PLOS Computational Biology}, 19\penalty0 (12):\penalty0 e1011763, 2023.

\bibitem[Halko et~al.(2011)Halko, Martinsson, and Tropp]{halko2011finding}
Nathan Halko, Per-Gunnar Martinsson, and Joel~A Tropp.
\newblock Finding structure with randomness: Probabilistic algorithms for constructing approximate matrix decompositions.
\newblock \emph{SIAM review}, 53\penalty0 (2):\penalty0 217--288, 2011.

\bibitem[Horn and Johnson(1994)]{horn1994topics}
Roger~A Horn and Charles~R Johnson.
\newblock \emph{Topics in matrix analysis}.
\newblock Cambridge university press, 1994.

\bibitem[Horn and Johnson(2012)]{horn2012matrix}
Roger~A Horn and Charles~R Johnson.
\newblock \emph{Matrix analysis}.
\newblock Cambridge university press, 2012.

\bibitem[Huang et~al.(2018)Huang, Krueger, Lacoste, and Courville]{huang2018neural}
Chin-Wei Huang, David Krueger, Alexandre Lacoste, and Aaron Courville.
\newblock Neural autoregressive flows.
\newblock In \emph{International Conference on Machine Learning}, pages 2078--2087. PMLR, 2018.

\bibitem[Hur et~al.(2023)Hur, Hoskins, Lindsey, Stoudenmire, and Khoo]{hur2023generative}
Yoonhaeng Hur, Jeremy~G Hoskins, Michael Lindsey, E~Miles Stoudenmire, and Yuehaw Khoo.
\newblock Generative modeling via tensor train sketching.
\newblock \emph{Applied and Computational Harmonic Analysis}, 67:\penalty0 101575, 2023.

\bibitem[Kingma and Ba(2014)]{kingma2014adam}
Diederik~P Kingma and Jimmy Ba.
\newblock Adam: A method for stochastic optimization.
\newblock \emph{arXiv preprint arXiv:1412.6980}, 2014.

\bibitem[Kressner et~al.(2023)Kressner, Vandereycken, and Voorhaar]{kressner2023streaming}
Daniel Kressner, Bart Vandereycken, and Rik Voorhaar.
\newblock Streaming tensor train approximation.
\newblock \emph{SIAM Journal on Scientific Computing}, 45\penalty0 (5):\penalty0 A2610--A2631, 2023.

\bibitem[Kumar et~al.(2012)Kumar, Mohri, and Talwalkar]{kumar2012sampling}
Sanjiv Kumar, Mehryar Mohri, and Ameet Talwalkar.
\newblock Sampling methods for the nystr{\"o}m method.
\newblock \emph{The Journal of Machine Learning Research}, 13\penalty0 (1):\penalty0 981--1006, 2012.

\bibitem[Liberty et~al.(2007)Liberty, Woolfe, Martinsson, Rokhlin, and Tygert]{liberty2007randomized}
Edo Liberty, Franco Woolfe, Per-Gunnar Martinsson, Vladimir Rokhlin, and Mark Tygert.
\newblock Randomized algorithms for the low-rank approximation of matrices.
\newblock \emph{Proceedings of the National Academy of Sciences}, 104\penalty0 (51):\penalty0 20167--20172, 2007.

\bibitem[Liu et~al.(2007)Liu, Lafferty, and Wasserman]{liu2007sparse}
Han Liu, John Lafferty, and Larry Wasserman.
\newblock Sparse nonparametric density estimation in high dimensions using the rodeo.
\newblock In \emph{Artificial Intelligence and Statistics}, pages 283--290. PMLR, 2007.

\bibitem[Liu et~al.(2023)Liu, Li, and Wong]{liu2023convergence}
Linxi Liu, Dangna Li, and Wing~Hung Wong.
\newblock Convergence rates of a class of multivariate density estimation methods based on adaptive partitioning.
\newblock \emph{Journal of machine learning research}, 24\penalty0 (50):\penalty0 1--64, 2023.

\bibitem[Liu et~al.(2021)Liu, Xu, Jiang, and Wong]{liu2021density}
Qiao Liu, Jiaze Xu, Rui Jiang, and Wing~Hung Wong.
\newblock Density estimation using deep generative neural networks.
\newblock \emph{Proceedings of the National Academy of Sciences}, 118\penalty0 (15):\penalty0 e2101344118, 2021.

\bibitem[Mahoney et~al.(2011)]{mahoney2011randomized}
Michael~W Mahoney et~al.
\newblock Randomized algorithms for matrices and data.
\newblock \emph{Foundations and Trends{\textregistered} in Machine Learning}, 3\penalty0 (2):\penalty0 123--224, 2011.

\bibitem[Martinsson(2019)]{martinsson2019randomized}
Per-Gunnar Martinsson.
\newblock Randomized methods for matrix computations.
\newblock \emph{The Mathematics of Data}, 25\penalty0 (4):\penalty0 187--231, 2019.

\bibitem[Martinsson and Tropp(2020)]{martinsson2020randomized}
Per-Gunnar Martinsson and Joel~A Tropp.
\newblock Randomized numerical linear algebra: Foundations and algorithms.
\newblock \emph{Acta Numerica}, 29:\penalty0 403--572, 2020.

\bibitem[Minster et~al.(2020)Minster, Saibaba, and Kilmer]{minster2020randomized}
Rachel Minster, Arvind~K Saibaba, and Misha~E Kilmer.
\newblock Randomized algorithms for low-rank tensor decompositions in the tucker format.
\newblock \emph{SIAM Journal on Mathematics of Data Science}, 2\penalty0 (1):\penalty0 189--215, 2020.

\bibitem[Nakatsukasa and Tropp(2021)]{nakatsukasa2021fast}
Yuji Nakatsukasa and Joel~A Tropp.
\newblock Fast \& accurate randomized algorithms for linear systems and eigenvalue problems.
\newblock \emph{arXiv preprint arXiv:2111.00113}, 2021.

\bibitem[Papamakarios et~al.(2017)Papamakarios, Pavlakou, and Murray]{papamakarios2017masked}
George Papamakarios, Theo Pavlakou, and Iain Murray.
\newblock Masked autoregressive flow for density estimation.
\newblock \emph{Advances in neural information processing systems}, 30, 2017.

\bibitem[Parzen(1962)]{parzen1962estimation}
Emanuel Parzen.
\newblock On estimation of a probability density function and mode.
\newblock \emph{The annals of mathematical statistics}, 33\penalty0 (3):\penalty0 1065--1076, 1962.

\bibitem[Peng et~al.(2023)Peng, Chen, Stoudenmire, and Khoo]{peng2023generative}
Yifan Peng, Yian Chen, E~Miles Stoudenmire, and Yuehaw Khoo.
\newblock Generative modeling via hierarchical tensor sketching.
\newblock \emph{arXiv preprint arXiv:2304.05305}, 2023.

\bibitem[Rahimi and Recht(2007)]{rahimi2007random}
Ali Rahimi and Benjamin Recht.
\newblock Random features for large-scale kernel machines.
\newblock \emph{Advances in neural information processing systems}, 20, 2007.

\bibitem[Raskutti and Mahoney(2016)]{raskutti2016statistical}
Garvesh Raskutti and Michael~W Mahoney.
\newblock A statistical perspective on randomized sketching for ordinary least-squares.
\newblock \emph{The Journal of Machine Learning Research}, 17\penalty0 (1):\penalty0 7508--7538, 2016.

\bibitem[Ren et~al.(2023)Ren, Zhao, Khoo, and Ying]{ren2023high}
Yinuo Ren, Hongli Zhao, Yuehaw Khoo, and Lexing Ying.
\newblock High-dimensional density estimation with tensorizing flow.
\newblock \emph{Research in the Mathematical Sciences}, 10\penalty0 (3):\penalty0 30, 2023.

\bibitem[Rezende and Mohamed(2015)]{rezende2015variational}
Danilo Rezende and Shakir Mohamed.
\newblock Variational inference with normalizing flows.
\newblock In \emph{International conference on machine learning}, pages 1530--1538. PMLR, 2015.

\bibitem[Scott(1979)]{scott1979optimal}
David~W Scott.
\newblock On optimal and data-based histograms.
\newblock \emph{Biometrika}, 66\penalty0 (3):\penalty0 605--610, 1979.

\bibitem[Scott(1985)]{scott1985averaged}
David~W Scott.
\newblock Averaged shifted histograms: effective nonparametric density estimators in several dimensions.
\newblock \emph{The Annals of Statistics}, pages 1024--1040, 1985.

\bibitem[Shi et~al.(2023)Shi, Ruth, and Townsend]{shi2023parallel}
Tianyi Shi, Maximilian Ruth, and Alex Townsend.
\newblock Parallel algorithms for computing the tensor-train decomposition.
\newblock \emph{SIAM Journal on Scientific Computing}, 45\penalty0 (3):\penalty0 C101--C130, 2023.

\bibitem[Shi et~al.(2021)Shi, Tong, Zhang, Wang, Shi, Yao, and Jia]{shi2021extended}
Wenzhong Shi, Chengzhuo Tong, Anshu Zhang, Bin Wang, Zhicheng Shi, Yepeng Yao, and Peng Jia.
\newblock An extended weight kernel density estimation model forecasts covid-19 onset risk and identifies spatiotemporal variations of lockdown effects in china.
\newblock \emph{Communications biology}, 4\penalty0 (1):\penalty0 126, 2021.

\bibitem[Sun et~al.(2020)Sun, Guo, Luo, Tropp, and Udell]{sun2020low}
Yiming Sun, Yang Guo, Charlene Luo, Joel Tropp, and Madeleine Udell.
\newblock Low-rank tucker approximation of a tensor from streaming data.
\newblock \emph{SIAM Journal on Mathematics of Data Science}, 2\penalty0 (4):\penalty0 1123--1150, 2020.

\bibitem[Tang and Ying(2023)]{tang2023solving}
Xun Tang and Lexing Ying.
\newblock Solving high-dimensional fokker-planck equation with functional hierarchical tensor.
\newblock \emph{arXiv preprint arXiv:2312.07455}, 2023.

\bibitem[Tang et~al.(2022)Tang, Hur, Khoo, and Ying]{tang2022generative}
Xun Tang, Yoonhaeng Hur, Yuehaw Khoo, and Lexing Ying.
\newblock Generative modeling via tree tensor network states.
\newblock \emph{arXiv preprint arXiv:2209.01341}, 2022.

\bibitem[Tian et~al.(2024)Tian, Lei, and Roeder]{tian2024local}
Jinjin Tian, Jing Lei, and Kathryn Roeder.
\newblock From local to global gene co-expression estimation using single-cell rna-seq data.
\newblock \emph{Biometrics}, 80\penalty0 (1):\penalty0 ujae001, 2024.

\bibitem[Torao-Angosto et~al.(2021)Torao-Angosto, Manasanch, Mattia, and Sanchez-Vives]{torao2021up}
Melody Torao-Angosto, Arnau Manasanch, Maurizio Mattia, and Maria~V Sanchez-Vives.
\newblock Up and down states during slow oscillations in slow-wave sleep and different levels of anesthesia.
\newblock \emph{Frontiers in systems neuroscience}, 15:\penalty0 609645, 2021.

\bibitem[T{\^o}rres et~al.(2012)T{\^o}rres, De~Marco, Santos, Silveira, de~Almeida~J{\'a}como, and Diniz-Filho]{torres2012can}
Nat{\'a}lia~M T{\^o}rres, Paulo De~Marco, Thiago Santos, Leandro Silveira, Anah~T de~Almeida~J{\'a}como, and Jos{\'e}~AF Diniz-Filho.
\newblock Can species distribution modelling provide estimates of population densities? a case study with jaguars in the neotropics.
\newblock \emph{Diversity and Distributions}, 18\penalty0 (6):\penalty0 615--627, 2012.

\bibitem[Tropp(2015)]{tropp2015introduction}
Joel Tropp.
\newblock An introduction to matrix concentration inequalities.
\newblock \emph{Foundations and Trends{\textregistered} in Machine Learning}, 8\penalty0 (1-2):\penalty0 1--230, 2015.

\bibitem[Tropp et~al.(2017{\natexlab{a}})Tropp, Yurtsever, Udell, and Cevher]{tropp2017fixed}
Joel~A Tropp, Alp Yurtsever, Madeleine Udell, and Volkan Cevher.
\newblock Fixed-rank approximation of a positive-semidefinite matrix from streaming data.
\newblock \emph{Advances in Neural Information Processing Systems}, 30, 2017{\natexlab{a}}.

\bibitem[Tropp et~al.(2017{\natexlab{b}})Tropp, Yurtsever, Udell, and Cevher]{tropp2017practical}
Joel~A Tropp, Alp Yurtsever, Madeleine Udell, and Volkan Cevher.
\newblock Practical sketching algorithms for low-rank matrix approximation.
\newblock \emph{SIAM Journal on Matrix Analysis and Applications}, 38\penalty0 (4):\penalty0 1454--1485, 2017{\natexlab{b}}.

\bibitem[Uria et~al.(2016)Uria, C{\^o}t{\'e}, Gregor, Murray, and Larochelle]{uria2016neural}
Benigno Uria, Marc-Alexandre C{\^o}t{\'e}, Karol Gregor, Iain Murray, and Hugo Larochelle.
\newblock Neural autoregressive distribution estimation.
\newblock \emph{The Journal of Machine Learning Research}, 17\penalty0 (1):\penalty0 7184--7220, 2016.

\bibitem[Wainwright(2019)]{wainwright2019high}
Martin~J Wainwright.
\newblock \emph{High-dimensional statistics: A non-asymptotic viewpoint}, volume~48.
\newblock Cambridge university press, 2019.

\bibitem[Wang et~al.(2017)Wang, Gittens, and Mahoney]{wang2017sketched}
Shusen Wang, Alex Gittens, and Michael~W Mahoney.
\newblock Sketched ridge regression: Optimization perspective, statistical perspective, and model averaging.
\newblock In \emph{International Conference on Machine Learning}, pages 3608--3616. PMLR, 2017.

\bibitem[Wasserman(2006)]{wasserman2006all}
Larry Wasserman.
\newblock \emph{All of nonparametric statistics}.
\newblock Springer Science \& Business Media, 2006.

\bibitem[Williams and Seeger(2000)]{williams2000using}
Christopher Williams and Matthias Seeger.
\newblock Using the nystr{\"o}m method to speed up kernel machines.
\newblock \emph{Advances in neural information processing systems}, 13, 2000.

\bibitem[Woodruff et~al.(2014)]{woodruff2014sketching}
David~P Woodruff et~al.
\newblock Sketching as a tool for numerical linear algebra.
\newblock \emph{Foundations and Trends{\textregistered} in Theoretical Computer Science}, 10\penalty0 (1--2):\penalty0 1--157, 2014.

\bibitem[Xu(2018)]{xu2018approximation}
Yuan Xu.
\newblock Approximation by polynomials in sobolev spaces with jacobi weight.
\newblock \emph{Journal of Fourier Analysis and Applications}, 24:\penalty0 1438--1459, 2018.

\bibitem[Yang et~al.(2021)Yang, Yurtsever, Renganathan, Redmill, and {\"O}zg{\"u}ner]{yang2021vision}
Dongfang Yang, Ekim Yurtsever, Vishnu Renganathan, Keith~A Redmill, and {\"U}mit {\"O}zg{\"u}ner.
\newblock A vision-based social distancing and critical density detection system for covid-19.
\newblock \emph{Sensors}, 21\penalty0 (13):\penalty0 4608, 2021.

\bibitem[Yang et~al.(2017)Yang, Pilanci, and Wainwright]{yang2017randomized}
Yun Yang, Mert Pilanci, and Martin~J Wainwright.
\newblock Randomized sketches for kernels: Fast and optimal nonparametric regression.
\newblock \emph{Annals of Statistics}, pages 991--1023, 2017.

\end{thebibliography}
 
\appendix 
\section{Tensors and multivariable functions }
\subsection{Finite-dimensional tensors}
Let $B  $ be a tensor of size $  m_1\times m_2 \times \cdots \times  m_d $. Let 
$  B (\mi _1,\ldots, \mi_k ; \mi_{k+1},\ldots, \mi _d) $  be   the $k$-th unfolding matrix of $B$   such that 
\begin{align}\label{eq:unfold} B (\mi_1,\ldots, \mi_k ; \mi_{k+1},\ldots, \mi_d)= B (\mi_1,\ldots, \mi_d). 
\end{align}
Here $B (\mi_1,\ldots, \mi_k ; \mi_{k+1},\ldots, \mi_d)$ is a matrix in $\prod_{s=1}^k m_s \times \prod_{s=k+1}^d m_s $ dimensions, and its element  in  row  $(\mi_1,\ldots, \mi_k )$  and  column $( \mi_{k+1},\ldots, \mi_{d})$ equals to $B (\mi_1,\ldots, \mi_d)$ as in \eqref{eq:unfold}. The matrix $B (\mi_1,\ldots, \mi_k ; \mi_{k+1},\ldots, \mi_d)$ can be obtained from $B$ by the MATLAB function 
$$\text{reshape} \bigg( B, \ \prod_{s=1}^k m_s ,\prod_{s=k+1}^d m_s \bigg). $$
\begin{remark}
Given $\{ \mi_k\}_{k=1}^d  $ such that $1\le \mu_k \le m_k $, we can identify the integer   $(\mi_1,\ldots, \mi_k )  $ in the following recursive way.  Set $   (\mu_1,\mu_2) =  (\mu_1-1)\cdot m_2 +\mu_2$. So $1\le (\mu_1,\mu_2)  \le m_1m_2$.  Then
$  (\mu_1,\mi_2,\mi_3) = ((\mu_1,\mu_2) ,\mi_3) = ((\mu_1,\mu_2) -1)\cdot m_3  +\mi_3 $, and so on.   
When working with $B (\mi _1,\ldots, \mi_k ; \mi_{k+1},\ldots, \mi _d)$, the   $k$-th unfolding matrix of  the tensor $B$, it suffices    to  use $(\mi_1,\ldots, \mi_k )$ as the row index, and  $( \mi_{k+1},\ldots, \mi_{d})$  as the column index. Therefore  it is not necessary to keep track of the exact  value of the 
 integer $(\mi_1,\ldots, \mi_k )$.
\end{remark}
Let $Q_k\in \mathbb R^{m_k\times q_k}$ be a matrix. Then 
$  B\times_k Q_k  $ is a tensor of size $m_1\times \ldots\times m_{k-1}\times q_k\times m_{k+1}\times \ldots m_d$ such that 
$$ (B\times_kQ_k )(\mi_1,\ldots, \mi_{k-1}, \li_{k}, \mi_{k+1}, \ldots, \mi_{d}) = 
\sum_{\mi_k =1}^{q_k} B(\mi_1,\ldots, \mi_{k-1}, \mi_{k}, \mi_{k+1}, \ldots, \mi_{d}) Q_k(\mi_k, \li_k)
$$

\subsection{Tensors in function spaces}
We first introduce  a convenient notation for integration of two functions. 
For $j=1,2,3$  let $\Omega_j \subset \mathbb R^{d_j}$  be any regular domain.  
Let $A(x, y) \in  \lt(\Omega_1\times \Omega_2 ) $ and $Q(y,z) \in  \lt(\Omega_2\times \Omega_3 ) $ be two functions. Then 
\begin{align}\label{eq:integration as matrix multiplication} ( A\times _y Q ) (x, z)  = \int_{\Omega_2} A(x, y) Q(y,z) dy.  
\end{align}
Let $\mathcal O \subset \mathbb R$ be a measurable subset.
For arbitrary  univariate functions $ \{ f_k(z_k)\}_{k=1}^d $, denote 
$f_1\otimes \cdots \otimes  f_d  \in \lt(\mathcal O^d)  $  as 
\begin{align} \label{eq:tensor of univariate functions}
     f_1\otimes \cdots \otimes  f_d  (z_1,\ldots, z_d) =f_1 (z_1)   \cdots    f_d(z_d).
\end{align}  
Let $ \{ \mathcal S_j\}_{j=1}^d \subset \lt(\mathcal O) $ be   arbitrary  subspaces, where functions in $\mathcal S_j $ correspond to  the variable $z_j$. Define
$$ \mathcal S_{1}\otimes \cdots \otimes  \mathcal S_{d} = \Span\{g_1(z_1) \cdots g_d(z_d) : g_1(z_1) \in   \mathcal S_{1}, \ldots, g_d(z_d) \in   \mathcal S_{d}\}. $$
So $ \mathcal S_{1}\otimes \cdots \otimes  \mathcal S_{d} \subset \lt(\mathcal O^d)$.
Let $ \{ \phi_{\mi}   \}_{\mi=1}^ \infty$ be a collection of complete basis in $\lt(\mathcal O)$, and
   $m   \in \mathbb Z^+ $. For $j \in \{ 1,\ldots,d\}$,  define  
\begin{align}  \mathcal M_j =   \Span \{ \phi_{\mi} (z_j)  \}_{\mi=1}^ m  . 
\end{align}
So $\mathcal M _j\subset \lt(\mathcal O) $  and  $ \mathcal M_{1} \otimes \cdots \otimes \mathcal M_d \subset \lt (\mathcal O^d )$. Denote 
$$ \mathcal P_{\mathcal M_j} (z_j, z_j') = \sum_{\mi=1}^m \phi_\mi(z_j) \phi_\mi(z_j') . $$
For any function $A \in \lt(\mathcal O^d )$,   we define 
$A  \times_1 \mathcal P _{\mathcal M_1} (z_1, \ldots, z_d)$ using  \eqref{eq:integration as matrix multiplication} as 
$$ A  \times_1 \mathcal P _{\mathcal M_1} (z_1, \ldots, z_d) = \int_{\mathcal O} A(z_1, \ldots, z_{j-1}, z_j',z_{j+1},\ldots, z_d ) \mathcal P_{\mathcal M_j} (z_j', z_j) dz'_j.$$
Therefore 
$A  \times_1 \mathcal P _{\mathcal M_1} \times \ldots \times_d  \mathcal P_{\mathcal M_d}    $  can be defined inductively.  A direct calculation shows that   
$$  A {\times_1 \mathcal P _{\mathcal M_1} \times \ldots \times_d  \mathcal P_{\mathcal M_d}}(z_1, \ldots, z_d) = \sum_{\mi_1=1}^m \cdots \sum_{\mi_d=1}^m  \langle A, \phi_{\mi_1} \otimes \cdots \otimes \phi_{\mi_d}\rangle  \phi_{\mi_1}(x_1) \cdots   \phi_{\mi_d} (x_d) . $$
Therefore   $  A {\times_1 \mathcal P _{\mathcal M_1} \times \ldots \times_d  \mathcal P_{\mathcal M_d}} $ is a function in the space $\mathcal M_{1} \otimes \cdots \otimes \mathcal M_d  $.
Given data $\{ \BZ_i\}_{i=1}^N \subset \mathcal O^d$,  denote the corresponding  empirical measure as 
$$ \widehat A    = \frac{1}{N} \sum_{i=1}^N \delta _{\BZ_i}  ,$$
where $ \delta _{\BZ_i }$ is the indicator function   at the  point $\BZ_i$.
We can also project the empirical measure $\widehat A$ onto $\mathcal M_{1} \otimes \cdots \otimes \mathcal M_d  $ as 
\begin{align*}
    \widehat A  {\times_1 \mathcal P _{\mathcal M_1} \times \ldots \times_d  \mathcal P_{\mathcal M_d}}  (z_1, \ldots, z_d) = &\sum_{\mi_1=1}^m \cdots \sum_{\mi_d=1}^m  \langle  \widehat A, \phi_{\mi_1} \otimes \cdots \otimes \phi_{\mi_d}\rangle  \phi_{\mi_1}(z_1) \cdots   \phi_{\mi_d} (z_d) 
   \\
   = &\sum_{\mi_1=1}^m \cdots \sum_{\mi_d=1}^m \bigg \{ \frac{1}{N } \sum_{i=1}^N    \phi_{\mi_1}  \otimes \cdots \otimes \phi_{\mi_d} (\BZ_i)  \bigg\}   \phi_{\mi_1}(z_1) \cdots   \phi_{\mi_d} (z_d)  
\end{align*}
\begin{lemma}
Suppose $\{ \BZ_i\}_{i=1}^N \subset \mathbb R^{d}$ are sampled independently from the density $A^*$. Then 
$$\E ( \widehat A {\times_1 \mathcal P _{\mathcal M_1} \times \ldots \times_d  \mathcal P_{\mathcal M_d}} )  = A^*  {\times_1 \mathcal P _{\mathcal M_1} \times \ldots \times_d  \mathcal P_{\mathcal M_d}} . $$
\end{lemma}
\begin{proof} It suffices to observe that 
$ \E ( \phi_{\mi_1}  \otimes \cdots \otimes \phi_{\mi_d} (\BZ_i) ) = \langle A^*, \phi_{\mi_1} \otimes \cdots \otimes \phi_{\mi_d}\rangle.$ 
\end{proof}

\subsection{The coefficient tensor}

\begin{definition}[Coefficient tensors] \label{definition:coefficient tensor}
Suppose  $ f \in \mathcal M_{1} \otimes \cdots \otimes \mathcal M_d    \subset  \lt (\mathcal O^d )$. 
Let  $\mathcal C(f) $  be   a tensor in  $\mathbb R^{m^d}$    satisfying
\begin{align}\label{eq:coefficient tensor definiton} \mathcal C(f) (\mi _1,\ldots, \mi _d)  =  \langle f, \phi_{\mi_1} \otimes \cdots  \phi_{\mi_d} \rangle, 
\end{align} 
where $1\le \mi_1,\ldots, \mi_d\le m $. Here, $\mathcal C(f)$ is the coefficient tensor of $f$. Note that the coefficient tensor $\mathcal C(f) $ is defined  with respect to   the basis functions $ \{ \phi_{\mi}   \}_{\mi=1}^ m$.   
\end{definition} 
The following lemma shows that the coefficient tensor is distance preserving, and consequently the space of $\mathcal M^{\otimes d}$ and $\mathbb R^{m^d}$ have the same topology.

\begin{lemma} \label{lemma:coefficient tensor distance preserving}
Let 
$f \in \mathcal M_{1} \otimes \cdots \otimes \mathcal M_d     $, and let $ \mathcal C(f) $ be defined in \eqref{eq:coefficient tensor definiton}. Then 
$$ \| f\|_{\lt(\mathcal O^d)} = \| \mathcal C(f) \|_{F}.$$
\end{lemma}
\begin{proof}
    Let $\{ \phi_{\mi}\}_{\mi=1}^\infty  $ be the complete basis functions of $\lt(\mathcal O) $. Then $\{ \phi_{\mi_1}\otimes \cdots \otimes \phi_{\mi_d}\}_{\mi_1,\ldots, \mi_d=1}^\infty $ is a   complete basis for $\lt(\mathcal O^d) $. Therefore 
    \begin{align*}
       \| f\|_{\lt(\mathcal O^d)} ^2 =  \sum_{\mi_1=1}^\infty  \cdots \sum_{\mi_d=1}^ \infty \langle    f, \phi_{\mi_1} \otimes \cdots \otimes \phi_{\mi_d}\rangle ^2 
        =  \sum_{\mi_1=1}^m \cdots \sum_{\mi_d=1}^ m  \langle    f, \phi_{\mi_1} \otimes \cdots \otimes \phi_{\mi_d}\rangle ^2  = \| \mathcal C(f) \|_{F} ^2,
    \end{align*}
    where the second  equality follows from the assumption that $f\in \mathcal M_{1} \otimes \cdots \otimes \mathcal M_d   $.
\end{proof}

The following lemma demonstrates the compatibility of the coefficient matrices.
\begin{lemma} \label{lemma:product compactible coefficient matrix}
For $j=1,2,3$  let $\Omega_j \subset \mathbb R^{d_j}$  be any regular domain. Let $x\in \Omega_1, y\in \Omega_2, $ and $z \in \Omega_3$  be three variables in arbitrary dimensions. Let $\mathcal M = \Span \{ \phi_\mi(x) \}_{\mi=1}^m$,   $\mathcal L = \Span\{ \psi_\li(y)\}_{\li=1}^l $, and  $ \mathcal N =\Span \{\xi_k(z) \}_{k=1}^n$ be three subspaces of functions, where  $\{ \phi_\mi(x) \}_{\mi=1}^m $, $\{ \psi_\li(y)\}_{\li=1}^l $, and 
$  \{\xi_k(z) \}_{k=1}^n$ are othornomal functions. 
Let $A(x, y) \in \mathcal M \otimes \mathcal L$ and $Q(y,z) \in \mathcal L \otimes \mathcal N $ be two functions. Then 
 $  A\times _y Q  (x, z) = \int_{\Omega_2} A(x, y) Q(y,z) dy  $ satisfies 
 $$\mathcal C (A \times_y Q) = \mathcal C(A) \mathcal C(Q) ,   $$
 where $ \mathcal C(A) \in \mathbb R^{m\times l }$ and $ \mathcal C(Q) \in \mathbb R^{l\times n }$.
\end{lemma}
\begin{proof} The proof follows from a  standard coordinate matching calculation. By definition, the $\mi, k$ entry of the matrix $\mathcal C (A \times_y Q)  \in \mathbb R^{m\times n} $ is 
\begin{align*}
\int_{\Omega_1} \int_{\Omega_3} \int_{\Omega_2} A(x, y) Q(y,z) dy  \phi_\mi (x) \xi _k (z) dz dx.
\end{align*}
Note that for any function $f(y) \in \mathcal L , g(y)\in \mathcal L $, 
$$ \int _{\Omega_2} f(y) g(y) dy =  \sum_{\li=1}^\infty \int _{\Omega_2}f(y) \psi_\li (y) dy 
 \int_{\Omega_2}  g(y) \psi_\li (y) dy  = \sum_{\eta=1}^ l \int_{\Omega_2} f(y) \psi_\li (y) dy 
 \int_{\Omega_2}  g(y) \psi_\li (y) dy  . $$
 Since    $A(x, y) \in   \mathcal L $ for fixed $x$,
 and   $Q(y,z) \in \mathcal L$ for fixed $z$, it follows that 
 \begin{align*}
& \int_{\Omega_1} \int_{\Omega_3} \int_{\Omega_2} A(x, y) Q(y,z) dy  \phi_\mi (x) \xi _k (z) dz dx  \\
=  &\int_{\Omega_1 } \int_{\Omega_3 } \sum_{\eta=1}^ l \bigg\{ \int_{\Omega_ 2}A(x, y) \psi_\li (y) dy \bigg\} \bigg\{ \int_{\Omega_2} Q(y, z) \psi_\li (y) dy \bigg\}  \phi_\mi (x) \xi _k (z) dxdz 
\\
=& \sum_{\eta=1}^ l  \bigg\{  \int_{\Omega_1 } \int _{\Omega_2} A(x, y) \psi_\li (y) \phi_\mi (x) dy  dx \bigg\}  \bigg\{\int_{\Omega_2  } \int_{\Omega_3 } Q(y, z) \psi_\li (y) \xi _k (z)  dy   dz \bigg\}    
\end{align*}
which is the $(\mi, k)$-entry of the matrix $ \mathcal C(A) \mathcal C(Q)$.
 \end{proof}

\subsection{Unfolding the coefficient tensor }
\label{subsection:unfolding coefficient tensor}
The coefficient tensor of $A {\times_1 \mathcal P _{\mathcal M_1} \times \ldots \times_d  \mathcal P_{\mathcal M_d}} $ satisfies 
\begin{align}\label{eq:coefficient tensor definition} \mathcal C( A {\times_1 \mathcal P _{\mathcal M_1} \times \ldots \times_d  \mathcal P_{\mathcal M_d}}) (\mi_1,\ldots, \mi_d)  = \langle A, \phi_{\mi_1} \otimes \cdots \otimes \phi_{\mi_d}\rangle  .
\end{align}
For $1\le k < d $,  let $x=(z_1,\ldots,z_k) $ and $y=(z_{k+1},\ldots, z_d) $.  
Denote 
$$ \mathcal M^{\otimes k}  = \mathcal M_1 \otimes \cdots \otimes \mathcal M_{k} \quad \text{and} \quad \mathcal M^{\otimes d-k} = \mathcal M_{k+1} \otimes \cdots \otimes \mathcal M_{d} .  $$
Then $ A \times_x \mathcal P_{\mathcal M^{\otimes k}}  \times_y \mathcal P_{\mathcal M^{\otimes d-k}}     $ is a function in $\mathcal M^{\otimes k} \otimes\mathcal M^{\otimes d-k} $. 
The coefficient matrix 
 $  \mathcal C (A \times_x \mathcal P_{\mathcal M^{\otimes k}}  \times_y \mathcal P_{\mathcal M^{\otimes d-k}})    \in \mathbb R^{m^k \times m^{d-k}} $ satisfies 
\begin{align} \nonumber & \mathcal C ( A \times_x \mathcal P_{\mathcal M^{\otimes k}}  \times_y \mathcal P_{\mathcal M^{\otimes d-k}}) (\mi_1,\ldots, \mi_k; \mi_{k+1}, \ldots,\mi_{d}) 
\\ = & \langle A, ( \phi_{\mi_1} \otimes  \cdots \otimes \phi_{\mi_k}) \otimes  ( \phi_{\mi_{k+1}} \otimes  \cdots \otimes \phi_{\mi_d})\rangle .
 \label{eq:unfold coefficient tensor} 
 \\
= & \langle A, \phi_{\mi_1} \otimes \cdots \otimes \phi_{\mi_d}\rangle  = \nonumber  \mathcal C( A {\times_1 \mathcal P _{\mathcal M_1} \times \ldots \times_d  \mathcal P_{\mathcal M_d}}) (\mi_1,\ldots, \mi_d)    .
\end{align}
In other words,    $\mathcal C (A {\times_x \mathcal P_{\mathcal M^{\otimes k}}  \times_y \mathcal P_{\mathcal M^{\otimes d-k}}}) $ is the $k$-th unfolding matrix of $\mathcal C( A\timesMD)  $.

\subsection{Best low rank approximations}
\label{appendix:low-rank function}
Throughout the section, we denote 
$x\in \Omega_1 \subset \mathbb R^{d_1}$ and $ y\in \Omega_2 \subset \mathbb R^{d_2}$. Thus $x$ and $y$ can be variables in arbitary dimensions. 
\begin{theorem}
\label{theorem:svd Hilbert}
[Singular value decomposition in function space]
Let $A   (x, y) :\Omega_1 \times \Omega_2 \to \mathbb R $ be any function such that $\|A \|_{\lt(\Omega_1 \times \Omega_2)}   <\infty $.   There exists   a collection of     singular values   $  \sigma_1  (A )  \ge \sigma_2(A) \ge \cdots \ge 0 $, and two collections of orthonormal basis functions $\{ \Phi_\ri(x)  \}_{\ri=1}^{\infty }  \subset \lt(\Omega_1)  $ and $\{ \Psi_\ri   (y)  \}_{\ri=1}^{\infty } \subset \lt(\Omega_2 ) $  such that 
\begin{align}\label{eq:svd Hilbert} A  (x, y) = \sum_{\ri=1}^{\infty} \sigma_{\ri} (A )  \Phi_\ri  (x) \Psi_\ri(y).  
\end{align} 
\end{theorem} 
\begin{proof} See  Section 6 of \cite{brezis2011functional}.
\end{proof} 
  Therefore  $A (x, y)$ can be viewed as an infinite-dimensional matrix.   Suppose in addition   that $A(x,y)$ has exactly $\beta  $ many strictly positive singular values, where $\beta \in \mathbb Z^+\cup \{ \infty\}$. Then the rank of $A(x,y)$ is $\beta$.

\begin{theorem} \label{theorem:best low rank function}
Let $\{ \Phi_\ri(x)  \}_{\ri=1}^{\infty }   $ and $\{ \Psi_\ri   (y)  \}_{\ri=1}^{\infty }   $ be defined as in \eqref{eq:svd Hilbert}.  For $r\in \mathbb Z^+$, denote 
$$[A]_r (x,y)  =  \sum_{\ri=1}^{r} \sigma_{\ri} (A )  \Phi_\ri  (x) \Psi_\ri(y).    $$ Then $[A]_r$ is the best low rank approximation of the function $A$ in the sense that 
\begin{align*} \|A (x, y) - [A ] _r(x, y) \|_{\op} = &  \min_{   \text{ rank}(B) \le r  } \|  A (x, y) -B(x, y) \|_{\op } =\sigma_{\R+1} (A(x, y)) ,
\\
\|A(x, y) - [A ] _r(x, y) \|_{\lt(\Omega_1\times \Omega_2) } = &  \min_{   \text{ rank}(B) \le r  } \|  A (x, y) -B(x, y)\|_{\lt(\Omega_1\times \Omega_2) } = \sqrt{ \sum_{k=\R+1}^\infty \sigma_{k}^2 (A(x, y))}.
\end{align*}
\end{theorem}
\begin{proof} See  Section 6 of \cite{brezis2011functional}.
\end{proof} 
From \Cref{theorem:best low rank function}, it follows that
    if $B   (x, y)  $ is a function of rank at most $\R$, then 
    \begin{align}
       & \|A (x, y) - [A ] _r (x, y) \|_{\op}  \le \| A (x, y) - B (x, y) \| _{\op} 
        \\
         \label{remark:best low rank}
    & \|A (x, y) - [A ] _r (x, y) \|_{\lt(\Omega_1\times \Omega_2)}  \le \| A (x, y)  - B (x, y)  \| _{\lt(\Omega_1\times \Omega_2)}.
    \end{align}   
\begin{definition}
    Let $A (x, y) :\Omega_1 \times \Omega_2 \to \mathbb R $. The range  of $A (x, y)$ in the $x$-variable is defined as 
    \begin{align} \label{eq:definition of range 1}{\range}_x(A ) = \bigg \{ f(x): f(x)=  \int A (x, y) g(y)\mathrm{d}y \text{ for any }  g(y)\in \lt(\Omega_2) \bigg\}. 
 \end{align} 
\end{definition}
Suppose $ A  (x, y) = \sum_{\ri=1}^{\beta} \sigma_{\ri} (A )  \Phi_\ri  (x) \Psi_\ri(y)   $ as in \eqref{eq:svd Hilbert}.  Then   an equivalent definition of  ${\range}_x(A )$ is that 
\begin{align}\label{eq:definition of range}{\range}_x(A ) = \Span\{\Phi_\ri (x) \}_{\ri=1}^{\alpha} =\bigg\{ \sum_{\ri=1}^{\alpha } a_\ri \Phi_\ri (x) :\{ a_\ri\}_{\ri=1}^\alpha \subset \mathbb R\bigg\}.
\end{align} 
Consequently, the rank of $A(x, y)$ is the same as the dimensionality of $\range_x(A )$.

 \begin{corollary}
 \label{corollary:singular value preserving}
Let $x\in \Omega_1, y\in \Omega_2 $   be two variables in arbitrary dimensions,  and $A   (x, y) :\Omega_1 \times \Omega_2 \to \mathbb R $ be any function such that $\|A \|_{\lt(\Omega_1 \times \Omega_2)}   <\infty $. Let $\{ \phi_\mi\}_{\mi=1}^\infty  $ and $ \{ \psi_\li\}_{\li=1}^\infty  $  be two complete basis functions of $\lt(\Omega_1)$ and $\lt(\Omega_2)$ respectively. Let $ \C(A)  \in \mathbb R^{\infty \times \infty}$ be the coefficient matrix of $A$ in the sense that 
$$ \C(A)(\mi, \li) = \langle A, \phi_\mi\otimes \psi_{\li} \rangle. $$
{\bf (a)}  $\sigma_\ri( A ) =\sigma_\ri (\mathcal C(A))$ for all $\ri\in \mathbb Z^+$. Here $\sigma_\ri( A )  $ is the $\ri$-th singular value of $A$ when $A$ is   viewed as a function of two variables $x$ and $y$. 
\\
\\
{\bf (b)}   $ \mathcal C([A]_\R) = [\mathcal C( A ) ] _\R$. Here $[A]_\R$ is the best rank-$\R$ approximation of the function $A$; and $[\mathcal C(A)]_\R$ is  the best   rank-$\R$ approximation of the matrix $\mathcal C(A) $.
 \end{corollary} 
\begin{proof} \Cref{corollary:singular value preserving} is a direct  consequence of \Cref{theorem:best low rank function}. See  Section 6 of \cite{brezis2011functional} for   detailed proofs.
\end{proof}  
We will use   \Cref{corollary:singular value preserving} mainly in the setting where  $A$ is a finite rank function. In this case $\C(A)$ can be identify as a finite-dimensional matrix.

\subsection{Norms of functions}
Suppose $B (x, y): \Omega_1\times \Omega_2 $ is a two-variable function with $\Omega_1\subset \mathbb R^{d_1}$ and $\Omega_2\subset \mathbb R^{d_2}$. Suppose in addition that $\| B\|_{\lt(\Omega_1\times \Omega_2)}< \infty$. 
Then the operator norm of $B(x,y)$ can be defined as 
$$ \| B(x, y)\|_{\op} =  \sup_{ u \in \lt(\Omega_1) , v\in \lt(\Omega_2) }
 \langle B, u\otimes v\rangle .$$
It is a well known linear algebra result that 
$$ \| B(x, y)\|_{\op}  \le \| B\| _{\lt(\Omega_1\times \Omega_2)}  .$$
Whenever the context is clear, for convenience we will write 
$   \| B \|_{\op} $ 
to indicate the operator norm of $B(x, y)$.

\section{Proofs Related to \Cref{lemma:general projection deviation main}}
\label{section:proof of sketching}
 
\begin{proof}[Proof of \Cref{lemma:general projection deviation main}]

By \Cref{lemma:sobolev matrix Bernstein},
\begin{align}\label{eq:operator norm projected}\| ( \widehat A - A^* )\timesML \|_{\op} =\OP \bigg(  \sqrt{ \frac{ (\m^{d_1}+ \l^{d_2})\log(N) }{N}  }\bigg)  .
\end{align} 
Supposed this good event holds. 
Then
\begin{align} \nonumber 
&\|   \widehat A \timesML  - A^*\timesS  \|_{\op} 
\\\nonumber 
\le&  \| A^*\timesML  - A^*\timesS \|_{\op} + \| \widehat A \timesML - A^*\timesML \|_{\op}
\\ \nonumber
\le&  \| A^*\times_x \mathcal P_{\mathcal M }  - A^* \|_{\op } \| \mathcal P_{\mathcal S } \|_{\op} + \| \widehat A \timesML - A^*\timesML \|_{\op}
\\  \label{eq:sketched operator norm bound}
 \le&  \OP \bigg(  \m^{-\alpha } +  \sqrt{ \frac{ ( \m^{d_1}+\l^{d_2} ) \log(N) }{N}} \bigg) ,
\end{align}
where the last inequality follows from \Cref{theorem: approximation theory}, $ \| \mathcal P_{\mathcal S } \|_{\op} \le 1 $ and \eqref{eq:operator norm projected}.  
 
By definition, $\mathcal P^*_x$ is the projection operator onto $\range_x(A^*)$ By \Cref{lemma:general sketching consistency},  $\mathcal P^*_x$ is also the projection operator onto $\range_x(A^ * \times_y \mathcal P_\mathcal S)$. Therefore by 
 \Cref{theorem:Wedin Hilbert space}, the Wedin's theorem in Hilbert space, it follows that  
$$ \| \widehat {\mathcal P} _x - \mathcal P^*_x \|_{\op}  \le  
 \frac{   \sqrt 2    \|    \widehat A\timesML - A^*\timesS  \|_{\op}  }{ \sigma_\R(A^*\times_y \mathcal P_\mathcal S )   - \sigma_{\R+1} (A^*\times_y \mathcal P_\mathcal S )-\|    \widehat A\timesML - A^*\timesS  \|_{\op}   }    .$$
By \eqref{eq:sketched operator norm bound}, we have that
$\|   \widehat A \timesML  - A^*\timesS  \|_{\op}  =\OP\bigg(  \m^{-\alpha } +  \sqrt{ \frac{( \m^{d_1}+\l^{d_2}) \log(N)}{N}} \bigg)$. 
 In addition, since $A^* $ has rank $\R$, it follows that $A^*\times_y \mathcal P_S$ has at most rank $\R$. So $\sigma_{\R+1} (A^*\times_y \mathcal P_\mathcal S ) = 0$. Consequently,
\begin{align*}
\sigma_\R(A^*\times_y \mathcal P_\mathcal S )   - \sigma_{\R+1} (A^*\times_y \mathcal P_\mathcal S )-\|    \widehat A\timesML - A^*\timesS  \|_{\op}  
\\
\ge 
\sigma_\R(A^*)  /2-\|    \widehat A\timesML - A^*\timesS  \|_{\op}\ge \sigma_\R(A^*)  /4
\end{align*}
where  the first inequality follows from   \Cref{lemma:general sketching consistency}, and the last inequality follows from \eqref{eq:eigenvalue low bound} in \Cref{lemma:general projection deviation main} and \eqref{eq:sketched operator norm bound}. Therefore
\begin{align} 
\| \widehat {\mathcal P} _x - \mathcal P^*_x \|_{\op}^2  =\OP  \bigg\{  \frac{1}{\sigma_r^2}  \bigg(  \m^{-2\alpha } +   \frac{ (  \m^{d_1}+    \l^{d_2} ) \log(N) }{N}     \bigg)  \bigg\}   . 
\end{align}
Since $\widehat {\mathcal P}_x$ and $\mathcal P^*_x$ are both rank $\R$, it follows that $\widehat {\mathcal P} _x - \mathcal P^*_x $ is at most rank $2\R$. Therefore $ \| \widehat {\mathcal P} _x - \mathcal P^*_x \|_{\lt} \le 
\sqrt {2r}\| \widehat {\mathcal P} _x - \mathcal P^*_x \|_{\op}$. This  immediately leads to the desired result. 
\end{proof}

\begin{lemma} \label{lemma:general sketching consistency} Suppose all the conditions in \Cref{lemma:general projection deviation main}  hold.  Then
$$  {\range} _x( A^ * \times_y \mathcal P_\mathcal S)  = { \range } _x( A^*), $$
and that  $   \sigma_\R (A^* \timesL )  > \sigma_\R   (A^*)/2$.
\end{lemma}
 
 \begin{proof}[Proof of \Cref{lemma:general sketching consistency}]
By  \Cref{lemma:SVD pertubation for operators} in \Cref{appendix:Perturbation bounds} and \Cref{theorem: approximation theory},  the singular values
  $\{ \sigma_\ri (A^* \times_y \mathcal P_\mathcal S  ) \}_{\ri=1}^\infty $ of the  function  $A^* \times_y \mathcal P_\mathcal S (x, y) $  satisfies  
$$|  \sigma_\ri  (A^*)  -  \sigma_\ri (A^* \timesS  )    |\le 
\| A ^*- A^* \times_y \mathcal P_\mathcal S \| _{\lt(\Omega_1 \times \Omega_2) } =\bigO(\l^{-\alpha}  )  \text{ for all $1\le  \ri <\infty$}.  $$
As a result if
$ \sigma_\R  (A^*)  >C_L \l^{-\alpha}   $ for sufficiently large constant $C_L$, then 
\begin{align}\label{eq:lower bound of spectral value after sketching} \sigma_1(A^* \timesL ) \ge  \ldots  \ge  \sigma_\R (A^* \timesL )  \ge  \sigma_\R (A^*)  -\| A ^*- A^* \times_y \mathcal P_\mathcal L \| _{\lt(\Omega_1 \times \Omega_2) }   > \sigma_\R   (A^*)/2 . 
\end{align} 
Therefore, the leading  $\R$ singular values of $ A^* \timesL  $ is positive. So the rank of $A^*\timesL$ is at least $r$, and therefore the dimensionality of $\range_x(A^*\timesL)$ is at least $r$.

Since by construction,  $\range_x( A^* \timesL  ) \subset  \range_x(A^*)  $.  Since in \Cref{lemma:general projection deviation main} we assume that the dimensionality of $\range_x(A^*)$ is   $r$, it follows that     $ \range_x( A^* \timesL ) = \range_x(A^*)$.   
\end{proof}

\section{Proofs Related to \Cref{theorem:density bound main}}
 \label{section:proof of main result}
\begin{assumption} \label{assume: baised in projected space in operator norm}
Let $\{ \phi_{\mi}   \}_{\mi=1}^\infty  $ be a collection of basis functions  in $\lt(\mathcal O)$ generated from either   the reproducing kernel Hilbert spaces or the Legendre polynomials system.
\end{assumption} 
  For $j  \in \{ 1,\ldots,d\}$, let $\m \in \mathbb N  ,  \l_j \in  \mathbb N$, $z_j \in \mathbb R $  and denote 
 \begin{align} \label{eq:projection basis for tensor estimation}
 \mathcal M_j &=\text{span} \bigg\{\phi_{\mi }(z_j )   \bigg \}_{{\mi=1} }^{\m } \quad \text{and} \quad 
\\\label{eq:projection basis for tensor estimation 2}
\mathcal S_j  &= \text{span}\bigg\{  \phi _{\li_1 }(z_1)\cdots  \phi _{\li _{j-1} }(z_{j-1}) \phi _{\li _{j+1} }(z_{j+1}) \cdots  \phi _{\li_{d}  }(z_{d})  \bigg\}_{\li _1, \ldots,\li _{j-1}, \li_{j+1}, \ldots, \li_{d}=1}^{\l_j}.
\end{align} 

\begin{assumption} \label{assumption:regularity of density}
    Suppose that  the data $\{ \BZ_i\}_{i=1}^N\subset \mathcal O  ^{d}$ are  independently sampled from the density $A^* : \mathcal O   ^{d} \to \mathbb R ^+ $. Suppose in addition that   $\|A^*\|_{W^\alpha_2 (\mathcal O ^d)} < \infty  $ with $\alpha\ge 1$  and that  $\|A^*\|_\infty <\infty$.  
\end{assumption}
For $j=1,\ldots, d$, denote  
$y= (z_1,\ldots, z_{j-1}, z_{j+1}, \ldots, z_d)$, and 
\begin{align}\label{eq:singular values after reshape} \sigma_{j,\ri }^* = \sigma_{\ri} ( A^*(z_j, y ))   ,
\end{align}
where
 $ \sigma_{\ri} ( A^*(z_j, y ))  $  denote the $\ri$-th singular value of the two-variable function $A^* (z_j, y)$.

\subsection{Bias from the projection}\label{section:example of bias}

\begin{theorem} \label{theorem: approximation theory}
For $j\in \{ 1,\ldots, d\}$, let $\mathcal M_j$ and $\mathcal S_j$  be defined as in  \eqref{eq:projection basis for tensor estimation} and \eqref{eq:projection basis for tensor estimation 2} respectively.   Suppose that \Cref{assume: baised in projected space in operator norm} and \Cref{assumption:regularity of density} hold. Denote $x=z_j$ and $y=( z_1,\ldots, z_{j-1}, z_{j+1},\ldots, z_d ).  $ Then
\begin{align}
\label{eq:general bias 2}
  &\|A^* - A^* \times_y \mathcal P _{\mathcal S_j  } \|^2_{\lt(\mathcal O^d) }     =   \bigO (  \l^{-2\alpha }  )  
,\quad \text{and} 
\\
\label{eq:general bias 1}
&\|A ^* - A^* \times_x \mathcal P _{\mathcal M_j}  \times_y \mathcal P _{ \mathcal S_j } \|^2_{\lt( \mathcal O^d ) }    = \bigO  ( \m^{-2\alpha}+  \l^{-2\alpha }) 
%\label{eq:general bias 3}
%&\| A^* \times_y \mathcal P _{\mathcal S_j }  -A^* \times_x \mathcal P _{\mathcal M_j}  \times_y \mathcal P _{ \mathcal S_j}\|^2_{\lt(\mathcal O^d )}   = \bigO (    \m^{-2\alpha }  ).
\end{align}
\end{theorem}
\begin{proof}
The proof of  \Cref{theorem: approximation theory} is detailed  in \Cref{section: justify approximation theory}.
\end{proof}

\begin{lemma} \label{lemma:singular gap of population coefficient matrix} Let  $x=z_j$ and  $y= (z_1,\ldots,z_{j-1}, z_{j+1}, \ldots, z_{d} )$.   Then there exists an absolute constant $C$ such that 
$$ \sigma_\R ( \mathcal C ( A^* \times _x  \mathcal P_{\mathcal M_j }  \times_y \mathcal P_{\mathcal M^{\otimes d-1}} )  ) \ge \sigma^*_{j,\R}  -Cm^{-\alpha}  .$$ 
 Here, $  \sigma_{j,\ri }^*$  is defined in \eqref{eq:singular values after reshape} .
 \end{lemma}
\begin{proof} By symmetry, it suffices to consider 
$x=z_1$ and $y= ( z_{2}, \ldots, z_{d} )$. For brevity, denote
$$\mathcal M =\mathcal M_1 \quad \text{and} \quad \mathcal M^{\otimes d-1} = \mathcal M_2 \otimes \cdots \otimes \mathcal M_d.$$
Note that $  \mathcal C ( A^* \times _x \mathcal P_{\mathcal M }  \times_y \mathcal P_{\mathcal M^{\otimes d-1}} ) \in \mathbb R^{m\times m^{d-1}}$.  By \Cref{corollary:singular value preserving}, $ \sigma_\R ( \mathcal C ( A^* \times _x \mathcal P_{\mathcal M }  \times_y \mathcal P_{\mathcal M^{\otimes d-1}} )  ) =\sigma_\R ( A^* \times _x \mathcal P_{\mathcal M }  \times_y \mathcal P_{\mathcal M^{\otimes d-1}} ).$ 
Therefore  
\begin{align*} | \sigma_\R ( A^* \times _x \mathcal P_{\mathcal M }  \times_y \mathcal P_{\mathcal M^{\otimes d-1}} ) - \sigma_r (A^*(x, y))  |  \le   \|  A^* \times_x \mathcal P _{\mathcal M}  \times_y \mathcal P _{ \mathcal M^{\otimes d-1}}- A^* \|^2_{\op}
\\
\le   \|  A^* \times_x \mathcal P _{\mathcal M}  \times_y \mathcal P _{ \mathcal M^{\otimes d-1} }- A^* \|^2_{\lt(\mathcal O^d )} = \bigO(m^{-2\alpha }),
\end{align*}
where the  first inequality follows from \Cref{lemma:SVD pertubation for operators}, and the  equality follows from  \eqref{eq:general bias 1} with $\mathcal S_j =\mathcal M^{\otimes d-1}$. The desired result follows from the  definition in \eqref{eq:singular values after reshape} that 
$ \sigma_r (A^*(x, y)) = \sigma_{1,\R}^*$.
\end{proof}

\subsection{Consistency of VRS }

For analysis convenient, we can rewrite our main algorithm into  \Cref{algorithm:analysis version of general svd version 2}.
\begin{algorithm}[h!]
\begin{algorithmic} 
	\INPUT Data $\{\BZ_i\}_{i=1}^N$, parameters $\{ {\R}_j \}_{j=1}^d\subset  \mathbb Z^+$, subspaces $\mathcal M_j$ as in \eqref{eq:projection basis for tensor estimation} and subspaces $ \{ \mathcal S_j\} _{j=1}^d $ as in \eqref{eq:projection basis for tensor estimation 2}. 
 \State Set empirical measures  $\widehat   A  =\frac{1}{N/2 }\sum_{i=1}^{N/2} \delta_{\BZ_i}   $, and   $\widehat   A'  =\frac{1}{N/2 }\sum_{i=N/2 +1}^{N } \delta_{\BZ_i}   $ 	 
 	 \State [{\bf  Subspaces estimation via Sketching}]
 	\For {$ j  \in \{ 1,  \ldots,   d  \}$} 
    \State Set   $y=(z_1,\ldots, z_{j-1}, z_{j+1},\ldots, z_d)$.
 Compute 
     \begin{align}\label{eq:algorithm 2 first svd}  \mathcal C (\widehat A\times_j\mathcal P_{\mathcal M_j}\times _y \mathcal P_{\mathcal S_j})  \in \mathbb R^{m\times \ell_j^{d-1}}.
     \end{align}
     \State Use SVD to compute  $\widehat U_j \in \mathbb O^{m\times r_j}$, where columns of $ \widehat U_j$ are the leading $r_j$ left  singular vectors of $ \mathcal C (\widehat A\times_j\mathcal P_{\mathcal M_j}\times _y \mathcal P_{\mathcal S_j}) $. Set $    \mathcal P  _{ j} ^U = \widehat U_j \widehat U_j^\top  \in \mathbb R^{m\times m} $.
 	\EndFor
  \vskip 0.5cm
  \State [{\bf  Sketching using  estimated subspaces}]
   \State Set
   $  \widehat {\mathcal C} = \mathcal C( \widehat A\times _1 \mathcal P_{\mathcal M_1} \times \cdots  \times _d \mathcal P_{\mathcal M_d}) \in \mathbb R^{m^d} $.
    \For {$ j  \in \{ 1,  \ldots,   d  \}$}  
 
     \State  Compute 
     $$ \widehat {\mathcal B}_j=  \widehat {\mathcal C}\times_1   {\mathcal P}_1 ^U \times \cdots \times_{j-1}   {\mathcal P}_{j-1}^U  \times_{j+1}   {\mathcal P}_{j+1} ^U \times \cdots \times_d     {\mathcal P}_d^U \in \mathbb R^{m^d}.$$
     \State Use SVD to compute  $\widehat V_j \in \mathbb O^{m\times r_j}$, where  the  columns of $\widehat V_j$ are the leading $r_j$ left  singular vectors of the matrix $\widehat {\mathcal B}_j (\mi_j; \mi_1, \ldots, \mi_{j-1},\mi_{j+1},\ldots,\mi_d) \in \mathbb R^{m\times m^{d-1}}$. Set $    \mathcal P  _{ j} ^V = \widehat V_j \widehat V_j^\top  \in \mathbb R^{m\times m} $.
     \EndFor
     \State  Set \begin{align} \label{eq:algorithm 2 second coefficient matrix}\widehat {\mathcal C}' = \mathcal C( \widehat A '\times _1 \mathcal P_{\mathcal M_1} \times \cdots  \times _d \mathcal P_{\mathcal M_d}) \in \mathbb R^{m^d}. \end{align}  Compute the tensor 
     $$\widetilde  {\mathcal C} = \widehat {\mathcal C}'\times_1   {\mathcal P}_1 ^V \times \cdots   \times_d     {\mathcal P}_d^V \in \mathbb R^{m^d}  . $$
   \State  Compute the estimated   density function 
    $$   \widetilde  A (z_1,\ldots,z_d) =  \sum_{\mi_1=1}^{{m}_1}\cdots \sum_{\mi_d=1}^{{m}_d} \widetilde {\mathcal  C}_{\mi _1,\ldots,\mi _d}  \phi _{1,\ri_1}(z_1)\cdots  \phi _{d,\ri_d}(z_d) .$$  
	\OUTPUT   The  estimated  density function $\widetilde A(z_1,\ldots,z_d)$

 \caption{Multivariable Density  Estimation via Variance-Reduced Sketching}
\label{algorithm:analysis version of general svd version 2}
\end{algorithmic}
\end{algorithm}   

\begin{remark} We show that    \Cref{algorithm:analysis version of general svd version 1}   and \Cref{algorithm:analysis version of general svd version 2} are numerically identical. In other words, \Cref{algorithm:analysis version of general svd version 1} and \Cref{algorithm:analysis version of general svd version 2}  are    different interpretations of the same algorithm. 
\\
\\
Note that   $\mathcal P^{\Phi}_j (z_j, z_j')   $ corresponds to the 
        leading $\R_j$ singular functions in the variable $z_j$ of     $     \widehat A\times_{z_j} \mathcal P_{\mathcal M_j}\times _y \mathcal P_{\mathcal S_j} (z_j, y)  . $   In addition,  $\mathcal P  _{ j} ^U   $ corresponds to the  leading $\R_j$ left  singular vectors of $ \mathcal C (\widehat A\times_{z_j}\mathcal P_{\mathcal M_j}\times _y \mathcal P_{\mathcal S_j}) $.  Therefore  $\mathcal P  _{ j} ^U   $ is the coefficient matrix of $\mathcal P^{\Phi}_j (z_j, z_j')$ with respect to the basis $\{\phi_{\mi}\}_{\mi=1}^{m}$. In other words 
       \begin{align}\label{eq:equivalence of two algorithms 1} \C(\mathcal P^{\Phi}_j  ) = \mathcal P  _{ j} ^U. 
       \end{align}
 \\
 Observe that 
  the range of $ \mathcal P_{\mathcal M_j} $  contains $ \mathcal P^{\Phi}_{j} $, so $$\mathcal P _{  \M_j}   \mathcal P^{\Phi}_{j}    = \mathcal P^{\Phi}_{j} .  $$
Note that $\mathcal P^{\Psi}_j (z_j , z_j')   $
      corresponds to  the leading $\R_j$ singular functions in the variable $z_j$ of     \begin{align}
         \nonumber &  \widehat A\times_{ j}\mathcal P_{\mathcal M_j}\times _1   \mathcal P_1^{\Phi} \times \cdots \times_{j-1}  \mathcal P_{j-1} ^{\Phi}\times_{j+1}  \mathcal P_{j+1} ^{\Phi} \cdots \times_d  \mathcal P_{d} ^{\Phi} 
         \\ \nonumber 
         = &   \widehat A\times_{ j}\mathcal P_{\mathcal M_j}\times _1  \mathcal P _{  \M_1}  \mathcal P_1^{\Phi} \times \cdots  \times_{j-1}  \mathcal P _{  \M_{j-1}}  \mathcal P_{j-1} ^{\Phi}\times_{j+1} \mathcal P _{  \M_{j+1}}  \mathcal P_{j+1} ^{\Phi} \cdots \times_d  \mathcal P _{  \M_{d}}  \mathcal P_{d} ^{\Phi} 
         \\ \label{eq:equivalence of two algorithms 2}
         = & \big(  \widehat A\times_{ 1}\mathcal P_{\mathcal M_1}\times  \cdots \times_{ d}\mathcal P_{\mathcal M_d}     \big)  \times _1   \mathcal P_1^{\Phi} \times \cdots \times_{j-1}  \mathcal P_{j-1} ^{\Phi}\times_{j+1}  \mathcal P_{j+1} ^{\Phi} \cdots \times_d  \mathcal P_{d} ^{\Phi}   .
     \end{align}   
     As a result,  by \eqref{eq:equivalence of two algorithms 1} and \eqref{eq:equivalence of two algorithms 2},
    \begin{align}
         \nonumber &  \mathcal C( \widehat A\times_{ j}\mathcal P_{\mathcal M_j}\times _1   \mathcal P_1^{\Phi} \times \cdots \times_{j-1}  \mathcal P_{j-1} ^{\Phi}\times_{j+1}  \mathcal P_{j+1} ^{\Phi} \cdots \times_d  \mathcal P_{d} ^{\Phi} )
         \\
         = & \nonumber
         \mathcal C \big(  \widehat A\times_{ 1}\mathcal P_{\mathcal M_1}\times  \cdots \times_{ d}\mathcal P_{\mathcal M_d}     \big)  
         \times _1   \mathcal C  ( \mathcal P_1^{\Phi} )  \times \cdots \times_{j-1}    \mathcal C  (  \mathcal P_{j-1} ^{\Phi} ) \times_{j+1}    \mathcal C  ( 
        \mathcal P_{j+1} ^{\Phi} )  \cdots \times_d   \mathcal C  (   \mathcal P_{d} ^{\Phi}  )
         \\ \nonumber 
         = & \mathcal C  (  \widehat A\times_{ 1}\mathcal P_{\mathcal M_1}\times  \cdots \times_{ d}\mathcal P_{\mathcal M_d}      )  \times _1     \mathcal P  _{ 1} ^U  \times \cdots \times_{j-1}  \mathcal P  _{ j-1} ^U \times_{j+1} \mathcal P  _{ j+1} ^U \cdots \times_d  \mathcal P  _{ d} ^U  .
     \end{align}     
     So $\widehat {\mathcal B}_j$ is the coefficient matrix of $\widehat B_j$, where 
     $\widehat B_j$ is defined in \Cref{algorithm:analysis version of general svd version 1}
     and  $\widehat {\mathcal B}_j$ is defined in  \Cref{algorithm:analysis version of general svd version 2}.
      Consequently,    $\mathcal P  _{ j} ^V   $ is the coefficient matrix of $\mathcal P^{\Psi}_j (z_j, z_j')$ with respect to the basis $\{\phi_{\mi}\}_{\mi=1}^{m}$. In other words 
        $$ \C(\mathcal P^{\Psi}_j  ) = \mathcal P  _{ j} ^V. $$
        Therefore $\widetilde \C$ in \Cref{algorithm:analysis version of general svd version 2} is the coefficient tensor of $\widetilde A$ in \Cref{algorithm:analysis version of general svd version 1}, and thus \Cref{algorithm:analysis version of general svd version 1}   and \Cref{algorithm:analysis version of general svd version 2} are numerically identical
\end{remark}

\begin{definition}[Range and rank of multivariable functions] \label{definition:function range and rank}
    Let $B(z_1, \ldots, z_d) \in \lt(\mathcal O^d)$. For $j\in \{ 1,\ldots,d\} $, let $y=(z_1,\ldots, z_{j-1}, z_{j+1},\ldots, z_d)$. Define
    $$ {\range}_j(B) = \bigg \{ \int_{O^{d-1}} B(z_j, y )g(y) dy: \text{  $g(y)$ is any function in $\lt(\mathcal O^{d-1})$}   \bigg\}. $$
    Therefore ${\range}_j(B) $ is a linear subspace in $\lt(\mathcal O )$.
    In addition, define $\rank_j(B)  $   the dimensionality of $\range_j(B) $.  
\end{definition}
Denote 
\begin{align}\label{eq:definition of low rank bias} \xiapp = \min_{B\in \lt(\mathcal O^d): {\rank}_j(B) \le \R_j}\{ \| A^* - B \|_{\lt(\mathcal O^d)}  \}.
\end{align}
In other words, $\xiapp  $ is the error of the best  rank-$(r_1,\ldots, r_d)$ approximation of the density $A^* $ in $\lt(\mathcal O^d)$.

 \begin{proof}[Proof of \Cref{theorem:density bound main}]
    Let $\widetilde  {\mathcal C} = \widehat {\mathcal C}'\times_1   {\mathcal P}_1 ^V \times \cdots   \times_d     {\mathcal P}_d^V \in \mathbb R^{m^d}   $ be as in \Cref{algorithm:analysis version of general svd version 2}. It follows that 
    $$\C(\widetilde A) = \widetilde  \C, $$
    where $\C(\widetilde A)$ denotes the coefficient tensor of $\widetilde A$ corresponding to the basis functions $\{\phi_\mi\}_{\mi=1}^\infty$ as in \Cref{definition:coefficient tensor}. Then 
    \begin{align} \nonumber 
         \|  \widetilde A  - A  ^* \|  _{\lt(\mathcal O  ^d)}  
         \le &\|\widetilde A -A^* {\times_1\mathcal P_{\mathcal M_1} \times \cdots\times_d\mathcal P_{\mathcal M_d}  }    \|_{\lt(\mathcal O  ^d)}  + \| A^* {\times_1\mathcal P_{\mathcal M_1} \times \cdots\times_d\mathcal P_{\mathcal M_d}  }   -A^*\| _{\lt(\mathcal O  ^d)} .
    \end{align}
    Note that with high probability, 
    \begin{align*}
        \|\widetilde A -A^* {\times_1\mathcal P_{\mathcal M_1} \times \cdots\times_d\mathcal P_{\mathcal M_d}  }    \|_{\lt(\mathcal O  ^d)} =  \| \widetilde \C - \C (A^* {\times_1\mathcal P_{\mathcal M_1} \times \cdots\times_d\mathcal P_{\mathcal M_d}  }) \|_F 
       \\
       \le   C _1\bigg( \sqrt{ \frac{m\R_1\R_2\cdots \R_d}{N}}  +  \xiapp + m^{-\alpha}\bigg),
    \end{align*}
    where the equality follows from \Cref{lemma:coefficient tensor distance preserving}, and the inequality follows from
    \Cref{lemma:variance of tensor density}. 
    In addition  by \Cref{theorem: approximation theory}, there exists an absolute constant $C_2$ such that 
    $$ \| A^* {\times_1\mathcal P_{\mathcal M_1} \times \cdots\times_d\mathcal P_{\mathcal M_d}  }   -A^*\| _{\lt(\mathcal O  ^d)}  \le C_2 m^{-\alpha} .$$
    Therefore, with high probability, 
    \begin{align} \nonumber 
         \|  \widetilde A  - A  ^* \|  _{\lt(\mathcal O  ^d)}  
         \le C _3\bigg( \sqrt{ \frac{m\R_1\R_2\cdots \R_d}{N}}  +  \xiapp + m^{-\alpha}\bigg).
         \end{align}
         The desired result follows from the fact that 
         $ \m \asymp  N^{1/(2\alpha+1)} .$
 \end{proof}

 \begin{lemma} \label{lemma:variance of tensor density}
     Suppose all  the conditions as in \Cref{theorem:density bound main} hold.  Let 
     \begin{align*}  \widehat {\mathcal C}' = \mathcal C( \widehat A '\times _1 \mathcal P_{\mathcal M_1} \times \cdots  \times _d \mathcal P_{\mathcal M_d}) \in \mathbb R^{m^d}  \end{align*}   as in \eqref{eq:algorithm 2 second coefficient matrix}, and  $\widetilde  {\mathcal C} = \widehat {\mathcal C}'\times_1   {\mathcal P}_1 ^V \times \cdots   \times_d     {\mathcal P}_d^V \in \mathbb R^{m^d}   $ as in \Cref{algorithm:analysis version of general svd version 2}. Then with high probability, 
     $$ \|  \widetilde \C - \mathcal C( A^* {\times_1\mathcal P_{\mathcal M_1} \times \cdots\times_d\mathcal P_{\mathcal M_d}  } )\|_F \le C \bigg( \sqrt{ \frac{m\R_1\R_2\cdots \R_d}{N}}  +  \xiapp + m^{-\alpha}\bigg) , $$
     Here $C$ is some absolute constant independent of $N, m$, and $\{\r_j\}_{j=1}^d$, and $ \mathcal C(A^*)$ is the coefficient tensor of $A^*$ corresponding to the basis $\{ \phi_k\}_{k=1}^m$. The definition of coefficient tensor can be found  in \Cref{definition:coefficient tensor}.
 \end{lemma}
\begin{proof}
    For convenience, denote $\mathcal C^* = \mathcal C(A^*{\times_1\mathcal P_{\mathcal M_1} \times \cdots\times_d\mathcal P_{\mathcal M_d}  }) \in \mathbb R^{m^d}$.
    Note that 
    \begin{align} \nonumber 
       & \| \widetilde \C -  \mathcal C^* \|_F =  \| \widehat {\mathcal C}'\times_1   {\mathcal P}_1 ^V \times \cdots   \times_d     {\mathcal P}_d^V -\mathcal C^* \|_F 
        \\ \label{eq:coefficient tensor deviation bound one}
        \le & \| \mathcal C^* - \C^*\times_1   {\mathcal P}_1 ^V \times \cdots   \times_d     {\mathcal P}_d^V \| _F +
        \| (\C^*-\widehat \C ') \times_1   {\mathcal P}_1 ^V \times \cdots   \times_d     {\mathcal P}_d^V  \| _F .
    \end{align}
    \
    \\
We bound $\| \mathcal C^* - \C^*\times_1   {\mathcal P}_1 ^V \times \cdots   \times_d     {\mathcal P}_d^V \| _F$ in {\bf Step 1}  to {\bf Step 4}, and bound  $  \| (\C^*-\widehat \C') \times_1   {\mathcal P}_1 ^V \times \cdots   \times_d     {\mathcal P}_d^V  \| _F $ in {\bf Step 5}.
\\
\\
{\bf Step 1.}
Let columns of  $ \widehat V_j \in \mathbb O^{m\times r_j} $   correspond  to  the leading $\R_j$ left  singular vectors of the matrix $\widehat {\mathcal B}_j (\mi_j; \mi_1, \ldots, \mi_{j-1},\mi_{j+1},\ldots,\mi_d) \in \mathbb R^{m\times m^{d-1}}$, and $ \widehat V_{j \perp} \in \mathbb O^{m\times (m-r_j)}$ correspond  to    the rest  $m-\R_j$  singular vectors of the same matrix. It follows that   
$$\widehat V_{j\perp}    \widehat V _{j\perp} ^\top  =  I_m -  \widehat V_{j }    \widehat V _{j } ^\top  =  I_m -  \mathcal P^V_j      .$$
Therefore 
\begin{align} \nonumber 
     & \| \mathcal C^* - \C^*\times_1   {\mathcal P}_1 ^V \times \cdots   \times_d     {\mathcal P}_d^V \| _F 
     \\ \nonumber 
     = & \|  \C^* \times_1 \widehat V_{1\perp}    \widehat V _{1\perp} ^ \top  + \C^* \times_1   {\mathcal P}_1 ^V   \times_2  \widehat V_{2\perp}    \widehat V _{2\perp} ^\top   + \cdots + \C^* \times_1   {\mathcal P}_1 ^V   \times\cdots \times_{d-1}  {\mathcal P}_{d-1} ^V   \times_d  \widehat V_{d\perp}    \widehat V _{d\perp} ^\top \|_F 
     \\ \nonumber 
     \le & \|  \C^* \times_1 \widehat V_{1\perp}    \widehat V _{1\perp}^\top \|_F    + \| \C^* \times_1   {\mathcal P}_1 ^V   \times_2  \widehat V_{2\perp}    \widehat V _{2\perp}  ^\top \| _F  + \cdots + \|  \C^* {\times_1   {\mathcal P}_1 ^V   \times\cdots \times _{d-1}  {\mathcal P}_{d-1} ^V    \times_d  \widehat V_{d\perp}    \widehat V _{d\perp}^\top  } \|_F  
     \\ \nonumber 
     \le & \|  \C^* \times_1 \widehat V_{1\perp}    \widehat V _{1\perp}^\top  \|_F    + \| \C^*    \times_2  \widehat V_{2\perp}    \widehat V _{2\perp} ^\top   \| _F  \|     {\mathcal P}_1 ^V\|_{\op} + \cdots + \|  \C^*  \times_d  \widehat V_{d\perp}    \widehat V _{d\perp} ^\top   \|_F   \|   {\mathcal P}_1 ^V   \|_{\op } \cdots  \|{\mathcal P}_{d-1} ^V  \| _{\op}  
     \\ \label{eq:tensor estimation term one bound one}
     \le & \sum_{j=1}^d \|  \C^* \times_j \widehat V_{j\perp}    \widehat V _{j\perp}^\top \|_F  ,
\end{align}
    where the last inequality follows from the fact that projection matrices have operator norm 1.
    \\
    \\
    {\bf Step 2.} We proceed to bound $ \|  \C^* \times_1 \widehat V_{1\perp}    \widehat V _{1\perp} ^\top  \|_F$, as $ \|  \C^* \times_1 \widehat V_{j\perp}    \widehat V _{j\perp} ^\top \|_F$  can be bounded by the exact same calculations. 
    \\
    For $j\in \{1,\ldots,d \}$, let  $  U^* _j \in \mathbb R^{m\times \R_j}$ denote the  matrix  with columns being the leading $\R_j$ singular vectors of $\C^*(\mi_j;\mi_1,\ldots, \mi_{j-1}, \mi_{j+1},\ldots, \mi_d)  \in \mathbb R^{m\times m^{d-1}}$. 
    Let $\widehat U_j \in \mathbb R^{m\times \R_j} $ be the the matrix  with  columns being the leading $\R_j$ singular vectors of  $\mathcal C (\widehat A\times_j \mathcal P_{\mathcal M_j}\times _y \mathcal P_{\mathcal S_j})   $. Here $\mathcal C (\widehat A\times_j \mathcal P_{\mathcal M_j}\times _y \mathcal P_{\mathcal S_j})   $ is defined in 
    \eqref{eq:algorithm 2 first svd}.
    Then 
    \begin{align} \nonumber 
        &\|  \C^* \times_1 \widehat V_{1\perp}    \widehat V _{1\perp}^\top  \|_F 
        \\
        \le & \label{eq:tensor main result term 12}
        \|  \C^* \times_1 \widehat V_{1\perp} 
        \widehat V _{1\perp} ^\top  \times_2   U^*_2  U^{*\top} _2    \|_F
        \\
        + & \label{eq:tensor main result term 12 complement} \|  \C^* \times_1 \widehat V_{1\perp}   
        \widehat V _{1\perp} ^\top \times_2 ( I_m- U^*_2  U^{*\top} _2 )  \|_F.
    \end{align}
    Note that 
   \begin{align} \label{eq:tensor main result term 12 complement bound}
       \eqref{eq:tensor main result term 12 complement}
        \le \|  \C^*  {\times_2 ( I_m- U^*_2  U^{*\top} _2 ) } \|_F 
        \|\widehat V_{1\perp}   
        \widehat V _{1\perp} ^\top \| _{\op}
          \le 
        \xiapp +Cm^{-\alpha},
   \end{align}
   where the last inequality follows from  \Cref{lemma:mirky projection bound 1} and the fact that $\|\widehat V_{1\perp} 
        \widehat V _{1\perp} ^\top \| _{\op}=1$. 
\\
In addition, by applying 
  \Cref{eq:subspace consistency to matrix consistency} to the matrix $\C^* \times_1 \widehat V_{1\perp} 
        \widehat V _{1\perp}  ^\top \times_2   U^*_2  U^{*\top} _2   (\mi_2;\mi_1,\mi_3,\ldots, \mi_d)$, it follows that 
\begin{align*}
   \eqref{eq:tensor main result term 12}
        \le \sigma_{\min}^{-1}( \widehat U_2 ^\top  U^{*  }_2  ) \| \C^* \times_1 \widehat V_{1\perp} 
        \widehat V _{1\perp}^\top  \times_2   U^*_2  U^{*\top} _2  \widehat U _2  \widehat U^{\top} _2   \| _F
\end{align*}  
By  \Cref{lemma:consistency of subspace in step 1}, we have that 
  $ \sigma_{\min}^{ -1}( \widehat U_2 ^\top  U^{*  }_2   ) =\bigO_\p(1).$
Consequently, there exists a constant $C_1$ such that  with high probability,
\begin{align} \nonumber 
     \eqref{eq:tensor main result term 12} \le& C_1\| \C^* \times_1 \widehat V_{1\perp} 
        \widehat V _{1\perp} ^\top \times_2   U^*_2  U^{*\top} _2  \widehat U _2  \widehat U^{\top} _2   \| _F 
      \\ \nonumber 
      \le & C_1  \|\C^* \times_1 \widehat V_{1\perp} 
        \widehat V _{1\perp} ^\top \times_2    \widehat U _2  \widehat U^{\top} _2   \|_F + C_1\| \C^* \times_1 \widehat V_{1\perp} 
        \widehat V _{1\perp} ^\top  \times_2   (I_m-U^*_2  U^{*\top} _2)  \widehat U _2  \widehat U^{\top} _2  \|_F 
        \\ \nonumber 
        \le & C_1 \|\C^* \times_1 \widehat V_{1\perp} 
        \widehat V _{1\perp}^\top  \times_2    \widehat U _2  \widehat U^{\top} _2   \|_F  + C_1\| \C^* \times_1 \widehat V_{1\perp} 
        \widehat V _{1\perp} ^\top  \times_2   (I_m-U^*_2  U^{*\top} _2)   \|_F \|\widehat U _2  \widehat U^{\top} _2 \|_{\op}
        \\  \label{eq:tensor main result term 12 bound}
        \le &  C_1 \|\C^* \times_1 \widehat V_{1\perp} 
        \widehat V _{1\perp} ^\top \times_2    \widehat U _2  \widehat U^{\top} _2   \|_F  +C_1'( \xiapp + m^{-\alpha}),
\end{align}
   where the last inequality follows from \eqref{eq:tensor main result term 12 complement bound} and $\| \widehat U _2  \widehat U^{\top} _2\|_{\op} =1$.
   As a result
   \begin{align}\label{eq:eq:tensor main result term 1 bound}
       \|  \C^* \times_1 \widehat V_{1\perp}    \widehat V _{1\perp} ^\top \|_F  \le C_1 \|\C^* \times_1 \widehat V_{1\perp} 
        \widehat V _{1\perp} ^\top  \times_2    \widehat U _2  \widehat U^{\top} _2   \|_F  +C_1'( \xiapp + m^{-\alpha}).
   \end{align}
 {\bf Step 3.} We bound $ \|\C^* \times_1 \widehat V_{1\perp} 
        \widehat V _{1\perp} ^\top   \times_2    \widehat U _2  \widehat U^{\top} _2   \|_F $ in this step. Observe that
  \begin{align} \nonumber 
        &\|  \C^* \times_1 \widehat V_{1\perp}    \widehat V _{1\perp} ^\top \times_2    \widehat U _2  \widehat U^{\top} _2   \|_F 
        \\
        \le & \label{eq:tensor main result term 22}
        \|  \C^* \times_1 \widehat V_{1\perp} 
        \widehat V _{1\perp} ^\top  \times_2    \widehat U _2  \widehat U^{\top} _2   \times_3   U^*_3  U^{*\top} _3    \|_F
        \\
        + & \label{eq:tensor main result term 22 complement} \|  \C^* \times_1 \widehat V_{1\perp}   
        \widehat V _{1\perp} ^\top \times_2    \widehat U _2  \widehat U^{\top} _2   \times_3 ( I_m- U^*_3  U^{*\top} _3 )  \|_F.
    \end{align}
Note that 
   \begin{align} \label{eq:tensor main result term 22 complement bound}
       \eqref{eq:tensor main result term 22 complement}
        \le \|  \C^*  {\times_3 ( I_m- U^*_3  U^{*\top} _3 ) } \|_F \|\widehat V_{1\perp}   
        \widehat V _{1\perp} ^\top  \| _{\op} \|    \widehat U _2  \widehat U^{\top} _2\| _{\op}
          \le 
        \xiapp +Cm^{-\alpha},
   \end{align}
   where the last inequality follows from \Cref{lemma:mirky projection bound 1}.
   \\
In addition, by applying 
  \Cref{eq:subspace consistency to matrix consistency} to  $\C^* \times_1 \widehat V_{1\perp} 
        \widehat V _{1\perp}^\top  \times_2    \widehat U _2  \widehat U^{\top} _2 \times_3 U^*_3  U^{*\top} _3   (\mi_3;\mi_1,\mi_2,\ldots, \mi_d)$, it follows that
\begin{align*}
   \eqref{eq:tensor main result term 22}
        \le \sigma_{\min}^{-1}( \widehat U_3 ^\top  U^{*  }_3  ) \| \C^* \times_1 \widehat V_{1\perp} 
        \widehat V _{1\perp}  \times_2     \widehat U _2  \widehat U^{\top} _2 \times_3 U^*_3  U^{*\top} _3 \widehat U _3  \widehat U^{\top} _3 \| _F
\end{align*}  
 By  \Cref{lemma:consistency of subspace in step 1}, we have that 
  $ \sigma_{\min}^{ -1}(   \widehat U_3 ^\top  U^{*  }_3  ) =\bigO_\p(1).$
Consequently, similar to \eqref{eq:tensor main result term 12 bound}, there exists a constant $C_2$ such that  with high probability,
\begin{align} 
     \eqref{eq:tensor main result term 22}    
        \le &  C_2 \|\C^* \times_1 \widehat V_{1\perp} 
        \widehat V _{1\perp} ^\top  \times_2    \widehat U _2  \widehat U^{\top} _2 \times_3    \widehat U _3  \widehat U^{\top} _3   \|_F  +C_2( \xiapp + m^{-\alpha}).
        \label{eq:tensor main result term 22 bound}
\end{align}
   Combining \eqref{eq:tensor main result term 22 complement bound}, \eqref{eq:tensor main result term 22 bound}, and   \eqref{eq:eq:tensor main result term 1 bound}, it follows that 
   \begin{align}\label{eq:eq:tensor main result term 2 bound}
       \|  \C^* \times_1 \widehat V_{1\perp}    \widehat V _{1\perp} ^\top \|_F  \le C_2' \|\C^* \times_1 \widehat V_{1\perp} 
        \widehat V _{1\perp} ^\top  \times_2    \widehat U _2  \widehat U^{\top} _2  \times_3    \widehat U _3  \widehat U^{\top} _3   \|_F  +C_2'( \xiapp + m^{-\alpha}).
   \end{align}
{\bf Step 4.} Following the same argument that leads to \eqref{eq:eq:tensor main result term 2 bound}, 
there exists a constant $C '$ such that
 \begin{align}  \label{eq:projection term one bound two}
       \|  \C^* \times_1 \widehat V_{1\perp}    \widehat V _{1\perp}^ \top  \|_F  \le C  ' \|\C^* \times_1 \widehat V_{1\perp} 
        \widehat V _{1\perp} ^\top  \times_2    \widehat U _2  \widehat U^{\top} _2  \times \cdots \times_d    \widehat U _d  \widehat U^{\top} _d   \|_F  +C '( \xiapp + m^{-\alpha}).
   \end{align}
It suffices to use \Cref{eq:projection error for approximately low rank matrix} to control $\|\C^* \times_1 \widehat V_{1\perp} 
        \widehat V _{1\perp}^\top  \times_2    \widehat U _2  \widehat U^{\top} _2  \times \cdots \times_d    \widehat U _d  \widehat U^{\top} _d   \|_F$. 
        Note that $\widehat V_1$ corresponds to the  leading $\R_1$ left singular vectors 
 of    the matrix $\widehat {\mathcal B}_1 (\mi_1; \mi_2, \ldots,  \mi_d) \in \mathbb R^{m\times m^{d-1}}$, where
     $  \widehat {\mathcal B}_1=  \widehat {\mathcal C}\times_2   \widehat U _2  \widehat U^{\top} _2  \times \cdots \times_d    \widehat U _d  \widehat U^{\top} _d   \in \mathbb R^{m^d}.$ 
     So 
     \begin{align} \nonumber 
         &\|\C^* \times_1 \widehat V_{1\perp} 
        \widehat V _{1\perp} ^\top  \times_2    \widehat U _2  \widehat U^{\top} _2  \times \cdots \times_d    \widehat U _d  \widehat U^{\top} _d   \|_F 
        \\ \nonumber 
        = &   \|\C^* \times_1 \widehat V_{1\perp} 
        \widehat V _{1\perp} ^\top  \times_2    \widehat U _2  \widehat U^{\top} _2  \times \cdots \times_d    \widehat U _d  \widehat U^{\top} _d   (\mi_1; \mi_2, \ldots,\mi_d) \|_F  
        \\ \label{eq:projection error coordinate 1 term 1}
        \le &\|(\C^* -\widehat \C)  \times_2    \widehat U _2  \widehat U^{\top} _2  \times \cdots \times_d    \widehat U _d  \widehat U^{\top} _d   (\mi_1; \mi_2, \ldots,\mi_d) \|_F  
        \\ \label{eq:projection error coordinate 1 term 2}
         + &\| \C^*   {   \times_2    \widehat U _2  \widehat U^{\top} _2  \times \cdots \times_d    \widehat U _d  \widehat U^{\top} _d  } (\mi_1; \mi_2, \ldots,\mi_d) - [ \C^*    { \times_2    \widehat U _2  \widehat U^{\top} _2  \times \cdots \times_d    \widehat U _d  \widehat U^{\top} _d   (\mi_1; \mi_2, \ldots,\mi_d)]_{\R_1} }\|_F, 
     \end{align}
    where the inequality follows from \Cref{eq:projection error for approximately low rank matrix}.   By      $  \|  \widehat U _j  \widehat U^{\top} _j  -  U ^*_j    U^{*\top} _j \| = {\bigO}_{\p}(m^{-1}) $ for all $j\in \{ 1,\ldots, d\}$.  Therefore by \Cref{lemma: random projection deviation bound},
    \begin{align}  \nonumber 
        \eqref{eq:projection error coordinate 1 term 1} = &\|(\C^* -\widehat \C)  \times_2    \widehat U _2  \widehat U^{\top} _2  \times \cdots \times_d    \widehat U _d  \widehat U^{\top} _d     \|_F   =
        \\ \nonumber  
        \le & \|(\C^* -\widehat \C)  \times_2    \widehat U _2    \times \cdots \times_d    \widehat U _d       \|_F  \|  \widehat U _2^\top \| _{\op} \cdots \|  \widehat U _d ^\top  \|_{\op}
        \\ \label{eq:projection error coordinate 1 term 1 bound}        = & \|(\C^* -\widehat \C)  \times_2    \widehat U _2    \times \cdots \times_d    \widehat U _d       \|_F = 
        {\bigO}_\p \bigg( \sqrt{  \frac{m\R_2\cdots \R_d}{N} }\bigg),
    \end{align}
    where the last equality follows from \Cref{lemma:consistency of subspace in step 1}.
In addition  by  \Cref{lemma:Mirsky tensor version 2},
\begin{align}\label{eq:projection error coordinate 1 term 2 bound}
        \eqref{eq:projection error coordinate 1 term 2} \le 
         \xiapp +C_3 m^{-\alpha} . 
    \end{align}
Observe that \eqref{eq:projection error coordinate 1 term 1 bound} and \eqref{eq:projection error coordinate 1 term 2 bound}
    together imply that with high probability,
  \begin{align}  \label{eq:projection term one bound three} \|\C^* \times_1 \widehat V_{1\perp} 
        \widehat V _{1\perp} ^ \top   \times_2    \widehat U _2  \widehat U^{\top} _2  \times \cdots \times_d    \widehat U _d  \widehat U^{\top} _d   \|_F  \le C_4\bigg( \sqrt{\frac{m\R_2\cdots \R_d}{N}}  +  \xiapp + m^{-\alpha}\bigg). 
        \end{align}
\eqref{eq:projection term one bound two} and \eqref{eq:projection term one bound three} together  imply that with high probability,
 \begin{align*} \|  \C^* \times_1 \widehat V_{1\perp}      \widehat V _{1\perp} ^\top \|_F  \le C_5\bigg( \sqrt{ \frac{m\R_2\cdots \R_d}{N}  } +  \xiapp + m^{-\alpha}\bigg). 
 \end{align*}
 It follows from \eqref{eq:tensor estimation term one bound one} that 
 \begin{align} \label{eq:tensor probability bound}
     \| \mathcal C^* - \C^*{\times_1   {\mathcal P}_1 ^V \times \cdots   \times_d     {\mathcal P}_d^V }\| _F  
     \le \sum_{j=1}^d \|  \C^* \times_j \widehat V_{j\perp}      \widehat V _{j\perp} ^\top \|_F  
\le C_6\bigg( \sqrt{ \frac{m\R_2\cdots \R_d}{N}}  +  \xiapp + m^{-\alpha}\bigg).  \end{align}
{\bf Step 5.} Note that  
the projection matrices $ {\mathcal P}_j ^V =\widehat V_j \widehat V_j^\top$  are computed based on the empirical measure $\widehat A$. Therefore $ \{ {\mathcal P}_j ^V\} _{j=1}$  are independent of $\widehat A'$ and $\widehat \C' = \C(\widehat A' \times _1 \mathcal P_{\mathcal M_1} \times \cdots \times _d \mathcal P_{\mathcal M_d} ).$ So by  \Cref{remark:non-random projection frobenius norm expectation} 
$$ \E(   \| (\C^*-\widehat \C ') \times_1   {\mathcal P}_1 ^V \times \cdots   \times_d     {\mathcal P}_d^V  \| _F^2)  = \bigO\bigg( \frac{\prod_{j=1}^d \R_j}{N} \bigg)  .$$
{\bf Step 6.} By \eqref{eq:coefficient tensor deviation bound one}, \eqref{eq:tensor probability bound} and {\bf Step 5}, it follows that with high probability, 
\begin{align} \nonumber 
         \| \widetilde \C -  \mathcal C^* \|_F  
        \le & \| \mathcal C^* - \C^*\times_1   {\mathcal P}_1 ^V \times \cdots   \times_d     {\mathcal P}_d^V \| _F +
        \| (\C^*-\widehat \C ') \times_1   {\mathcal P}_1 ^V \times \cdots   \times_d     {\mathcal P}_d^V  \| _F 
        \\ 
        \le & C_7\bigg( \sqrt{ \frac{m\R_1\R_2\cdots \R_d}{N}}  +  \xiapp + m^{-\alpha}\bigg)  .
    \end{align}

\end{proof}

\begin{lemma} \label{lemma:consistency of subspace in step 1}
     Suppose all  the conditions as in \Cref{theorem:density bound main} hold.    Let $U_j^*$ and $\widehat U_j$ be defined as in \Cref{lemma:consistency of sketching}.
Then $ 
    \| \widehat U^\top_{j} U^*_{j \perp } \| _{\op}  =
    {\bigO}_\p \big( m^{-1}   \big)$  and     
   $ \sigma_{\min}^{-1} ( U^{* \top}_j \widehat U_j) ={\bigO}_\p (1).$
 
     \end{lemma}
\begin{proof}
    By   \Cref{corollary:angle using wedin}, we have that 
\begin{align*}
      \| \widehat U^\top_{j} U^*_{j \perp } \| _{\op} = \sqrt { 1 - \sigma_{\min} (\widehat U_j^\top    U^* _j
 ) }  = {\bigO}_\p \bigg(  \frac{  \sigma^*_{j,\R_j+1 }}{\sigma^*_{j,\R_j}}  +    \sqrt { \frac{\big( m + l^{d-1}_j \big) \log(N) }{ N\sigma^{*2}_{j,\R_j}  } }    \bigg) .  
\end{align*}   
Since \eqref{eq:main result snr} holds for sufficiently large constant, with the choices of $m$ and $\ell_j$ in  \Cref{theorem:density bound main},   it holds   that
$ \| \widehat U^\top_{j} U^*_{j \perp } \| _{\op}  =
    {\bigO}_\p \big( m^{-1}   \big)$. Since
with high probability 
 $\sqrt{ 1- \sigma_{\min}^{ 2}( \widehat U_j ^\top  U^{*  }_j   )}  <1,
  $  it follows that 
  $ \sigma_{\min}^{ -1}(  \widehat U_j ^\top  U^{*  }_j  ) =\bigO_\p(1).$  
\end{proof}

\subsection{Consistency of Sketching} 

Suppose  $\{\phi_k \}_{k=1}^{\infty } $ be a collection of basis function  $ \lt(\mathcal O)$  defined in   \Cref{assume: baised in projected space in operator norm}. For $j\in \{ 1,\ldots, d\}$, let $\mathcal M_j$ and $\mathcal S_j$  be defined as in  \eqref{eq:projection basis for tensor estimation} and \eqref{eq:projection basis for tensor estimation 2} respectively. 
Denote 
$$ x=z_j \quad \text{and} \quad y=(z_1,\ldots,z_{j-1},\ldots, z_{j+1},\ldots, z_d).$$

\begin{lemma} \label{lemma:consistency of sketching} 
For every $j\in \{1,\ldots,d\}$, suppose   $\sigma_{j,\R_j }  ^* \ge C_1 \max \bigg\{ l^{-\alpha},\sigma_{j,\R_j+1} ^*,\sqrt { \frac{ ( m + l^{d-1}) \log(N) }{ N} }  \bigg\}$ for some absolute constant $C_1$.  Let $U^*_j  \in \mathbb O ^{m\times \R_j}$ with columns being the $\R_j$-leading  left singular vectors of the matrix $ \mathcal C ^* (\mi_j; \mi_1,\ldots, \mi_{j-1},\mi_{j+1},\ldots, \mi_d) \in \mathbb R^{m\times m^{d-1}}$, where  $\mathcal C^*=   A^*    {\times_1 \mathcal P _{\mathcal M_1} \times \ldots \times_d  \mathcal P_{\mathcal M_d}}      \in \mathbb  R^{m^d} $.
Let $\widehat U _j  \in \mathbb O ^{m\times \R_j}$ with columns being the $\R_j$-leading  left singular vectors of the matrix $\mathcal C(\widehat A \times_x \mathcal P _{\mathcal M_j} \times_y \mathcal P_{ \mathcal S_j } )\in \mathbb R^{m\times l_j^{d-1}} $. Then  
\begin{align*}
     \|  U^*_j  {U^*_j}^\top  -  \widehat U _j \widehat U^\top_j \| _{\op}      = {\bigO}_\p \bigg(  \frac{  \sigma^*_{j,\R_j+1 }}{\sigma^*_{j,\R_j}}  +    \sqrt { \frac{\big( m + l^{d-1}_j \big) \log(N) }{ N\sigma^{*2}_{j,\R_j}  } }    \bigg) .
\end{align*}   
\end{lemma}

\begin{proof} Without loss of generality, assume $j=1$.
Throughout the proof, for brevity we  denote  $U^*  =U^*_j$,  $     \widehat U  =\widehat U _j$, $\mathcal M= \mathcal M_1$, $\mathcal S=\mathcal S_1$, $r=r_1$, $\ell=\ell_1$ and 
$$ \mathcal M^{\otimes d-1} = \mathcal M_2 \otimes \cdots \otimes \mathcal M_d.$$
It follows from  \eqref{eq:unfold coefficient tensor} that $\mathcal C ^* (\mi_1; \mi_2,\ldots, \mi_d) $ is the coefficient matrix of $ \mathcal C (A^* {\times_x \mathcal P_{\mathcal M} \times _y \mathcal P_{\mathcal M ^{\otimes d-1}} }  ).$ Therefore the columns of  $U^*  $  are    the $\R$-leading  left singular vectors of the matrix $ \mathcal C (A^* {\times_x \mathcal P_{\mathcal M} \times _y \mathcal P_{\mathcal M ^{\otimes d-1}} }  )$.
\\
\\
{\bf Step 1.} Let $[A^* {\times_x \mathcal P_{\mathcal M} \times _y \mathcal P_{\mathcal M ^{\otimes d-1}} }]_\R$ be the best rank-$\R$ approximation of $A^* {\times_x \mathcal P_{\mathcal M} \times _y \mathcal P_{\mathcal M ^{\otimes d-1}} }$.  See \Cref{theorem:best low rank function} for the definition of best rank $\R$ approximation for functions. In this step, we show that 
\begin{align}\label{eq:minimal singular gap of projection}\sigma_{\R}  ([ A^* {\times_x \mathcal P_{\mathcal M} \times _y \mathcal P_{\mathcal M ^{\otimes d-1}} }]_\R \times _y \mathcal P_{  \mathcal S} )  \ge \frac{1}{2} \sigma_{1,\R} ^*. 
\end{align}
To see this, observe  that 
\begin{align*}
&\|   \  [ A^* {\times_x \mathcal P_{\mathcal M} \times _y \mathcal P_{\mathcal M ^{\otimes d-1}} }]_\R \times _y \mathcal P_{  \mathcal S }   - A^* \times_x \mathcal P_{\mathcal M} \times _y \mathcal P_{\mathcal M ^{\otimes d-1}}     \|_{\op}
\\
\le & \|   \  [ A^* {\times_x \mathcal P_{\mathcal M} \times _y \mathcal P_{\mathcal M ^{\otimes d-1}} }]_\R \times _y \mathcal P_{  \mathcal S}   -\    A^* {\times_x \mathcal P_{\mathcal M} \times _y \mathcal P_{\mathcal M ^{\otimes d-1}} }  \times _y \mathcal P_{  \mathcal S}     \|_{\op} 
\\ +
& \|    \    A^* {\times_x \mathcal P_{\mathcal M} \times _y \mathcal P_{\mathcal M ^{\otimes d-1}} }  \times _y \mathcal P_{  \mathcal S }  -  A^* \times_x \mathcal P_{\mathcal M} \times _y \mathcal P_{\mathcal M ^{\otimes d-1}}      \|_{\op} .
\end{align*}
Note that 
\begin{align}\nonumber
 & \|   \  [ A^* {\times_x \mathcal P_{\mathcal M} \times _y \mathcal P_{\mathcal M ^{\otimes d-1}} }]_\R \times _y \mathcal P_{  \mathcal S}   -\    A^* {\times_x \mathcal P_{\mathcal M} \times _y \mathcal P_{\mathcal M ^{\otimes d-1}} }  \times _y \mathcal P_{  \mathcal S}     \|_{\op} 
\\  \nonumber
= & \|   \big(   [ A^* {\times_x \mathcal P_{\mathcal M} \times _y \mathcal P_{\mathcal M ^{\otimes d-1}} }]_\R - A^* {\times_x \mathcal P_{\mathcal M} \times _y \mathcal P_{\mathcal M ^{\otimes d-1}} }  \big)   \times _y \mathcal P_{  \mathcal S}     \|_{\op}  
\\ \nonumber
\le & \|     [ A^* {\times_x \mathcal P_{\mathcal M} \times _y \mathcal P_{\mathcal M ^{\otimes d-1}} }]_\R - A^* {\times_x \mathcal P_{\mathcal M} \times _y \mathcal P_{\mathcal M ^{\otimes d-1}} }    \| _{\op }  \|  \mathcal P_{  \mathcal S }     \|_{\op}  
\\ \nonumber
\le & \|     [ A]_\R^* {\times_x \mathcal P_{\mathcal M} \times _y \mathcal P_{\mathcal M ^{\otimes d-1}} }   - A^* {\times_x \mathcal P_{\mathcal M} \times _y \mathcal P_{\mathcal M ^{\otimes d-1}} }    \| _{\op }  \|  \mathcal P_{ \mathcal S }     \|_{\op}   
\\ \nonumber
=
 & \|    \big(   [ A]_\R^* - A^* \big)  {\times_x \mathcal P_{\mathcal M} \times _y \mathcal P_{\mathcal M ^{\otimes d-1}} }      \| _{\op }  \|  \mathcal P_{  \mathcal S }     \|_{\op}     
 \\ \label{eq:sketching best low rank error in operator norm}
 \le & \| [ A]_\R^* - A^* \| _{\op} \| \mathcal P_{\mathcal M} \|_{\op} \| \mathcal P_{\mathcal M ^{\otimes d-1}}  \|_{\op} \|  \mathcal P_{  \mathcal S }     \|_{\op}  \le \| [ A]_\R^* - A^* \| _{\op}     = \sigma^*_{1, \R+1}  ,
\end{align}
where the second inequality follows from \eqref{remark:best low rank} and the fact that $  [ A]_\R^* {\times_x \mathcal P_{\mathcal M} \times _y \mathcal P_{\mathcal M ^{\otimes d-1}} }$ is a function with rank at most $r$;  the last inequality follows from  the fact that projection functions has operator norm bounded by 1; and the last equality follows from \Cref{theorem:best low rank function}.
\\
\\
In addition 
\begin{align*}
  &\|        A^* {\times_x \mathcal P_{\mathcal M} \times _y \mathcal P_{\mathcal M ^{\otimes d-1}} }  \times _y \mathcal P_{  \mathcal S }  -  A^* \times_x \mathcal P_{\mathcal M} \times _y \mathcal P_{\mathcal M ^{\otimes d-1}}      \|_{\op} 
  \\
  = & \|        A^* {\times_x \mathcal P_\mathcal M \times_y \mathcal P_{\mathcal S}}   -  A^* \times_x \mathcal P_{\mathcal M} \times _y \mathcal P_{\mathcal M ^{\otimes d-1}}      \|_{\op} 
  \\
  \le & \|        A^* {\times_x \mathcal P_\mathcal M \times_y \mathcal P_{\mathcal S} }  -A^*\| _{\op} + \|  A^*   -  A^* \times_x \mathcal P_{\mathcal M} \times _y \mathcal P_{\mathcal M ^{\otimes d-1}}      \|_{\op} 
  \\
  = & \bigO(m^{-\alpha} + l^{-\alpha}) + \bigO(m^{-\alpha} )  = \bigO(  l^{-\alpha}),
\end{align*}
where the first equality follows from $ \mathcal S \subset \mathcal M^{\otimes d-1}$; the second  equality follows from \Cref{theorem: approximation theory}; and the last equality holds because $l\le m$.
Therefore 
\begin{align}\label{eq:sketching consistency deviation of sketch}
 \|   \  [ A^* {\times_x \mathcal P_{\mathcal M} \times _y \mathcal P_{\mathcal M ^{\otimes d-1}} }]_\R \times _y \mathcal P_{  \mathcal S}   - A^* \times_x \mathcal P_{\mathcal M} \times _y \mathcal P_{\mathcal M ^{\otimes d-1}}     \|_{\op}\le \sigma^*_{1, \R+1} (A^*) + C _2 l^{-\alpha} 
\end{align}
for some sufficiently large constant $C_2$. It follows that 
\begin{align*}
 &\sigma_\R ( [ A^* {\times_x \mathcal P_{\mathcal M} \times _y \mathcal P_{\mathcal M ^{\otimes d-1}} }]_\R ) 
 \\
 \ge &\sigma_\R (A^* {\times_x \mathcal P_{\mathcal M} \times _y \mathcal P_{\mathcal M ^{\otimes d-1}} }   )   -\|   \  [ A^* {\times_x \mathcal P_{\mathcal M} \times _y \mathcal P_{\mathcal M ^{\otimes d-1}} }]_\R \times _y \mathcal P_{  \mathcal S}   - A^* {\times_x \mathcal P_{\mathcal M} \times _y \mathcal P_{\mathcal M ^{\otimes d-1}} }    \|_{\op}
 \\ \ge &\sigma^*_{1,\R} - C_3m^{-\alpha} -\sigma^*_{1, \R+1} (A^*)  - C _2 l^{-\alpha}  \ge \frac{1}{2}\sigma^*_{1,\R},
\end{align*}
where the first inequality follows from   \Cref{lemma:SVD pertubation for operators}; the second inequality follows from \Cref{lemma:singular gap of population coefficient matrix} and \eqref{eq:sketching consistency deviation of sketch}; and the last inequality follows from  the assumption that $\sigma_{1,\R _1}  ^* \ge C_1 \max\{ l^{-\alpha},\sigma_{1,\R_1+1} ^* \}$.
\\
\\
{\bf Step 2.} We show that the  the column space of $\mathcal C([ A^* {\times_x \mathcal P_{\mathcal M} \times _y \mathcal P_{\mathcal M ^{\otimes d-1}} }]_\R  \times _y \mathcal P _{\mathcal S }) $  equals to the column space of $[\mathcal C(  A^* {\times_x \mathcal P_{\mathcal M} \times _y \mathcal P_{\mathcal M ^{\otimes d-1}} } )]_\R$.
\\
\\
To see this, note that 
\begin{align*} \mathcal C([ A^* {\times_x \mathcal P_{\mathcal M} \times _y \mathcal P_{\mathcal M ^{\otimes d-1}} }]_\R  \times _y \mathcal P _{\mathcal S})  
= & \mathcal C([ A^* {\times_x \mathcal P_{\mathcal M} \times _y \mathcal P_{\mathcal M ^{\otimes d-1}} }]_\R ) \mathcal C( \mathcal P _{\mathcal S })   
\\ =
& [\mathcal C(  A^* {\times_x \mathcal P_{\mathcal M} \times _y \mathcal P_{\mathcal M ^{\otimes d-1}} } )]_\R \mathcal C( \mathcal P _{\mathcal S })  , 
\end{align*}
where the first  equality follows from \Cref{lemma:product compactible coefficient matrix}; and the second equality follows from \Cref{corollary:singular value preserving} {\bf (b)}.  
So the column space of $\mathcal C([ A^* {\times_x \mathcal P_{\mathcal M} \times _y \mathcal P_{\mathcal M ^{\otimes d-1}} }]_\R  \times _y \mathcal P _{\mathcal S }) $  is contained in the column space of $[\mathcal C(  A^* {\times_x \mathcal P_{\mathcal M} \times _y \mathcal P_{\mathcal M ^{\otimes d-1}} } )]_\R$, or simply 
\begin{align} \label{eq:column space after sketching} \text{Col} \big\{  \mathcal C([ A^* {\times_x \mathcal P_{\mathcal M} \times _y \mathcal P_{\mathcal M ^{\otimes d-1}} }]_\R  \times _y \mathcal P _{\mathcal S }) \big\}  \subset  \text{Col} \big\{ [\mathcal C(  A^* {\times_x \mathcal P_{\mathcal M} \times _y \mathcal P_{\mathcal M ^{\otimes d-1}} } )]_\R  \big\}.
\end{align}
\
\\
Note that the matrix $[\mathcal C(  A^* {\times_x \mathcal P_{\mathcal M} \times _y \mathcal P_{\mathcal M ^{\otimes d-1}} } )]_\R $ is at most rank $\R$.  By \Cref{lemma:singular gap of population coefficient matrix}, $$  \sigma_\R ( [ \mathcal C(  A^* {\times_x \mathcal P_{\mathcal M} \times _y \mathcal P_{\mathcal M ^{\otimes d-1}} } ) ]_\R)=\sigma_\R (\mathcal C(  A^* {\times_x \mathcal P_{\mathcal M} \times _y \mathcal P_{\mathcal M ^{\otimes d-1}} } ))   >0. $$
So the matrix $[\mathcal C(  A^* {\times_x \mathcal P_{\mathcal M} \times _y \mathcal P_{\mathcal M ^{\otimes d-1}} } )]_\R$ is exactly rank $\R$.

\
\\
In addition, $\mathcal C([ A^* {\times_x \mathcal P_{\mathcal M} \times _y \mathcal P_{\mathcal M ^{\otimes d-1}} }]_\R  \times _y \mathcal P _{\mathcal S}) $   is at most rank $\R$, because $[ A^* {\times_x \mathcal P_{\mathcal M} \times _y \mathcal P_{\mathcal M ^{\otimes d-1}} }]_\R$ is a function of rank at most $\R$. By {\bf Step 1}, $\sigma_\R ( [ A^* {\times_x \mathcal P_{\mathcal M} \times _y \mathcal P_{\mathcal M ^{\otimes d-1}}  ]_\R  \times _y \mathcal P _{\mathcal S}}  ) >0 $. \\
So $\mathcal C([ A^* {\times_x \mathcal P_{\mathcal M} \times _y \mathcal P_{\mathcal M ^{\otimes d-1}} }]_\R  \times _y \mathcal P _{\mathcal S }) $   is exactly rank $\R$. 
\\
\\
Since both $[\mathcal C(  A^* {\times_x \mathcal P_{\mathcal M} \times _y \mathcal P_{\mathcal M ^{\otimes d-1}} } )]_\R $ and $\mathcal C([ A^* {\times_x \mathcal P_{\mathcal M} \times _y \mathcal P_{\mathcal M ^{\otimes d-1}} }]_\R  \times _y \mathcal P _{\mathcal S}) $ are exactly rank $\R$ matrices, \eqref{eq:column space after sketching} implies that 
\begin{align*}   \text{Col} \big\{  \mathcal C([ A^* {\times_x \mathcal P_{\mathcal M} \times _y \mathcal P_{\mathcal M ^{\otimes d-1}} }]_\R  \times _y \mathcal P _{\mathcal S }) \big\}  = \text{Col} \big\{ [\mathcal C(  A^* {\times_x \mathcal P_{\mathcal M} \times _y \mathcal P_{\mathcal M ^{\otimes d-1}} } )]_\R  \big\}.
\end{align*}
\ 
\\
{\bf Step 3.}    The columns of $U^* $ are  the  leading $\r$ left singular vectors of $\mathcal C(  A^* {\times_x \mathcal P_{\mathcal M} \times _y \mathcal P_{\mathcal M ^{\otimes d-1}} }  )$. 
Therefore the columns of $U^* $ spans the column space of  $ [ \mathcal C(  A^* {\times_x \mathcal P_{\mathcal M} \times _y \mathcal P_{\mathcal M ^{\otimes d-1}} }  ) ]_\R$.
By { \bf Step 2}, the columns of $U^* $ also  spans the column space of $ \text{Col} \big\{  \mathcal C([ A^* {\times_x \mathcal P_{\mathcal M} \times _y \mathcal P_{\mathcal M ^{\otimes d-1}} }]_\R  \times _y \mathcal P _{\mathcal S }) \big\} $. Note that columns of $U^*$ are not necessarily the singular vectors of  $\mathcal C([ A^* {\times_x \mathcal P_{\mathcal M} \times _y \mathcal P_{\mathcal M ^{\otimes d-1}} }]_\R  \times _y \mathcal P _{\mathcal S })$, but 
$   U^* {U^*}^\top $ is the projection matrix onto the column space of $\mathcal C([ A^* {\times_x \mathcal P_{\mathcal M} \times _y \mathcal P_{\mathcal M ^{\otimes d-1}} }]_\R  \times _y \mathcal P _{\mathcal S})$.

\
\\
It follows from \Cref{corollary:Wedin} that 
\begin{align*}  &  \|  U^*  U^{*\top}  -  \widehat U  \widehat U^\top    \|_{\op } \le  \sqrt 2  \frac{ \eqref{eq:sketching wedin term 1}  }{ \eqref{eq:sketching wedin term 2} -\eqref{eq:sketching wedin term 1} }, \quad \text{where}
\\
&\eqref{eq:sketching wedin term 1} =  \| \mathcal C( [ A^* \times_x \mathcal P_{\mathcal M} \times_y \mathcal P _{\mathcal M ^{\otimes d-1}}]_\R  \times _y \mathcal P _{\mathcal S } ) - \mathcal C( \widehat A \times_x \mathcal P _{\mathcal M} \times_y \mathcal P_{ \mathcal S} )     \|_{\op} 
 \\
    & \eqref{eq:sketching wedin term 2}  =  \sigma_\R (\mathcal C( [ A^* \times_x \mathcal P_{\mathcal M} \times_y \mathcal P _{\mathcal M ^{\otimes d-1}}]_\R  \times _y \mathcal P _{\mathcal S} )  )   . \end{align*}
\refstepcounter{equation}\label{eq:sketching wedin term 1}
\refstepcounter{equation}\label{eq:sketching wedin term 2}
\
\\
  Note that with high probability,
 \begin{align*} \eqref{eq:sketching wedin term 1}     =  & \|   [ A^* \times_x \mathcal P_{\mathcal M} \times_y \mathcal P _{\mathcal M ^{\otimes d-1}}]_\R  \times _y \mathcal P _{\mathcal S}   -  \widehat A \times_x \mathcal P _{\mathcal M} \times_y \mathcal P_{ \mathcal S}       \|_{\op} 
\\ \le & \|   [ A^* \times_x \mathcal P_{\mathcal M} \times_y \mathcal P _{\mathcal M ^{\otimes d-1}}]_\R  \times _y \mathcal P _{\mathcal S}   -   A ^*  \times_x \mathcal P _{\mathcal M} \times_y \mathcal P_{ \mathcal S}       \|_{\op}  
\\ + &
\| A ^*  \times_x \mathcal P _{\mathcal M} \times_y \mathcal P_{ \mathcal S }       -\widehat A    \times_x \mathcal P _{\mathcal M} \times_y \mathcal P_{ \mathcal S }      \|_{\op}
\\ 
 \le &  \sigma^*_{1,\r+1 }  + C_2 \bigg ( \sqrt { \frac{ 
 \big( m + l^{d-1} \big) \log(N)}{ N} } \bigg),
 \end{align*}
 where the last inequality follows from \eqref{eq:sketching best low rank error in operator norm} and \Cref{lemma:sobolev matrix Bernstein}.
\\
\\
 In addition, 
\begin{align}
\eqref{eq:sketching wedin term 2}   = \sigma_\R (\mathcal C( [ A^* \times_x \mathcal P_{\mathcal M} \times_y \mathcal P _{\mathcal M ^{\otimes d-1}}]_\R  \times _y \mathcal P _{\mathcal S} )  ) \ge \frac{1}{2} \sigma^*_{1,\r} . 
\end{align}
So  $$\eqref{eq:sketching wedin term 2}  -\eqref{eq:sketching wedin term 1}  \ge  \frac{1}{2}\sigma^*_{1,\r}  - \sigma^*_{1,\r+1 }  - C_2 \bigg ( \sqrt { \frac{ ( m + l^{d-1}) \log(N) }{ N} } \bigg) \ge  \frac{1}{4} \sigma^*_{1,\r} ,$$
where the last inequality holds because  $\sigma_{1,\R }  ^* \ge C_1 \max \bigg\{ l^{-\alpha},\sigma_{1,\R+1} ^* ,\sqrt { \frac{ ( m + l^{d-1}) \log(N) }{ N} } \bigg\}$.
$$\|  U^*  {U^*}^\top  -  \widehat U  \widehat U^\top  \|_{\op}   \le \sqrt 2  \frac{ \eqref{eq:sketching wedin term 1}  }{ \eqref{eq:sketching wedin term 2}  - \eqref{eq:sketching wedin term 1} } \le C_3\frac{\sigma^*_{1,\r+1 }  +    \sqrt { \frac{( m + l^{d-1} ) \log(N) }{ N} }   }{\sigma^*_{1,\r} } .$$

\end{proof}
 \begin{corollary} \label{corollary:angle using wedin} Suppose all  the  conditions   in \Cref{lemma:consistency of sketching} hold.
   Let $U_{j\perp} ^* \in \mathbb O^{m\times  (m-r_j)}$  be the orthogonal complement of $U_{j } ^* $.
Then 
 \begin{align}\label{eq:davis khan 2}   \| \widehat U^\top_{j} U^*_{j \perp } \| _{\op} = \sqrt { 1 - \sigma_{\min} (\widehat U_j^\top    U^* _j
 ) }  = {\bigO}_\p \bigg(  \frac{  \sigma^*_{j,\R_j+1 }}{\sigma^*_{j,\R_j}}  +    \sqrt { \frac{\big( m + l^{d-1}_j \big) \log(N) }{ N\sigma^{*2}_{j,\R_j}  } }    \bigg) . 
 \end{align}
 Here  $\sigma_{\min} (\widehat U_j^\top    U^* _j)  $  is  the minimal singular value of the matrix $\widehat U_j^\top    U^* _j  \in \mathbb R^{r\times r }$. 
 \begin{proof}
 By Lemma 1 of \cite{cai2018rate},
$$\| \widehat U^\top_{j} U^*_{j \perp } \| _{\op} = \sqrt { 1 - \sigma_{\min} (\widehat U_j^\top    U^* _j
 ) } \le 2 \|  U^*_j  {U^*_j}^\top  -  \widehat U _j \widehat U^\top_j \| _{\op}  .$$ 
 Therefore \eqref{eq:davis khan 2}
 is a direct consequence  of the  \Cref{lemma:consistency of sketching}.
\end{proof} 
 \end{corollary}

\section{Deviation bounds}

Throughout this section,  let    $\{ \BZ_i\}_{i=1}^N  \subset \mathcal  O^{d} $  be the   i.i.d.\, data  sampled from  the density $A^*$ and  $\widehat A =\frac{1}{N}\sum_{i=1}^N \delta_{\BZ_i}$ be  the corresponding empirical measure, and 
$$\C^* = \C(A^*{ \times_1\mathcal P_{\M_1} \times \cdots     \times_d\mathcal P_{\M_d} })\quad \text{and} \quad \widehat \C  = \C( \widehat A { \times_1\mathcal P_{\M_1} \times \cdots     \times_d\mathcal P_{\M_d} })$$
\begin{lemma} \label{lemma:sobolev matrix Bernstein}    Let $p  \in \mathbb Z^+$ be such that $p< d$.
Suppose  $\{\phi_k \}_{k=1}^{\infty } $ is  a collection of $ \lt(\mathcal O)$ basis such that $\|\phi_k \|_\infty \le C_{\phi} $ for some absolute constant $C_{\phi} $.
For positive integers $\m$ and $\l$, denote  
\begin{align} \label{eq:projection basis 1}
\mathcal M =\Span \bigg\{\phi_{\mi_1}(z_1)  \cdots  \phi_{\mi_{p}}(z_{p}) \bigg \}_{\mi_1, \ldots, \mi_{p}=1}^\m \  \text{and} \ \   
\mathcal N = \Span\bigg\{  \phi_{\li_1 }(z_{p+1})\cdots  \phi_{\li_{{d-p}}}(z_{d })\bigg\}_{\li_1, \ldots, \li_{{d-p} }=1}^\l.
\end{align} 
Suppose in addition that $ N \ge C_1  \max\{ m^{p}, l^{{d-p}}\}    $ for some absolute constant $C_1$. Then it holds that
$$\|  (\widehat A - A ^*) \times_x \mathcal P_\mathcal M \times_y \mathcal P_{\mathcal N}  \|_{\op}  ={\bigO}_{\p} \bigg( \sqrt{ \frac{ \{ m^{p} +l^{d-p}\} \log(N)  }{N}}    \bigg). $$

\end{lemma} 
 \begin{proof}
 Denote  
$$x=(z_1,\ldots, z_{p}) \quad \text{and} \quad  y=(z_{p+1},\ldots,  z_{d}).$$
For positive integers $\m$ and $\l$, by ordering the indexes $(\mi_1,\ldots, \mi_{p})$ and  $(\li_1,\ldots, \li_{{d-p}})$   in \eqref{eq:projection basis 1}, we can also write
 \begin{align} %\label{eq:projection basis for x and y 2}
 \mathcal M =\text{span}\{\Phi_\mi(x) \}_{\mi=1}^{\m^{p}}  \quad \text{and} \quad 
 \mathcal N =  \text{span} \{  \Psi_{\li }(y )  \}_{\li =1}^{\l^{{d-p}}}.
 \end{align}  
 Here with the multi-index $\mu= (\mu_1,\ldots, \mu_p)$, $$\Phi_\mu(x) = \phi_{\mi_1} (z_1) \ldots \phi_{\mi_p}(z_p) . $$
 So $\|\Phi_\mi \|_\infty \le C_\phi^{p} $. Similarly $\|\Psi_\li \|_\infty \le C_\phi^{d-p} $. Therefore 
 \begin{align}\label{eq:bernstein l_infty bound}\langle A^*, \Phi_\mi \otimes  \Psi_\li  \rangle \le C_\phi ^{d}\|A ^* \|_\infty 
 \end{align}
{\bf Step 1.} Observe that $   \widehat  A   \times_x \mathcal P_\mathcal M \times_y \mathcal P_{\mathcal N}  $
and $   A ^*  \times_x \mathcal P_\mathcal M \times_y \mathcal P_{\mathcal N} $ are both zero outside the subspace $\mathcal M\otimes \mathcal N $. Let $\mathcal C( \widehat  A   \times_x \mathcal P_\mathcal M \times_y \mathcal P_{\mathcal N}   ) $ and $ \mathcal C( A ^*  \times_x \mathcal P_\mathcal M \times_y \mathcal P_{\mathcal N} )$ be the coefficient matrices in $\mathbb R^{\m^{p}\times \l^{{d-p}}}$.  Note that  
\begin{align} \nonumber  
 \|  (\widehat A  - A  ^*) {\times_x \mathcal P_\mathcal M \times_y \mathcal P_{\mathcal N} }  \|_{\op}  &=   
 \| \mathcal C( \widehat  A  { \times_x \mathcal P_\mathcal M \times_y \mathcal P_{\mathcal N} } )  -\mathcal C( A^*  { \times_x \mathcal P_\mathcal M \times_y \mathcal P_{\mathcal N} ) } \|_{\op} 
 \\ \label{eq:matrix bernstein in hilbert space} &=    \| \frac{1}{N } \sum_{i=1}^N B_i -\E(B_i)\|_{\op}, 
\end{align}
where the first equality follows from  \Cref{lemma:coefficient tensor distance preserving}; and  in the second equality, $B _ i \in \mathbb R^{  m^{p}\times l^{d-p}}$  is such that 
$  ( B _ i)_{\mi,\li} = \Phi_\mi \otimes \Psi_{\li }(\BZ _i)    .$  Note that 
$ \big( \mathcal C( \widehat  A  { \times_x \mathcal P_\mathcal M \times_y \mathcal P_{\mathcal N} } ) \big)_{\mi, \li} =\frac{1}{N} \sum_{i=1}^N  \Phi_\mu \otimes \Phi_\eta (\BZ_i) $. So 
$$ \frac{1}{N } \sum_{i=1}^N B_i = \mathcal C( \widehat  A  { \times_x \mathcal P_\mathcal M \times_y \mathcal P_{\mathcal N} } ) . $$
Therefore $\E( ( B _ i)_{\mi,\li} ) =  \langle A^* ,  \Phi_\mi \otimes \Psi_{\li } \rangle =\big( \mathcal C(   A^*  { \times_x \mathcal P_\mathcal M \times_y \mathcal P_{\mathcal N} } ) \big)_{\mi, \li}  $.
\\
\\
{\bf Step 2.}  In this step,  we apply   \Cref{theorem:matrix bernstein}   to bound \eqref{eq:matrix bernstein in hilbert space}.
To bound $\| B_i -\E(B_i)\|_{\op}$, let $v= (v_1,\ldots, v_{\m^{p} })  \in \mathbb R^{\m^{p} }$ and  $w= (w_1,\ldots, w_{\l^{{d-p}}})  \in \mathbb R^{\l^{{d-p}} }$. Then  $\| B_i\|_{\op} =  \sup_{|v|=1, |w|=1} v^\top B_i w$. Observe that
 \begin{align*}
  v^\top B_i w  = &
 \sum_{\mi=1}^{m^p} \sum_{\li=1}^{l^{d-p}} v_\mi (B_i)_{\mi, \li}  w_{\li}
 = \sum_{\mi=1}^{m^p} \sum_{\li=1}^{l^{d-p}}  v_\mi   \Phi_\mi \otimes \Psi_{\li }(\BZ _i)  w_{\li} 
 \\
 \le  &\sum_{\mi=1}^{m^p} \sum_{\li=1}^{l^{d-p}} |v_\mi |   C_\phi^{d}   |w_\li|  \le   C_\phi^{d} \sqrt { \sum_{\mi=1}^{m^p}  |v_\mi |  ^2 }
 \sqrt { \sum_{\li=1}^{l^{d-p}} |w_\li| ^2 }
 =  C_\phi^{d}  \sqrt {m^p l^{d-p}},
 \end{align*}
 where the first inequality follows from the fact that 
 $\|\Phi_\mi \|_\infty \le C_\phi^{p} $ and  $\|\Psi_\li \|_\infty \le C_\phi^{d-p} $. Similarly calculations shows that by  \eqref{eq:bernstein l_infty bound}, 
 $  \| \E (B_i) \|_{\op}  \le  C_\phi^{d}  \|A^*\|_\infty  \sqrt {m^p l^{d-p}}.$ 
 So 
 $$\| B_i - \E (B_i) \|_{\op}  \le C _2   \sqrt {m^p l^{d-p}} . $$
 \
 \\
 {\bf Step 3.}   Note that 
 $   \E( \big\{ B_i -\E(B_i) \big\} \big\{ B_i -\E(B_i) \big\} ^\top   )  = \E (B_i B_i^\top ) -\E (B_i) \{ \E( B_i)\} ^\top , $ and $\E (B_i) \{ \E( B_i)\} ^\top$ is positive semi-definite.  So  by  8.3.2 on page 125 of \cite{tropp2015introduction},
 $$ \| \E( \big\{ B_i -\E(B_i) \big\} \big\{ B_i -\E(B_i) \big\} ^\top   ) \|_{\op } \le  \| \E (B_i B_i^\top ) \|_{\op }   .$$
 Let $v= (v_1,\ldots, v_{\m^{p} })  \in \mathbb R^{\m^{p} }$ be such that $|v|=1$. 
 Then $ \|  \E( B_i B_i^\top ) \|_{\op} = \sup_{|v|=1}  v^\top  \E( B_i B_i^\top )v $. Observe that 
 \begin{align*} v^\top  \E( B_i B_i^\top )v  = &\sum_{\mi, \mi'=1}^{m^p}    \sum_{\li=1} ^{l^{d-p}} v_\mi  \E \big \{ \Phi_\mi \otimes \Psi_\li (\BZ_i ) \cdot \Phi_{\mi'}  \otimes \Psi_\li (\BZ_i )  \big\} v_{\mi'}
 \\
=  &  \sum_{\mi, \mi'=1}^{m^p}    \sum_{\li=1} ^{l^{d-p}} \int_{\mathcal O^p   } \int_{\mathcal O^{d-p }}    v_\mi   \Phi_\mi (x)  \Psi_\li  (y)  \Phi_{\mi'} (x)  \Psi_\li  (y) v_{\mi'}    A^*(x, y)  dx dy  
\\
 = &  \int_{\mathcal O^{d-p }} \int_{\mathcal O^p   }    \sum_{\mi =1 }^{m^p}  v_\mi     \Phi_\mi (x)  \sum_{\mi' =1 }^{m^p}  v_{\mi'}    \Phi_{\mi'} (x)    \sum_{\li=1} ^{l^{d-p}}  \Psi_\li ^2   (y)    A^*(x, y) dx   dy   
 \\ 
 \le & \|A^* \|_\infty  \int_{\mathcal O^{d-p }} \int_{\mathcal O^p   }   \bigg \{  \sum_{\mi =1 }^{m^p}    v_\mi   \Phi_\mi (x)    \bigg \}     ^2  \bigg\{   \sum_{\li=1} ^{l^{d-p}}  \Psi_\li ^2   (y)      \bigg\} dx   dy  .
 \end{align*}
Since  $
 \{  \Psi_{\li }(y )  \}_{\li =1}^{l^{{d-p}}} $ is a collection of  orthonormal basis functions, we have 
 $$ \int _{\mathcal O^{d-p}} \sum_{\li=1} ^{l^{d-p}}  \Psi_\li ^2   (y) dy  =\sum_{\li=1} ^{l^{d-p}} \int _{\mathcal O^{d-p}}   \Psi_\li ^2   (y) dy  = l^{d-p}. $$
Since  $
 \{  \Phi_{\li }(y )  \}_{\mi =1}^{m^{d-p}} $ is a collection of  orthonormal basis functions,  and $|v| =1$, 
$$  \int_{\mathcal O^ p   }   \bigg \{  \sum_{\mi =1 }^{m^p}    v_\mi   \Phi_\mi (x)    \bigg \}     ^2   dx =1. $$
So 
$$\| \E( \big\{ B_i -\E(B_i) \big\} \big\{ B_i -\E(B_i) \big\} ^\top   ) \|_{\op }  \le \|  \E( B_i B_i^\top ) \|_{\op} \le \|A^* \|_\infty l^{d-p}.$$
\
\\
{\bf Step 4.}  Using similar calculations as in {\bf Step 3}, it follows that 
$$\| \E( \big\{ B_i -\E(B_i) \big\}^\top   \big\{ B_i -\E(B_i) \big\}  ) \|_{\op }  \le \|  \E( B_i B_i^\top ) \|_{\op} \le \|A^*\|_\infty m^{p}.$$
\
\\
{\bf Step 5.} Therefore by  \Cref{theorem:matrix bernstein}, it holds that 
\begin{align*} &\|  (\widehat A  - A  ^*) {\times_x \mathcal P_\mathcal M \times_y \mathcal P_{\mathcal N} }  \|_{\op}  
=     \| \frac{1}{N } \sum_{i=1}^N B_i -\E(B_i)\|_{\op}
\\
=& {\bigO}_{\p} \bigg( \sqrt{ \frac{ \{ m^{p} +l^{d-p}\} \log(m^{p}+l^{d-p})  }{N}}  +  \frac{\{ m^{p} +l^{d-p}\} \log(m^{p}+l^{d-p}) }{N}\bigg) 
\\
= &{\bigO}_{\p} \bigg( \sqrt{ \frac{ \{ m^{p} +l^{d-p}\} \log(N)  }{N}}    \bigg)  , 
\end{align*}
where the last inequality follows from $N \ge C_1  \max\{ m^{p}, l^{{d-p}}\}  $, 
so that $m^p/N \le \sqrt{m^p/N  }$, and $l^{d-p} /N \le \sqrt{l^{d-p}/N  }$.

\end{proof} 

\begin{theorem}\label{theorem:matrix bernstein} [Matrix Bernstein]
Let $\{ B_i\}_{i=1}^N $ be a set of independent random matrices  with dimensions $m_1\times m_2$. Suppose that for all $i$,
$ \E(B_i) =0$ and  $\| B_i\|_{\op} \le L . $
Denote $ m=\max\{ m_1,m_2\} $.  Then for any $a>2 $, it holds that with probability greater than $1-2 n^{-a+1}$  that 
$$ \bigg\|\frac{1}{N}  \sum_{i=1}^N B_i \bigg\|_{\op} \le \sqrt { \frac{2a \nu \log(m) }{ N}  } + \frac{2aL \log(m)}{3N} . $$
Here $ \nu =  \max \big \{ \| \E (B_i B_i^\top ) \|_{\op},\| \E (B_i^\top  B_i ) \|_{\op} \big \} $.
\end{theorem}
\begin{proof}
This is the well-known  matrix Bernstein inequality.  See \cite{tropp2015introduction} for the proof.
\end{proof}

\begin{lemma} \label{eq:variance of one test function}Let $G \in \lt(\mathcal O^d)$ be any non-random functions. Then 
$$ \E ( \langle \widehat A-A^* , G\rangle ^2) \le \frac{\|G\|_{\lt(\mathcal O^d) }^2\| A^*\|_\infty}{N}  .$$
\end{lemma}
\begin{proof}
    Note that 
    $ \langle \widehat A-A^* , G\rangle  = \frac{1}{N} \sum_{i=1}^N  \{ G(\BZ_i) - \langle A^* , G\rangle \}   $, and so $ \E( \langle \widehat A-A^* , G\rangle ) =0 $. Consequently,
    \begin{align*}
        &\E ( \langle \widehat A-A^* , G\rangle ^2)   = \var \big( \frac{1}{N} \sum_{i=1}^N \{ G(\BZ_i) - \langle A^* , G\rangle  \}  \big )  =\frac{1}{N} \var  \bigg(   G(\BZ_1)     \bigg)
        \\
        & \le \frac{1}{N}\E  (  G^2(\BZ_1) ) 
        =\frac{1}{N}\int_{\mathcal O^d} G^2 (z_1, \ldots,z_d ) A^*(z_1,\ldots, z_d)dz_1 \ldots dz_{d} \le\frac{1}{N}  \|G\|_{\lt(\mathcal O^d) }^2\| A^*\|_\infty.
    \end{align*}
\end{proof}

\begin{lemma} \label{lemma:non-random projection frobenius norm expectation}  Let  $\mathcal C^* = \mathcal C (A^* { \times_1 \mathcal P _{\mathcal M_1} \times \cdots  \times_d \mathcal P _{\mathcal M_d} } )$ and $\widehat {\mathcal C}= \mathcal C (\widehat A { \times_1 \mathcal P _{\mathcal M_1} \times \cdots  \times_d \mathcal P _{\mathcal M_d} } )$  be two tensors of size $m^d $. Let $\{ Q_j\}_{j=1}^d  \subset \mathbb R^{m\times q_j}$ be a collection of non-random matrix with $\|Q_j\|_{\op} \le 1$. Then 
$$ \E (  \|   ( \mathcal C^*  - \widehat { \mathcal C}  )\times_1 Q_1\times \cdots \times_d Q_d \|_F^2 )  ={\bigO}  \bigg( \frac{  \prod_{j=1}^d   q_j    }{N} \bigg) .$$
\end{lemma}
\begin{remark}
    \label{remark:non-random projection frobenius norm expectation}   Using the same notations as in \Cref{lemma:non-random projection frobenius norm expectation}, suppose in addition that   $\{ \widehat Q_j\}_{j=1}^d  \subset \mathbb R^{m\times q_j}$ is a    collection of  random matrix with $\|\widehat Q_j\|_{\op} \le 1$. Suppose in addition that $\{ \widehat Q_j\}_{j=1}^d $ are independent of $\widehat \C$. Then 
    $$ \E (  \|   ( \mathcal C^*  - \widehat { \mathcal C}  )\times_1 \widehat Q_1\times \cdots \times_d \widehat Q_d \|_F^2 )  ={\bigO}  \bigg( \frac{  \prod_{j=1}^d   q_j    }{N} \bigg) .$$ 
    The proof immediately follows from   the argument  of \Cref{lemma:non-random projection frobenius norm expectation} and independence, and is  therefore  omitted for brevity. 
\end{remark}

\begin{proof}[Proof of \Cref{lemma:non-random projection frobenius norm expectation}]
    Let $ Q_j = U_j \Sigma_j V_j^\top$ be the SVD of $Q_j$. Then 
    \begin{align}\nonumber 
      & \|   ( \mathcal C^*  - \widehat { \mathcal C}  )\times_1 Q_1\times \cdots \times_d Q_d \|_F  = \|  ( \mathcal C^*  - \widehat { \mathcal C}  )\times_1 U_1 \Sigma_1  V_ 1^\top \times \cdots \times_d U_d  \Sigma_ d V_d ^\top \| _F 
      \\ \nonumber 
       = &\| ( \mathcal C^*  - \widehat { \mathcal C}  )\times_1 U_1 \Sigma_1  \times \cdots \times_d U_d  \Sigma_ d  \| _F \le \| ( \mathcal C^*  - \widehat { \mathcal C}  )\times_1 U_1  \times \cdots \times_d U_d    \| _F  \| \Sigma_1 \| _{\op } \cdots  \| \Sigma_d \| _{\op }
       \\ \label{eq:frobenius norm for sample tensor with non-ranodm projection}
       = &  \| ( \mathcal C^*  - \widehat { \mathcal C}  )\times_1 U_1  \times \cdots \times_d U_d    \| _F  \| Q_1 \| _{\op } \cdots  \| Q_d \| _{\op },
    \end{align}
    where the second equality follows from the fact that multiplication of orthogonal matrices are norm invariant; and the last inequality follows from $\| Q_j\|_{\op} =\| \Sigma_j\|_{\op}$. For $j=1,\ldots,d$, let $ \beta_j\in \{ 1,\ldots, q_j\}  $. 
    Denote $U_{j, \mi_j,\beta_j} $ the $(\mi_j,\beta_j)$ entry of the matrix $U_j$.
    Then the $(\beta_1,\ldots, \beta_d)$ entry of $( \mathcal C^*  - \widehat { \mathcal C}  )\times_1 U_1  \times \cdots \times_d U_d  $ is 
    \begin{align*}
    &\sum_{\mi_1,\ldots, \mi_d =1 }^m U_{1,\mi_1,\beta_1} \cdots U_{d,\mi_d,\beta_d}  \langle A^*- \widehat A   ,  \phi_{\mi_1}\otimes \cdots \otimes         \phi_{\mi_d} \rangle  
   \\
   =  &\bigg \langle A^*- \widehat A   ,  \sum_{\mi_1=1 }^m  U_{1,\mi_1,\beta_1} 
 \phi_{\mi_1}\otimes \cdots \otimes       \sum_{\mi_d =1}^m U_{d,\mi_d,\beta_d}    \phi_{\mi_d}  \bigg\rangle  .
    \end{align*}   
    Since the columns of $U_j$ are orthogonal, it follows that 
    $  \bigg \|    \sum_{\mi_j=1 }^m  U_{j,\mi_j,\beta_j} 
 \phi_{\mi_j} \bigg\|_{\lt(\mathcal O)} =1   $. So 
 $$\bigg \|    \sum_{\mi_1=1 }^m  U_{1,\mi_1,\beta_1} 
 \phi_{\mi_1} \otimes \cdots \otimes       \sum_{\mi_d =1}^m U_{d,\mi_d,\beta_d}    \phi_{\mi_d} \bigg\|_{\lt(\mathcal O^d)}  =1  .$$
 By \Cref{eq:variance of one test function},
 \begin{align*} &   \E \bigg(  \bigg \langle A^*- \widehat A   ,  \sum_{\mi_1=1 }^m  U_{1,\mi_1,\beta_1} 
 \phi_{\mi_1}\otimes \cdots \otimes       \sum_{\mi_d =1}^m U_{d,\mi_d,\beta_d}    \phi_{\mi_d}  \bigg\rangle  ^2 \bigg) \le \frac{\|A^*\|_\infty}{N}.
 \end{align*}
 Consequently, 
 \begin{align*}
    & \E (  \|   ( \mathcal C^*  - \widehat { \mathcal C}  )\times_1 U_1\times \cdots \times_d U_d \|_F^2 ) 
     \\ 
     = & \E \bigg(  \sum_{\beta_1=1}^{q_1} \cdots \sum_{\beta_d=1}^{q_d} \bigg \langle A^*- \widehat A   ,  \sum_{\mi_1=1 }^m  U_{1,\mi_1,\beta_1} 
 \phi_{\mi_1}\otimes \cdots \otimes       \sum_{\mi_d =1}^m U_{d,\mi_d,\beta_d}    \phi_{\mi_d}  \bigg\rangle  ^2 \bigg)
 =
\frac{\|A^*\|_\infty q_1\cdots q_d}{N}.
 \end{align*}
 The desired result follows from \eqref{eq:frobenius norm for sample tensor with non-ranodm projection}.
\end{proof}

\begin{lemma} \label{lemma: random projection deviation bound}
 Let  $\mathcal C^* = \mathcal C (A^*{ \times_1 \mathcal P _{\mathcal M_1} \times \cdots \times_d \mathcal P _{\mathcal M_d} } )$ and $\widehat {\mathcal C}= \mathcal C ( \widehat A { \times_1 \mathcal P _{\mathcal M_1} \times \cdots  \times_d \mathcal P _{\mathcal M_d} } )$  be two tensors of size $m^d $. Let $\{ Q_j\}_{j=1}^d  \subset \mathbb R^{m\times q_j}$ be a collection of non-random matrices  with  $\|Q_j\|_{\op} \le 1$.  
 Let $\{   W^*_{j}\}_{j=1}^d  \subset \mathbb O^{m\times r_j}$ be a collection of nonrandom matrix and $\{   W^*_{j\perp}\}_{j=1}^d  \subset \mathbb O^{m\times (m-r_j)}$ be the corresponding orthogonal complements.  Let  $\{ \widehat W_{j}\}_{j=1}^d\subset \mathbb O^{m\times r_j} $ is a collection of random matrices   such that 
\begin{align}
\label{eq:frobenius deviation bound assumption}
\|   (W_{j\perp} ^*)^\top \widehat W_j\|_{\op} ^2={ \bigO}_{\p}( m^{-1})  .
\end{align}
  Then for any $1\le k\le d $, it holds that 
\begin{align}\label{eq:frobenius deviation bound general projection}
    \|   ( \mathcal C^*  - \widehat { \mathcal C}  )\times_1 Q_1\times \cdots \times_{k-1} Q_{k-1} \times _{k} \widehat W_{k} \times \cdots \times_d  \widehat W_{d }\|_F^2 
    = {\bigO}_{\p} \bigg(  \frac{ (\prod_{j=1}^{k-1}   q_j ) ( \prod_{j=k}^d  r_j)   }{N}  \bigg). 
\end{align}     
\end{lemma}
\begin{proof}
    We proceed by induction. For $k=d$, \eqref{eq:frobenius deviation bound general projection} is shown in \Cref{lemma:non-random projection frobenius norm expectation}. Suppose \eqref{eq:frobenius deviation bound general projection}  holds for $k= p +1$.  It suffices to show \eqref{eq:frobenius deviation bound general projection}  holds for $k=p$.  Observe that 
    \begin{align}\nonumber
        &\|   ( \mathcal C^*  - \widehat { \mathcal C}  )\times_1 Q_1\times \cdots \times_{p-1} Q_{p-1} \times _{p} \widehat W_{p} \times \cdots\times  _d  \widehat W_{d }\|_F^2 
        \\ \nonumber
        = &\|   ( \mathcal C^*  - \widehat { \mathcal C}  )\times_1 Q_1\times \cdots \times_{p-1} Q_{p-1} \times _{p}  \{    W_{j } ^*   (W_{j } ^*)^\top  +   W_{j\perp} ^*   (W_{j\perp} ^*)^\top  \} \widehat W_{p} \times \cdots\times  _d  \widehat W_{d }\|_F^2 
     \\  \label{eq:frobenius deviation bound term 1}
     \le  &  2 \|   ( \mathcal C^*  - \widehat { \mathcal C}  )\times_1 Q_1\times \cdots \times_{p-1} Q_{p-1} \times_{p}   W_{j } ^*   (W_{j } ^*)^\top  \widehat W_{p}   \times_{p+1} \widehat W_{p+1}  \times \cdots \times_{d}   \widehat W_{d }\|_F ^2 
     \\ \label{eq:frobenius deviation bound term 2}
     + & 2 \|   ( \mathcal C^*  - \widehat { \mathcal C}  )\times_1 Q_1\times \cdots \times_{p-1} Q_{p-1} \times_{p}  W_{j\perp} ^*   (W_{j\perp} ^*)^\top  \widehat W_{p}    \times_{p+1} \widehat W_{p+1} \times \cdots   \times_{d}\widehat W_{d }\|_F ^2  ,
    \end{align}
    where the first equality follows from the fact that 
    $  W_{j } ^*   (W_{j } ^*)^\top  +   W_{j\perp} ^*   (W_{j\perp} ^*)^\top =I_m$, the identity matrix in $\mathbb  R^{m\times m }$. 
    Note that  
    \begin{align}
      \nonumber    \eqref{eq:frobenius deviation bound term 1}  
         \le & \|   ( \mathcal C^*  - \widehat { \mathcal C}  )\times_1 Q_1\times \cdots \times_{p-1} Q_{p-1} \times_{p}  W_{j } ^*       \times_{p+1} \widehat W_{p+1} \times \cdots   \times_{d}\widehat W_{d }\|_F ^2   \| (W_{j } ^*)^\top \|_{\op}^2 \| \widehat W_{p}\|_{\op}^2  
         \\\nonumber
         =& \|   ( \mathcal C^*  - \widehat { \mathcal C}  )\times_1 Q_1\times \cdots \times_{p-1} Q_{p-1} \times_{p}  W_{j } ^*       \times_{p+1} \widehat W_{p+1} \times \cdots   \times_{d}\widehat W_{d }\|_F ^2
        \\ \label{eq:frobenius deviation bound term 11}
        = & {\bigO}_{\p}\bigg(    \frac{ (\prod_{j=1}^{p-1}   q_j )
        r_p ( \prod_{j=p+1}^d  r_j)   }{N}  \bigg)     
        = {\bigO}_{\p} \bigg(  \frac{ (\prod_{j=1}^{p-1}   q_j ) ( \prod_{j=p}^d  r_j)    }{N}  \bigg),
    \end{align}
     where the  second  equality  follows from the fact that  \eqref{eq:frobenius deviation bound general projection} holds for   $k= p  $ and   that projection matrices has operator norm bounded by 1. 
    In addition,
     \begin{align} \nonumber 
     \eqref{eq:frobenius deviation bound term 2}  
     \le & \|   ( \mathcal C^*  - \widehat { \mathcal C}  )\times_1 Q_1\times \cdots \times_{p-1} Q_{p-1} \times_{p}  W_{j\perp} ^*      \times_{p+1} \widehat W_{p+1} \times \cdots   \times_{d}\widehat W_{d }\|_F ^2 \|(W_{j\perp} ^*)^\top  \widehat W_{p} \|_{\op} ^2 
     \\ \label{eq:frobenius deviation bound term 21}  
        = & {\bigO}_{\p}\bigg(    \frac{ (\prod_{j=1}^{p-1}   q_j )
        (m-r_p)  ( \prod_{j=p+1}^d  r_j)   }{N}  \bigg)   {\bigO}_{\p}( m^{-1})   
        = {\bigO}_{\p} \bigg(  \frac{ (\prod_{j=1}^{p-1}   q_j ) ( \prod_{j=p}^d  r_j)    }{N}  \bigg) , 
        \end{align}
        where the first equality follows from the fact that  \eqref{eq:frobenius deviation bound general projection} holds for   $k= p  $ and \eqref{eq:frobenius deviation bound assumption}; and  the last equality follows from the observation that $\frac{m-r_p}{m } \le 1 \le  r_p $.
         The case of $k=p+1$ follows from \eqref{eq:frobenius deviation bound term 11}, \eqref{eq:frobenius deviation bound term 21}.
\end{proof}

\begin{lemma} \label{lemma:relate low rank matrix and low rank tensor}
Let $\{ \sigma^*_{j,\ri}\}_{\ri={\R_j} +1}^\infty$ be defined as in \eqref{eq:singular values after reshape}, and     $\xiapp$  be defined in \eqref{eq:definition of low rank bias}. Then for all $j\in \{ 1,\ldots, d\}$, it holds that $$ \sqrt{ \sum_{\rho=\R_j+ 1}^\infty \sigma^{*2}_{j, \rho} }   \le  \xiapp .  $$
\end{lemma}
\begin{proof}
Denote 
   $  x= z_j $ and  $y =(z_1,\ldots,z_{j-1}, z_{j+1},\ldots, z_d). $ 
   By the definition of $\xiapp$, there exists a sequence  $B_k \in \lt(\mathcal O^d) $ with $k \in \mathbb Z^+ $   such that $\rank_j(B_k) \le \R_j$ for $j=\{ 1,\ldots, d\}$,  and that 
   $$ \lim_{k\to \infty } \|A^*(x, y) -B_k (x, y)\|_{\lt(\mathcal O^d)} =\xiapp.$$ See \Cref{definition:function range and rank} for the definition of $\rank_j$.    Then since $\rank_j(B_k) \le \R_j$, it follows that the two-variable function 
   $B_k(x,y) $ has rank at most $\r_j$. Therefore by \eqref{remark:best low rank}, 
   \begin{align} \label{eq:low rank matrix and tensor limit bounds}
       \|A^* (x, y) - [A^*] _{\R_j}  (x,y)  \|_{\lt(\mathcal O^d)}  \le \|A^* (x, y) - B_k(x, y) \|_{\lt(\mathcal O^d ) }.
   \end{align}  
   Since $ \|A^* (x, y) - [A^*]_{\R_j} (x,y)  \|_{\lt(\mathcal O^d)}^2  = \sum_{ \rho={\R_j} +1 }^\infty \sigma^{*2}_{j,\rho}$, \eqref{eq:low rank matrix and tensor limit bounds} implies   that 
    \begin{align*} 
        \sqrt { \sum_{ \rho={\R_j} +1}^\infty \sigma^{*2}_{j,\rho}}  \le \lim_{k\to \infty }\|A^* (x, y) - B_k(x, y) \|_{\lt(\mathcal O^d ) } =\xiapp.
   \end{align*}  
\end{proof}

\begin{lemma} \label{lemma:mirky projection bound 1}
 Let  $\mathcal C^* = \mathcal C (A^* { \times_1 \mathcal P _{\mathcal M_1} \times \cdots  \times_d \mathcal P _{\mathcal M_d} } )$.     For $j\in \{1,\ldots,d \}$, let  $  U^* _j \in \mathbb R^{m\times \R_j}$ denote the matrix  with columns being the leading $\R_j$ singular vectors of $\C^*(\mi_j;\mi_1,\ldots, \mi_{j-1}, \mi_{j+1},\ldots, \mi_d)  \in \mathbb R^{m\times m^{d-1}}$.  Then
$$\|  \C^*  {\times_j ( I_m- U^*_j  U^{*\top} _j ) } \|_F
          \le 
        \xiapp +Cm^{-\alpha},    $$ 
        where $\xiapp$ is defined in \eqref{eq:definition of low rank bias}.
\end{lemma}

\begin{proof}Without loss of generality, assume $j=1$. 
Denote $x=z_1$ and $ y=(z_2,\ldots, z_d)$, and   $\C^*_1 = \C^*  (\mi_1; \mi_2, \ldots,\mi_d)$.
Then from \Cref{subsection:unfolding coefficient tensor}, it follows that 
$ \C^*_1$ is the coefficient matrix of $ \C (A^*\times_x\mathcal P_{\mathcal M_1}\times_y \mathcal P_{\mathcal M^{\otimes d-1}}),$
where
$$ \mathcal M^{\otimes d-1} =\mathcal M_2 \otimes \cdots \otimes \mathcal M_d.$$
We have 
$  
   \|  \C^*  {\times_1 ( I_m- U^*_1  U^{*\top} _1 ) } \|_F  = \| (I_m -U^*_1  U^{*\top} _1 )\C^*_1 \|_F = \|  \C_1^* - [\C_1^* ]_{\R_1}\| _F. 
$  
Note that 
\begin{align}\label{eq:projection error when applying Mirsky}
     \|  \C_1^* - [\C_1^* ]_{\R_1} \|_F   =  \sum_{ \ri =\R_1+1}^\infty  \sigma_\ri ^2 (\C_1^* )    = 
     \sum_{\ri =\R_1+1}^\infty   \sigma_\ri  ^2(   A^*\times_x\mathcal P_{\mathcal M_1}\times_y \mathcal P_{\mathcal M^{\otimes d-1}}  )
\end{align}
where the second equality follows from \Cref{corollary:singular value preserving}.  
\\
\\
We are in the position to apply \Cref{lemma:Mirsky in Hilbert space},  the  Mirsky's inequality in Hilbert space. 
Observe that 
   \begin{align*} \nonumber 
      &  \|  \C_1^* - [\C_1^* ]_{\R_1} \|_F    =
      \sqrt{ \sum_{ \ri =\R_1+1}^\infty   \sigma_\ri  ^2(   A^*\times_x\mathcal P_{\mathcal M_1}\times_y \mathcal P_{\mathcal M^{\otimes d-1}}  )}
     \\ \nonumber 
     =&\sqrt{ \sum_{ \ri =\R_1+1}^\infty   \sigma_\ri  ^2(   A^*\times_x\mathcal P_{\mathcal M_1}\times_y \mathcal P_{\mathcal M^{\otimes d-1}}  ) } \pm \sqrt{ \sum_{ \ri =\R_1+1}^\infty   \sigma_ \ri  ^2(   A^* (x,y) ) }
     \\\nonumber  
     \le & \sqrt{ \sum_{\ri =\R_1+1}^\infty  \bigg\{  \sigma_\ri   (   A^*\times_x\mathcal P_{\mathcal M_1}\times_y \mathcal P_{\mathcal M^{\otimes d-1}}  )  -     \sigma_\ri   (   A^* (x,y)    )   \bigg\}^2 }  + \sqrt{ \sum_{\ri =\R_1+1}^\infty   \sigma_\ri  ^2(   A^*  (x,y)) },
    \end{align*} 
    where the inequality follows from  the triangle inequality. 
    We have that 
    \begin{align*}
         &\sum_{\ri =\R_1+1}^\infty  \bigg\{  \sigma_\ri  (   A^*\times_x\mathcal P_{\mathcal M_1}\times_y \mathcal P_{\mathcal M^{\otimes d-1}}  )-     \sigma_\ri  (   A^* (x,y)    )   \bigg\}^2 
         \\ \le  &  \| A^* - A^*\times_x\mathcal P_{\mathcal M_1}\times_y \mathcal P_{\mathcal M^{\otimes d-1} } \|_{\lt(\mathcal O^d)} ^2  \le C  m ^{-2\alpha}  ,
    \end{align*}
    where the first inequality follows from \Cref{lemma:Mirsky in Hilbert space}; and the second inequality follows from  
\Cref{theorem: approximation theory} with $C$ being a sufficiently large constant. 
  By viewing $A^*(x, y)$ as a function of $(x, y)$, from \eqref{eq:singular values after reshape}  the singular values of $A^* (x, y)$ is $\{\sigma^*_{1, \ri}\}_{\ri=1}^\infty $. 
  So $  \sum_{\ri =\R_1+1}^\infty   \sigma_\ri  ^2(   A^*  (x,y))   =  \sum_{\ri=\R_1+1}^\infty \sigma^{*2}_{1, \ri} . $ Therefore 
   \begin{align}  \label{eq:Mirsky tensor version 12} 
        \|  \C_1^* - [\C_1^* ]_{\R_1} \|_F   \le \sqrt{C}m^{-\alpha} + \sqrt{ \sum_{\ri =\R_1+1}^\infty   \sigma_{1,\ri} ^{*2}  }   = \sqrt{C}m^{-\alpha} + \sqrt{ \sum_{\ri=\R_1+1}^\infty \sigma^{*2}_{1, \ri}   }  \le \sqrt{C}m^{-\alpha}+ \xiapp  ,
\end{align}
     where the last inequality follows from \Cref{lemma:relate low rank matrix and low rank tensor}.

\end{proof}

\begin{lemma} \label{lemma:Mirsky tensor version 2}
Let $\{ Q_j\}_{j=2}^d  \subset \mathbb R^{m\times m}$    be an arbitrary  collection  matrices such that $\| Q_j\|_{\op} \le 1$.  
Then \begin{align}\label{eq:Mirsky tensor version 2} &\| \C^*   {   \times_2     Q_2  \times \cdots \times_d    Q_d } (\mi_1; \mi_2, \ldots,\mi_d) - [ \C^*    { \times_2     Q_2  \times \cdots \times_d     Q_d  (\mi_1; \mi_2, \ldots,\mi_d)]_{\R_1} } \|_F 
\\ \nonumber 
\le &\xiapp + Cm^{-\alpha} ,
\end{align} 
where $\xiapp$ is defined in \eqref{eq:definition of low rank bias}.
\end{lemma}
\begin{proof}
Denote $\C^*_1 = \C^*  (\mi_1; \mi_2, \ldots,\mi_d)$. Then
   
    $$ \C^*   {   \times_2     Q_2  \times \cdots \times_d    Q_d } (\mi_1; \mi_2, \ldots,\mi_d) = \C^*_1 (Q_2 \otimes \cdots \otimes Q_d)(\mi_1; \mi_2, \ldots,\mi_d) ,$$
    where $ Q_2 \otimes \cdots \otimes Q_d $ is a matrix of size $ m ^{d-1}\times m^{d-1}$.
Since     $[\C^*_1]_{\R_1}$ has rank at most $\R_1$,   
    $ [\C^*_1]_{\R_1} (Q_2 \otimes \cdots \otimes Q_d) $ has rank at most $\R_1$. So 
\begin{align*}
  \eqref{eq:Mirsky tensor version 2} = &\|\C_1^* (Q_2 \otimes \cdots \otimes Q_d) - [\C_1^* (Q_2 \otimes \cdots \otimes Q_d)]_{\R_1} \|_F 
  \le   \|\C_1^* (Q_2 \otimes \cdots \otimes Q_d) - [\C_1^* ]_{\R_1} (Q_2 \otimes \cdots \otimes Q_d)\|_F 
  \\
   = &  \| (\C_1^* - [\C_1^* ]_{\R_1}) (Q_2 \otimes \cdots \otimes Q_d)  \|_F
   \le  \|  \C_1^* - [\C_1^* ]_{\R_1} \|_F \|Q_2 \otimes \cdots \otimes Q_d   \|_{\op} 
   \\
   \le & \|  \C_1^* - [\C_1^* ]_{\R_1} \|_F  
   \le  \xiapp +Cm^{-\alpha} 
\end{align*}
where the first  inequality follows from  \eqref{remark:best low rank}; the second inequality follows from $\| Q_2 \otimes \cdots \otimes Q_d  \|_{\op} \le \| Q_1\|_{\op} \cdots \|Q_d\|_{\op} \le 1 $;  and the last inequality follows from \eqref{eq:Mirsky tensor version 12}.
\end{proof}

\section{Perturbation bounds for matrices}

Let $\mathcal C\in \mathbb R^{m\times l }$ be any matrix.  Denote $k= \min\{m,l \}$. Then the singular value decomposition of $\mathcal C$ satisfies 
$$ \mathcal C= U \Sigma V ^\top .$$
Here $ U \in \mathbb O^{m\times k} $ with columns being the left singular vectors of $\mathcal C$; $ V \in \mathbb O^{l\times k} $ with columns being the right singular vectors of $\mathcal C$; and $\Sigma \in \mathbb R^{k\times k}$ is a diagonal matrix with diagonal entries $\sigma_1(\mathcal C) \ge \sigma_2(\mathcal C) \ge \ldots \ge 0$ being the singular values of $\mathcal C$. 
\\
Let $\R\le k$ be a positive integer. Suppose that $ U_\R \in \mathbb O^{m\times \R}  , V_\R \in \mathbb O^{l\times \R} $,    $ U_{\R  \perp}\in \mathbb O^{m\times (k- \R)}$, and $  V _{\R  \perp} \in \mathbb O^{l\times (k- \R)} $ are  such that 
$$ [ U_\R,U_{\R  \perp} ] =U  \quad \text{and} \quad [ V_\R,V_{\R  \perp} ] =V .$$
Then  $ U_\R   $ has columns   being the first $ r$ leading singular vectors of $\mathcal C$. Let $\Sigma_\R \in \mathbb R^{\R\times \R}$ be the diagonal matrix whose diagonal entries are $\{ \sigma_1(\mathcal C) ,\ldots, \sigma_\R(\mathcal C)\}$, and $\Sigma_{\R\perp} \in \mathbb R^{{k-\R}\times {k- \R}}$ be the diagonal matrix whose diagonal entries are $\{ \sigma_{\R+1}(\mathcal C) ,\ldots, \sigma_k(\mathcal C)\}$. Then 
\begin{align}\label{eq:matrix SVD general form} \mathcal C =U _\R \Sigma_ \R V_\R ^\top  +U _{\R\perp}  \Sigma_ {\R\perp} V _{\R\perp}  ^\top .
\end{align}
 Denote 
 $$ [\mathcal C]_\R = U _\R \Sigma_ \R V_\R ^\top.$$ 
Then $[\mathcal C]_\R$ is the best rank $\R$ approximation of $\mathcal C$ 
 in the sense that 
\begin{align*} \|\mathcal C - [\mathcal C  ] _r\|_{\op} = &  \min_{ B  \in \mathbb R^{m\times l},  \text{ rank}(B) \le r  } \|  \mathcal C   -B\|_{\op } =\sigma_{\R+1} (\mathcal C) ,
\\
\|\mathcal C - [\mathcal C  ] _r\|_{F} = &  \min_{ B  \in \mathbb R^{m\times l},  \text{ rank}(B) \le r  } \|  \mathcal C   -B\|_{F } =\sigma_{\R+1} (\mathcal C) .
\end{align*}
\begin{theorem}
    \label{theorem:matrix singular value perturbation}
Let $\mathcal C $ and $\mathcal C ' $ be two matrices of  dimensions $n_1\times n_2$. Denote $n=\min\{ n_1,n_2\}$. Let 
$ \{ \sigma_j (\mathcal C)  \}_{j=1}^n    $  be the singular values of $\mathcal C$ in the decreasing order,  and $ \{ \sigma_j (\mathcal C')  \}_{j=1}^n    $   be the singular values of $\mathcal C'$ in the decreasing order.
\\
\\
{\bf (a)}{(Weyl's inequality)} For  all $ j \in \{ 1,\ldots, n\} $, it holds that 
$ |\sigma_j (\mathcal C )  - \sigma_j (\mathcal C ')  | \le \| \mathcal C - \mathcal C '\|_{\op } $  . 
  \\
  \\
  {\bf (b)}{(Mirsky's inequality)} It holds that 
  $$ \sum_{j=1}^ n  ( \sigma_{j} (\mathcal C) -\sigma_{j}(\mathcal C  ') )^2 \le  \|\mathcal C - \mathcal C ' \|_F ^2  =  \sum_{j=1}^ n    \sigma_{j} ^2 (\mathcal C  - \mathcal C ')   
   ,$$
  where $\sigma_{j}   ( \mathcal C  -\mathcal C  ')$ denotes the $j$-th singular value of $\mathcal C  -\mathcal C ' $.
\end{theorem}
\begin{proof}
    See \cite{horn2012matrix} for the proofs of both results. 
\end{proof}

\begin{lemma} \label{eq:projection error for approximately low rank matrix}
    Let $\mathcal A  $, $\mathcal B$ and $\mathcal C$ be three matrices of size $m\times l $ and that  
     $\mathcal C=  \mathcal A + \mathcal B$. Suppose for $r\le \min\{ m,l\}$, the SVD of $\mathcal C$  satisfies
     $$\mathcal C = U_{r} \Sigma_r  V_r +  U_{r \perp } \Sigma_{r \perp }  V_{r \perp }   $$
     as in \eqref{eq:matrix SVD general form}.
     Then 
     $$\|   U_{r\perp}  U_{r\perp} ^\top \mathcal A \|_F   \le 2    \|\mathcal B\|_{F}    + \| \mathcal A - [\mathcal A]_r\|_F    .$$
\end{lemma}
\begin{proof}
    Note that 
    \begin{align}
      \|   U_{r\perp}  U_{r\perp} ^\top \mathcal A \|_F   =  & \|   U_{r\perp}  U_{r\perp} ^\top ( \mathcal C -\mathcal B ) \|_F  
      \le \|   U_{r\perp}  U_{r\perp} ^\top \mathcal C   \|_F  +\|  U_{r\perp}  U_{r\perp} ^\top \mathcal B   \|_F \label{eq:projection bounds approximate low rank}.
    \end{align}
   Observe  that  
    \begin{align*}
      \|  U_{r\perp}  U_{r\perp} ^\top \mathcal C   \|_F  = & \|  U_{r\perp}  U_{r\perp} ^\top \Sigma_{r \perp }  V_{r \perp }    \|  _F  = \|  U_{r\perp} ^\top \Sigma_{r \perp }  V_{r \perp }    \|  _F  =   \| \mathcal   C - [  \mathcal C]_r\|_F 
      \\ 
      \le & \| \mathcal   C - [  \mathcal A]_r\|_F  \le 
        \| \mathcal  C -    \mathcal A \|_F +\| \mathcal  A - [  \mathcal A]_r\|_F  =   \| \mathcal  B \|_F +\| \mathcal  A - [  \mathcal A]_r\|_F   ,
    \end{align*}
    where the first equality follows from the fact that  $ \mathcal U_{r\perp} ^\top \mathcal U_{r } =0 $; and the second inequality follows from \Cref{lemma:matrix norm invariance}. 
    In addition,
    $$\|   U_{r\perp}  U_{r\perp} ^\top \mathcal B   \|_F \le  \|  U_{r\perp} \|_{\op }\|  U_{r\perp} ^\top   \|_{\op } \| \mathcal B\| _F  =  \|\mathcal  B\| _F, $$
    where the last equality follows from the fact that $ \| U_{r\perp} \|_{\op } =1$. The desired result directly follows from \eqref{eq:projection bounds approximate low rank}.
\end{proof}

\begin{lemma} \label{eq:subspace consistency to matrix consistency}
    Let $\mathcal C \in \mathbb R^{m\times n}$ and $V, U \in \mathbb O^{m\times r}$.
     Denote $\mathcal P_V = VV^\top $ and $\mathcal P_U = UU^\top $. Then 
    $$ \|\mathcal P_V \mathcal C\|_F \le    \sigma_{\min }^{-1}(U^\top V)  \|\mathcal P_U \mathcal P_V \C\|_F .
    $$
\end{lemma}
\begin{proof}
    Note that 
    \begin{align*}
       \|\mathcal P_U \mathcal P_V \C\|_F = & \| UU^\top  VV^\top  \mathcal C  \| _F  = \|  U ^\top VV^\top \C\|_F 
        \\
        \ge & \sigma_{\min} (U^\top V) \| V^\top  \mathcal C  \|_F = \sigma_{\min} (U^\top V) \|  V V^\top   \mathcal C  \|_F =  \sigma_{\min} (U^\top V) \|\mathcal P_V \mathcal C\|_F ,
    \end{align*}
    where   the second equality follows from \Cref{lemma:matrix norm invariance}; the inequality follows from \Cref{lemma:frobenius norm bounded by minimal singular values}; and the third   equality follows from \Cref{lemma:matrix norm invariance}.
    \end{proof}

 \begin{lemma} \label{lemma:matrix norm invariance} Suppose $U\in \mathbb O^{m\times r} $ and $M\in \mathbb R^{m\times n}$. Then 
$\| UU^\top M\|_F = \| U^\top M\|_F. $ 
\end{lemma}
\begin{proof}  Observe that 
$$\| UU^\top M\|_F ^2 = \Tr ( UU^\top M M^\top UU^\top )= \Tr(U^\top U U^\top M M^\top U  )  = \Tr(  U^\top M M^\top U  ) = \| U^\top M\|_F ^2 $$
\end{proof}

\begin{lemma} \label{lemma:frobenius norm bounded by minimal singular values}
    Let $ \mathcal A   \in \mathbb R^{m \times m  }$  and $\mathcal B \in \mathbb R^{m \times n  }$ be  two matrices. Then 
    $$ \sigma_{\min}(\mathcal A)\| \mathcal B\|_F \le  \|\mathcal A\mathcal B\|_F.$$
\end{lemma}
\begin{proof} 
    Denote   the SVD of $ A$ as 
     $\mathcal A= \sum_{j=1}^m \sigma_j(\mathcal A) v_j w_j^\top  $. Then 
     \begin{align*}
         \|\mathcal A \mathcal B\|_F^2  = \|  \sum_{j=1}^m v_j  \{ \sigma_j(A) w_j^\top \mathcal B \} \|_F^2 =\sum_{j=1}^m   \| \sigma_j(\mathcal A) w_j^\top  \mathcal B \|_2^2  \ge \sigma^2_{min}(\mathcal A)\sum_{j=1}^m   \|  w_j^\top \mathcal B \|_2^2 = \sigma_{min}^2 (\mathcal A) \|   \mathcal  B \|_F^2,
     \end{align*}
     where the second equality follows from the fact that $\{v_j\}_{j=1}^m$ is a set of complete basis vectors; and the last equality follows from the fact that $\{w_j\}_{j=1}^m$ is a set of complete basis vectors.
\end{proof}

\begin{theorem}[Wedin] \label{theorem:Wedin}
Suppose without loss of generality that $n_1\le n_2$.
Let $M =M^*+E, M^*$ be two matrices 	in $\mathbb R^{n_1\times n_2}$  whose svd are given respectively by
\begin{align*}
M^*= \sum_{i=1}^{n_1}\sigma^*_i u_i^* {v_i^*} ^\top\quad \text{and} \quad 
M=  \sum_{i=1}^{n_1}\sigma _i u_i  {v_i } ^\top
\end{align*}
where $\sigma_1^*\ge \cdots \ge \sigma_{n_1}^* $ and $\sigma_1 \ge \cdots \ge \sigma_{n_1}  $. For any $r\le n_1 $, let
\begin{align*}
 &\Sigma^* =\text{diag}([\sigma_1^*, \cdots, \sigma_r^*]) \in \mathbb R^{r\times r },\quad   U^*  =[u_1^*, \cdots, u_r ^*] \in \mathbb O^{n_1\times r } , \quad  V=[v_1^*, \cdots, v_r^* ] \in \mathbb O^{ n_2 \times r},
\\
& \Sigma  =\text{diag}([\sigma_1, \cdots, \sigma_r]) \in \mathbb R^{r\times r },\quad   U =[u_1, \cdots, u_r ] \in \mathbb \mathbb O^{n_1\times r } , \quad  V=[v_1, \cdots, v_r ] \in \mathbb \mathbb O^{n_2\times r }.
\end{align*}
Denote $\mathcal P_{U^*} =  U^*  U^{*^\top}    $
and $\mathcal P_{U} = U U^\top  $. 
If $\|E\|_{\op} < \sigma_r^*-\sigma_{r+1}^* $, then  
\begin{align}\label{eq:davis khan} \|\mathcal P_{U^*}   -\mathcal P_{U }\|_{\op}    \le \frac{ \sqrt 2  \| E\|_{\op }   }{\sigma_r^*-\sigma_{r+1}^*-\|E \|_{\op} }   .
\end{align}

\end{theorem}
\begin{proof}
This is the well-known Wedin's theorem. See  section 2 of \cite{chen2021spectral} for proofs.   
\end{proof} 

\begin{corollary}[Wedin] \label{corollary:Wedin}
Suppose all  the  conditions   in \Cref{theorem:Wedin} hold. 
Suppose in addition that both $M^*$ and $M $ are exactly rank $\R$.  Let $W^*  \in \mathbb O^{n_1\times \R}$ be such that the columns of $W^*$ spans the column space of $M^*$,
and  $W    \in \mathbb O^{n_1\times \R}$ be such that the columns of $W $ spans the column space of $M$, Then 
\begin{align*}  \| W^* W^{*\top}  - WW^\top\|_{\op}    \le \frac{ \sqrt 2  \| E\|_{\op }   }{\sigma_r^* -\|E \|_{\op} }   .
\end{align*}
\begin{proof}
    It suffices to observe that $ W^* W^{*\top}  =  U^* U^{*\top} $  and $ W  W^{ \top}  =  U  U^{ \top} $, and $ \sigma^*_{r+1}=0$.
\end{proof}

\end{corollary}

\begin{corollary}  \label{theorem:Wedin Hilbert space} Let $ \mathcal W$ and $ \mathcal W'$ 
be two Hilbert spaces. 
Let $M$ and $E$ be two finite rank operators on $ \mathcal W\otimes \mathcal W' $ and denote $M =M^*+E $. Let the SVD of $M^* $ and $M$ are given respectively by
\begin{align*}
M^*= \sum_{i=1}^{\R_1}\sigma^*_i u_i^* \otimes  v_i^* \quad \text{and} \quad 
M=  \sum_{ i =1}^{\R_2}\sigma _i u_i   \otimes  v_i   
\end{align*}
where $\sigma_1^*\ge \cdots \ge \sigma_{\R_1}^* $ and $\sigma_1 \ge \cdots \ge \sigma_{\R_2}  $.   For $   r  \le min\{\R_1,\R_2 \}$, denote 
%\red{here $U^*,U$ should be converted to caligraphic form}
\begin{align*}
U^*=\Span(\{ u_i^* \}_{i=1}^r  )  \quad \text{and} \quad U =\Span(\{ u_ i  \}_{i=1}^r  )  
\end{align*}
Let $\mathcal P_{U^*} $ to be projection operator  from $\mathcal W$ to $U^*$, and  $\mathcal P_{U } $ to be projection operator from $\mathcal W$ to $U $.
If $\|E\|_{\op} < \sigma_r^*-\sigma_{r+1}^* $, then  
$$  \|\mathcal P_{U^*}   -\mathcal P_{U }\|_{\op}    \le \frac{ \sqrt 2  \|E\|_{\op}  }{\sigma_r^*-\sigma_{r+1}^*-\|E \|_{\op} } .$$

\end{corollary} 
\begin{proof}
Let $$ \mathcal S= \Span (\{ u_i^* \}_{i=1}^{\R_1} ,\{ u_i  \}_{i=1}^{\R_2} ) \quad \text{and} \quad \mathcal S'= \Span (\{ v_i^* \}_{r=1}^{\R_1} ,\{ v_i  \}_{i=1}^{\R_2} ) . $$
Then $M^*$ and $M$ can be viewed as finite-dimensional matrices on $\mathcal S\otimes \mathcal S' $. Since $\mathcal P_{U^*} = \sum_{i=1}^r u^*_i \otimes u^*_i  $
and $\mathcal P_{U} = \sum_{i=1}^r u_i \otimes u_i  $, the desired result follows from \Cref{theorem:Wedin}.

\end{proof}

\section{Perturbation bounds in function spaces}\label{appendix:Perturbation bounds}

\begin{lemma} \label{min max principle}
Let $A$ and $B$ be two compact self-adjoint operators on a Hilbert space $\mathcal W$.  Denote $\lambda_k(A)$ and $\lambda_k(A+B)$ to be the k-th eigenvalue of $ A$ and $A+B$ respectively. Then 
$$|\lambda_k(A+B) - \lambda _k(A) |\le \|B\|_{\op} . $$

\end{lemma}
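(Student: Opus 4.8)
\textbf{Proof proposal for \Cref{min max principle}.}

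The plan is to prove this standard eigenvalue stability estimate via the Courant--Fischer min-max characterization of eigenvalues of compact self-adjoint operators on a Hilbert space. First I would recall that for a compact self-adjoint operator $A$ on $\mathcal W$ with eigenvalues $\lambda_1(A)\ge \lambda_2(A)\ge\cdots$ (counted with multiplicity, and accumulating only at $0$), one has the variational formula
\begin{align*}
\lambda_k(A) = \max_{\substack{V\subset \mathcal W \\ \dim V = k}} \ \min_{\substack{u\in V \\ \|u\|=1}} \ \langle Au, u\rangle,
\end{align*}
where the maximum runs over all $k$-dimensional linear subspaces $V$ of $\mathcal W$. This is the infinite-dimensional analogue of the Courant--Fischer theorem; I would cite it as a classical fact from the spectral theory of compact self-adjoint operators (e.g.\ Reed--Simon), since the excerpt does not give a self-contained proof of it but treats such spectral facts as known background.

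Next I would carry out the comparison. Fix $k$. Let $V^*$ be a $k$-dimensional subspace that attains (or nearly attains, if one wants to be careful about whether the max is achieved) the maximum in the formula for $\lambda_k(A+B)$. For every unit vector $u\in V^*$,
\begin{align*}
\langle (A+B)u, u\rangle = \langle Au, u\rangle + \langle Bu, u\rangle \ge \langle Au, u\rangle - \|B\|_{\op},
\end{align*}
using the operator-norm bound $|\langle Bu,u\rangle|\le \|B\|_{\op}\|u\|^2 = \|B\|_{\op}$. Taking the minimum over unit $u\in V^*$ of both sides and then using that $V^*$ is a valid competitor in the max defining $\lambda_k(A)$ gives
\begin{align*}
\lambda_k(A+B) = \min_{\substack{u\in V^* \\ \|u\|=1}} \langle (A+B)u,u\rangle \le \min_{\substack{u\in V^* \\ \|u\|=1}} \langle Au,u\rangle + \|B\|_{\op} \le \lambda_k(A) + \|B\|_{\op}.
\end{align*}
By symmetry, writing $A = (A+B) + (-B)$ and applying the same argument with $-B$ in place of $B$ (noting $\|-B\|_{\op}=\|B\|_{\op}$) yields $\lambda_k(A)\le \lambda_k(A+B) + \|B\|_{\op}$. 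Combining the two inequalities gives $|\lambda_k(A+B) - \lambda_k(A)|\le \|B\|_{\op}$, as claimed.

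The main obstacle, such as it is, is purely bookkeeping rather than mathematical depth: one must be slightly careful about the indexing convention for eigenvalues of compact self-adjoint operators that are not necessarily positive (the negative part of the spectrum, and whether $0$ is counted), and about stating the min-max formula in exactly the form needed. One clean way to sidestep subtleties is to note that $\|B\|_{\op}I \pm B \succeq 0$, so $A - \|B\|_{\op}I \preceq A+B \preceq A + \|B\|_{\op}I$ in the sense of self-adjoint operators, and then invoke monotonicity of each eigenvalue $\lambda_k(\cdot)$ under the operator (Loewner) order together with the fact that $\lambda_k(A + cI) = \lambda_k(A) + c$. I would likely present the min-max argument as the main line and mention the operator-order shortcut as an alternative.
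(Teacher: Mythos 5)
Your proof is correct and takes essentially the same approach as the paper: both invoke the Courant--Fischer min--max characterization of eigenvalues for compact self-adjoint operators, bound the perturbation term via $|\langle Bu,u\rangle|\le\|B\|_{\op}$ inside the min--max, and obtain the two-sided bound by symmetry. (One small slip: the inline inequality you display is $\langle(A+B)u,u\rangle\ge\langle Au,u\rangle-\|B\|_{\op}$, but the step you actually use in the next line is the reverse direction $\langle(A+B)u,u\rangle\le\langle Au,u\rangle+\|B\|_{\op}$; both follow from $|\langle Bu,u\rangle|\le\|B\|_{\op}$, so the argument stands.)
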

\begin{proof}
By the min-max principle,  for any    compact self-adjoint operators $H$
and   any $S_k\subset \mathcal W$ being a $k$-dimensional  subspace
$$ \max_{ S_k } \min_{x \in S_k, \|x\|_\mathcal W =1 } H[x, x] =\lambda_k(H) .$$
It follows that 
\begin{align*}
\lambda_k(A+B) =&\max_{ S_k } \min_{x \in S_k, \|x\|_\mathcal W=1 } (A+B)[x,x]
\\
\le &\max_{ S_k } \min_{x \in S_k, \|x\|_\mathcal W=1 }  A  [x,x] +\|B\|_{\op}\|x\|_\mathcal W^2
\\
=& \lambda_k(A) + \|B\|_{\op}.
\end{align*}
The other direction follows from symmetry.
\end{proof}

\begin{lemma} \label{lemma:SVD pertubation for operators}
Let $\mathcal W $ and $\mathcal W'$ be two separable Hilbert spaces.
Suppose $A$ and $B$ are two compact operators from  $ \mathcal W \otimes \mathcal W'\to \mathbb R $.  Then 
$$\big|   \sigma_k(A+B)  -\sigma_k(A)  \big| \le \|B\|_{\op}.$$
\begin{proof}
Let $\{\phi_i\}_{i=1}^\infty $ and $\{\phi_i'\}_{i=1}^\infty $ be the orthogonal basis of $\mathcal W $ and $\mathcal W'$. 
Let 
$$\mathcal W_j =\Span (\{\phi_i\}_{i=1}^j) \quad \text{and}\quad \mathcal W_j' =\Span (\{\phi_i'\}_{i=1}^j) . $$ Denote 
$$ A_j =A\times \mathcal{P}_{\mathcal W_j} \times \mathcal{P}_{\mathcal W_j'}  \quad \text{and} \quad (A+B)_j =(A+B)\times \mathcal{P}_{\mathcal W_j} \times \mathcal{P}_{\mathcal W_j'}  .$$
Note that $(A+B)_j=A_j+B_j$ due to linearity. 
Since $A$ and $A+B$ are    compact, 
$$\lim_{j\to \infty} \|A-A_j\|_F = 0 \quad \text{and} \quad  \lim_{j\to \infty} \|(A+B)-(A+B)_j\|_F = 0.$$
Then  $ AA^\top  $ and $ A_j A_j^\top$ are two compact self-adjoint operators on $\mathcal W$ and that 
$$\lim_{j\to \infty }\|AA^\top  - A_j A_j^\top \|_F =0 . $$
By \Cref{min max principle}, $\lim_{j\to \infty } \lambda_k(A_jA_j^\top) = \lambda_k(AA^\top). $  Observe that Since $ \sigma_k(A_j) $ and $\sigma_k(A) $ are both positive,  that  $  \lambda_k(A A^\top) = \sigma_k^2(A
)  $, and that   $  \lambda_k(A_j A_j^\top) = \sigma_k^2(A_j
)  $.  It follows  that 
 $$\lim_{j\to \infty }  \sigma_k(A_j )  =  \sigma_k(A ) . $$ 
Similarly
$  \lim_{j\to \infty }  \sigma_k((A+B)_j )  =  \sigma_k(A +B) . $  
By the   finite dimensional SVD perturbation theory  (see Theorem 3.3.16 on page 178 of \cite{horn1994topics}),  it follows that 
$$ \big|  \sigma_k((A+B)_j )   -  \sigma_k(A_j)  \big| 
  \le \|B_j\|_{\op} \le \|B\|_{\op}.$$
The desired result follows by taking the limit as $j\to \infty$. 

\end{proof}

\end{lemma}

\begin{lemma}[Mirsky in Hilbert space] \label{lemma:Mirsky in Hilbert space}
    Suppose $A$ and $B$ are two compact operators in $\mathcal W\otimes \mathcal W'$, where $\mathcal W$ and $\mathcal W'$ are two separable Hilbert spaces. Let 
$ \{ \sigma_k (A)   \}_{k=1}^ \infty    $  be the singular values of $\mathcal A$ in the decreasing order,  and $ \{ \sigma_k (\mathcal B)  \}_{k=1}^\infty    $   be the singular values of $\mathcal B$ in the decreasing order. Then 
$$\sum_{k=1}^\infty (\sigma_k(A) -\sigma_k(B))^2 \le \|A - B \|_F ^2  =  \sum_{k=1}^ \infty   \sigma_{k} ^2 (A  - B)     . $$
\end{lemma}
\begin{proof}
Let $\{\phi_i\}_{i=1}^\infty $ and $\{\phi_i'\}_{i=1}^\infty $ be the orthogonal basis of $\mathcal W $ and $\mathcal W'$. 
Let 
$$\mathcal W_j =\Span (\{\phi_i\}_{i=1}^j) \quad \text{and}\quad \mathcal W_j' =\Span (\{\phi_i'\}_{i=1}^j) . $$ Denote 
$$ A_j =A\times \mathcal{P}_{\mathcal W_j} \times \mathcal{P}_{\mathcal W_j'}  \quad \text{and} \quad  B _j = B\times \mathcal{P}_{\mathcal W_j} \times \mathcal{P}_{\mathcal W_j'}  .$$ Since both $A$ and $B$ are compact, let $ n$ be sufficiently large such that for all $j\ge n$,
$$\| A-A_j \| _F \le \epsilon\quad \text{and} \quad \| B-B_j\|_F \le \epsilon.  $$ Then 
\begin{align*}
    &\sqrt{\sum_{k=1}^\infty (\sigma_k(A) -\sigma_k(B))^2}
    = \sqrt{\sum_{k=1}^\infty (\sigma_k(A) - \sigma_k(A_j) + \sigma_k(A_j) - \sigma_k(B_j) +\sigma_k(B_j) - \sigma_k(B))^2} 
    \\
    \le & \sqrt{\sum_{k=1}^\infty (\sigma_k(A) - \sigma_k(A_j) )^2 }+ \sqrt{\sum_{k=1}^\infty  (\sigma_k(A_j) - \sigma_k(B_j) )^2} + \sqrt{\sum_{k=1}^\infty (\sigma_k(B_j) - \sigma_k(B))^2} .
\end{align*}
Here 
$$ \sqrt{\sum_{k=1}^\infty (\sigma_k(A) - \sigma_k(A_j) )^2 } = \|A-A_j\|_F \le \epsilon\quad \text{and} \quad \sqrt{\sum_{k=1}^\infty (\sigma_k(B_j) - \sigma_k(B))^2}= \|B-B_j\|_F \le \epsilon  .$$
In addition, both $A_j$ and $B_j$ can be viewed as finite-dimensional matrices of size $j\times j$. So for $k>j$,
$$\sigma_{k}(A_j) = \sigma_{k}(B_j) =0.$$ 
By finite-dimensional Mirsky in \Cref{theorem:matrix singular value perturbation},
\begin{align*}
    & \sqrt{\sum_{k=1}^\infty  (\sigma_k(A_j) - \sigma_k(B_j) )^2}  =  \sqrt{\sum_{k=1}^j  (\sigma_k(A_j) - \sigma_k(B_j) )^2} 
    \le  \sqrt{\sum_{k=1}^j  (\sigma_k(A_j-B_j)   )^2} 
    \\
    = &\|A_j-B_j\|_F  \le \|A_j-A\|_F +\|A -B\|_F + \|B-B_j\|_F  \le 2\epsilon + \|A-B\|_F. 
\end{align*} 
Therefore 
$$\sqrt{\sum_{k=1}^\infty (\sigma_k(A) -\sigma_k(B))^2} \le 4\epsilon + \|A-B\|_F.$$
   The desired result follows because $\epsilon$ is arbitrary.
\end{proof}

\section{Approximation theories for smooth functions} 
\label{section: justify approximation theory}
In this section, we justify \Cref{theorem: approximation theory}.
 \subsection{Reproducing kernel Hilbert space basis}
 \label{subsection:RKHS}
 Let $\mathcal O$ be a measurable set in $\mathbb R$.  
For $x, y \in \mathcal O$,
let $\K :\mathcal O \times \mathcal O \to \mathbb R $ be a   kernel function  such that 
\begin{align} \label{eq:RKHS kernel definition}\mathbb K(x,y) = \sum_{\ki=1}^  \infty   \sigmak_\ki \phik_\ki(x)\phik_\ki(y) ,
\end{align}
where   $ \{ \sigmak_\ki\}_{\ki=1}^\infty  \subset \mathbb R^+ \cup \{ 0 \}$ , and  $\{ \phik_\ki\}_{\ki=1}^\infty  $ is a collection of basis functions in $\lt(\mathcal O)$.  
\\
\\
The 
   reproducing kernel Hilbert space generated by $\K$   is 
  \begin{align}  \h (\mathbb K)  = \bigg \{ f \in \lt([0,1])   : \|f\|_{\h(\K)} ^2  = \sum_{\ki=1}^\infty   (\lambda _\ki^\K)^{-1}\langle f, \phik_\ki\rangle^2    <\infty  \bigg \}. 
  \label{eq:RKHS}
  \end{align}
  For any functions $f, g\in \h(\mathbb K)$, the inner product in $\h(\mathbb K)$ is given by  
  $$ \langle f ,g \rangle_{\h(\mathbb K) } = \sum_{\ki=1}^ \infty    (\lambda_\ki^\K)^{-1}\langle f, \phik_\ki\rangle  \langle g, \phik_\ki\rangle .  $$
Denote $  \Theta^\mathbb K_\ki =  (\lambda^\K_\ki )^{-1/2}  \phik_\ki.$ Then $\{  \Theta ^\mathbb K_\ki\}_{\ki=1}^\infty$ are the orthonormal basis functions in $ \h(\K)$ as we have that
$$ \langle \Theta _{\ki_1}^\mathbb K  ,\Theta _{\ki_2}^\mathbb K \rangle_{\h(\mathbb K) }  =\begin{cases} 
1, &\text{if } {\ki_1} = {\ki_2};
\\
0, &\text{if } {\ki_1} \not = {\ki_2}.
\end{cases}  $$ 
 an that $$\|f\|_{ \h(\K) } ^2  = \sum_{\ki=1}^{\infty}    (\lambda_\ki^\K)^{-1 }\langle f, \phik_\ki\rangle^2 = \sum_{\ki=1}^{\infty}     \langle f, \Theta _\ki^\mathbb K \rangle^2 .$$  
 Define the tensor product space  
 $$ \underbrace{   \h(\K) \otimes \cdots \otimes \h(\K)  }_{d \text{ copies} }  =    \h(\K^{\otimes d} )  . $$
 The induced Frobenius norm in   $\h(\K^{\otimes d} ) $  is 
\begin{align}\label{eq:equivalent sobolev norm multidimensional 2} \|A\|_{ \h(\K^{\otimes d} )    }^2  = \sum_{\ki_1, \ldots, \ki_d=1}^\infty      \langle A, \Theta ^\mathbb K _{\ki_1} \otimes \cdots \otimes   \Theta _{\ki_d}^\mathbb K  \rangle   ^2  = \sum_{\ki_1, \ldots, \ki_d=1}^\infty  ( \lambda _{\ki_1} ^\K  \ldots   \lambda _{\ki_d} ^\K)^{-1}      \langle A , \phik_{\ki_1} \otimes \cdots \otimes    \phik_{\ki_d}  \rangle   ^2 , 
\end{align} 
 where $\langle A , \phik_{\ki_1} \otimes \cdots \otimes    \phik_{\ki_d}  \rangle  $ is defined  in \eqref{eq:tensor of univariate functions}.   
The following lemma shows that the space $\h(\K^{\otimes d} )  $ is naturally motivated by multidimensional Sobolev spaces.

%https://math.stackexchange.com/questions/441536/is-tensor-product-of-sobolev-spaces-dense %%%I can NOT find   formal references 
\begin{lemma}\label{lemma:sobolev space tensor}
Let $\mathcal O=[0,1]$. With  $ \sigmak_\ki\asymp \ki^{-2\alpha } $   and suitable choices of $\{ \phik_\ki\}_{\ki=1}^\infty $, it holds that    
$$\h(\K^{\otimes d} ) =W^{\alpha}_2([0,1]^d) .$$   
\end{lemma} 
\begin{proof}
Let $\mathcal O=[0,1]$.
When $d=1$, it is a classical Functional analysis result that with  $ \sigmak_\ki\asymp \ki^{-2\alpha } $   and suitable choices of $\{ \phik_\ki\}_{\ki=1}^\infty $,
$$  \mathcal H(\K)  =W^{\alpha}_2([0,1] ) .$$   
We refer interested readers to Chapter 12 of  \cite{wainwright2019high} for more details. 
  In general, it is well-known in functional analysis that for $\Omega_1 \subset \mathbb R^{d_1}$ and $\Omega_2 \subset \mathbb R^{d_2}$, then 
 $$ W^{\alpha}_2( \Omega_1 ) \otimes W^{\alpha}_2( \Omega_2 ) =W^{\alpha}_2( \Omega_1 \times \Omega_2) . $$
 Therefore by induction 
 \begin{align*}
\h(\K^{\otimes d} )  =  \mathcal H(\K^{\otimes d-1}) \otimes   \mathcal H(\K)  = W^{\alpha}_2([0,1]^{d-1} ) \otimes W^{\alpha}_2([0,1] ) =W^{\alpha}_2([0,1]^d).
 \end{align*}
\end{proof}

 Let  $d_1+d_2 =d$.
 In what follows, we show    \Cref{assume: baised in projected space in operator norm} holds when 
\begin{align*}  
\mathcal M =\text{span} \bigg\{\phik_{\mi_1}(z_1)  \cdots  \phik_{\mi_{d_1}}(z_{d_1}) \bigg \}_{\mi_1, \ldots, \mi_{d_1}=1}^\m \quad \text{and} \quad 
\mathcal N = \text{span}\bigg\{  \phik_{\li_1 }(z_{d_1+1})\cdots  \phik_{\li_{d_2}}(z_{d })\bigg\}_{\li_1, \ldots, \li_{d_2}=1}^\l.
\end{align*}  
\begin{lemma} \label{lemma:bias} 
Let $\K$ be a kernel in the form of \eqref{eq:RKHS kernel definition}. 
Suppose that $\lambda^\K _\ki\asymp \ki^{-2\alpha }$, and that 
$A:\mathcal O ^{d}  \to \mathbb R $ is such that $ \|A\|_{ \{ \h(\K) \}^{\otimes d}  } <\infty$. Then for any two positive integers 
 $\m, \l \in \mathbb Z^+$,  it holds that 
 \begin{align}\label{eq:bias 1}\|A-A {\times_x \mathcal P_\mathcal M \times _y \mathcal P_\mathcal N} \|^2_{\lt(\mathcal O ^{d}) }    \le C ( \m^{-2\alpha}+ \l^{-2\alpha })\|A\|_{    \h(\K^{\otimes d} )    }^2 , 
\end{align}
where $C$ is some absolute constant. 
Consequently
\begin{align}\label{eq:bias 2} \|A-A { \times _y \mathcal P_\mathcal N} \|^2_{\lt( \mathcal O ^{d} ) }    \le C    \l^{-2\alpha } \|A\|_{  \h(\K^{\otimes d} )   }^2   
\quad \text{and} 
\\\label{eq:bias 3}
\| A{ \times _y \mathcal P_\mathcal N} -A {\times_x \mathcal P_\mathcal M \times _y \mathcal P_\mathcal N}\|^2_{\lt( \mathcal O ^{d} ) }    \le  C  \m^{-2\alpha } \|A\|_{  \h(\K^{\otimes d} )   }^2 .  
\end{align}

\end{lemma}

\begin{proof} Since  $\lambda^\K _\ki \asymp \ki^{-2\alpha }$, without loss of generality,
throughout the proof we assume that 
\begin{align}\label{eq:eigenvalue assumption} \lambda^\K _\ki =  \ki^{-2\alpha },
\end{align}
as otherwise all of our analysis still holds up to an absolute constant. 
Observe  that 
\begin{align*} & \langle A{\times_x \mathcal P_\mathcal M \times _y \mathcal P_\mathcal N}\rangle  ) ,  (\phik_{\mi _1} \otimes  \cdots \otimes  \phik_{\mi_{d_1}}) \otimes  ( \phik_{\li_1 } \otimes  \cdots  \otimes \phik_{\li_{d_2}}   ) \rangle 
\\=&
\begin{cases}
\langle A ,  (\phik_{\mi _1} \otimes  \cdots \otimes  \phik_{\mi_{d_1}}) \otimes  ( \phik_{\li_1 } \otimes  \cdots  \otimes \phik_{\li_{d_2}}   ) \rangle  , &\text{if } 1\le \mi_1, \ldots, \mi_{d_1} \le \m$ and $ 1\le \li_1, \ldots,\li_{d_2} \le \l, 
\\
 0, &\text{otherwise.}
\end{cases}
\end{align*}
Then
\begin{align*}
\|A-A{\times_x \mathcal P_\mathcal M \times _y \mathcal P_\mathcal N} \|^2_{\lt( \mathcal O ^{d})  } =& \sum_{\mi_1 =\m+1}^\infty \sum_{\mi_2, \ldots, \mi_{d_1}=1}^\m       \sum_{\li_1, \ldots, \li_{d_2}=1}^\l    
\langle A ,  (\phik_{\mi _1} \otimes  \cdots \otimes  \phik_{\mi_{d_1}}) \otimes  ( \phik_{\li_1 } \otimes  \cdots  \otimes \phik_{\li_{d_2}}   ) \rangle ^2  
\\
+ & \ldots
\\
+ & \sum_{\mi_1,\ldots, \mi_{d_1}=1 }^\m  \sum_{\li_1,  \ldots, \li_{d_2-1}=1}^\l  \sum_{\li_{d_2} =\l+1}^\infty     
\langle A ,  (\phik_{\mi _1} \otimes  \cdots \otimes  \phik_{\mi_{d_1}}) \otimes  ( \phik_{\li_1 } \otimes  \cdots  \otimes \phik_{\li_{d_2}}   ) \rangle ^2  .
\end{align*}
Observe that
\begin{align*}
 & \sum_{\mi_1 =\m+1}^\infty \sum_{\mi_2, \ldots, \mi_{d_1}=1}^\m       \sum_{\li_1, \ldots, \li_{d_2}=1}^\l    
\langle A ,  (\phik_{\mi _1} \otimes  \cdots \otimes  \phik_{\mi_{d_1}}) \otimes  ( \phik_{\li_1 } \otimes  \cdots  \otimes \phik_{\li_{d_2}}   ) \rangle ^2  
\\
\le &     \sum_{\mi_1 =\m+1}^\infty \m^{-2\alpha }   \mi_{1}^{2\alpha}    \sum_{\mi_2, \ldots, \mi_{d_1}=1}^\m    \sum_{\li_1, \ldots, \li_{d_2}=1}^\l   
  \langle A ,  (\phik_{\mi _1} \otimes  \cdots \otimes  \phik_{\mi_{d_1}}) \otimes  ( \phik_{\li_1 } \otimes  \cdots  \otimes \phik_{\li_{d_2}}   ) \rangle ^2   
\\
\le &   \m^{-2\alpha }  \sum_{\mi_1 =\m+1}^\infty \sum_{\mi_2, \ldots, \mi_{d_1}=1}^\m    \sum_{\li_1, \ldots, \li_{d_2}=1}^\l  
 \mi_1 ^{2\alpha }\cdots \mi_{d_1} ^{2\alpha } \li_1^{2\alpha } \cdots \li_{d_2}^{2\alpha } \langle A ,  (\phik_{\mi _1} \otimes  \cdots \otimes  \phik_{\mi_{d_1}}) \otimes  ( \phik_{\li_1 } \otimes  \cdots  \otimes \phik_{\li_{d_2}}   ) \rangle ^2   
 \\
 \le &   \m^{-2\alpha }  \sum_{\mi_1 =1}^\infty \sum_{\mi_2, \ldots, \mi_{d_1}=1}^\infty    \sum_{\li_1, \ldots, \li_{d_2}=1}^\infty   
 \mi_1 ^{2\alpha }\cdots \mi_{d_1} ^{2\alpha } \li_1^{2\alpha } \cdots \li_{d_2}^{2\alpha } \langle A ,  (\phik_{\mi _1} \otimes  \cdots \otimes  \phik_{\mi_{d_1}}) \otimes  ( \phik_{\li_1 } \otimes  \cdots  \otimes \phik_{\li_{d_2}}   ) \rangle ^2   
 \\
 =  &    \m^{-2\alpha }\|A\|_{   \h(\K^{\otimes d-1} )     }^2 
\end{align*}
where the first inequality holds because $\mi_1\ge \m+1 \ge \m $ and the last equality follows from \eqref{eq:equivalent sobolev norm multidimensional 2} and \eqref{eq:eigenvalue assumption}.
Similarly
\begin{align*}
 & \sum_{\mi _1,\ldots, \mi_{d_1}=1 }^\m   \sum_{\li_1, \ldots, \li_{d_2-1}=1}^\l  \sum_{\li_{d_2} =\l+1}^ \infty     
\langle A ,  (\phik_{\mi _1} \otimes  \cdots \otimes  \phik_{\mi_{d_1}}) \otimes  ( \phik_{\li_1 } \otimes  \cdots  \otimes \phik_{\li_{d_2}}   ) \rangle ^2 
\\
\le &   \sum_{\mi _1,\ldots, \mi_{d_1}=1 }^\m    \sum_{\li_1,  \ldots, \li_{d_2-1}=1}^\l  \sum_{\li_{d_2} =\l+1}^ \infty  \l^{-2\alpha }  \li_{d_2}^{2\alpha }  \langle A ,  (\phik_{\mi _1} \otimes  \cdots \otimes  \phik_{\mi_{d_1}}) \otimes  ( \phik_{\li_1 } \otimes  \cdots  \otimes \phik_{\li_{d_2}}   ) \rangle ^2   
\\
\le &   \l^{-2\alpha } \sum_{\mi _1,\ldots, \mi_{d_1}=1 }^\m    \sum_{\li_1,  \ldots, \li_{d_2-1}=1}^\l  \sum_{\li_{d_2} =\l+1}^ \infty     
 \mi_1 ^{2\alpha } \cdots \mi_{d_1} ^{2\alpha } \li_1^{2\alpha} \cdots \li_{d_2}^{2\alpha }\langle A ,  (\phik_{\mi _1} \otimes  \cdots \otimes  \phik_{\mi_{d_1}}) \otimes  ( \phik_{\li_1 } \otimes  \cdots  \otimes \phik_{\li_{d_2}}   ) \rangle ^2  
 \\
    \leq  & \l^{-2\alpha }\|A\|_{  \h(\K^{\otimes d-1})     }^2 ,
\end{align*}
where the first inequality holds because $\li_{d_2}\ge \l+1 \ge \l $ and the last inequality follows from \eqref{eq:equivalent sobolev norm multidimensional 2} and \eqref{eq:eigenvalue assumption}.
Thus \eqref{eq:bias 1} follows immediately.
\\
\\
For 
 \eqref{eq:bias 2}, note that when $\m=\infty  $, $\mathcal M= \lt(\mathcal O^{d_1})$. In this case 
 $ \mathcal P_\mathcal M$ becomes the  identity  operator and 
 $$A  {\times_x \mathcal P_\mathcal M \times _y \mathcal P_\mathcal N} = A{ \times _y \mathcal P_\mathcal N}. $$ Therefore \eqref{eq:bias 2} follows from 
    \eqref{eq:bias 1} by taking $\m=\infty  $.  
\\
\\
For \eqref{eq:bias 3}, similar to \eqref{eq:bias 2}, we have that  
$$ \|A - A \times_x \mathcal{P}_\mathcal{M} \|^2_{\lt( \mathcal O ^{d} ) }    \le C   \m^{-2\alpha } \|A\|_{  \h(\K^{\otimes d}  )    }^2    .  $$
It follows that 
$$\| A { \times _y \mathcal P_\mathcal N} - A {\times_x \mathcal P_\mathcal M \times _y \mathcal P_\mathcal N} \|^2_{\lt(\mathcal O ^{d} ) }   \le \| A  - A \times_x \mathcal{P}_\mathcal{M} \|^2 _{\lt( \mathcal O ^{d} ) }    
\|\mathcal P_\mathcal N \|^2_{\op }   \le C d_1 \m^{-2\alpha } \|A\|_{   \h(\K^{\otimes d}  )   }^2      , $$
where last inequality follows from the fact that $ \|\mathcal P_\mathcal N \|_{\op } \le 1$.
\end{proof}

\subsection{Legendre polynomial basis}

%\textcolor{red}{$N \to M$, as $N$ is  sample size}

 \label{subsection:polynomial}
  Legendre polynomials is a well-known  classical orthonormal  polynomial system in $\lt([-1,1])$.  We can define the Legendre polynomials in the following inductive way. Let $p_0=1 $ and suppose $\{ p_k\}_{k=1}^{n-1} $ are defined. Let $p_n:[-1,1] \to \mathbb R$ be a polynomial of degree $n$ such that 
  \\
  $\bullet$ $\| p_n\|_{\lt([-1,1])}=1$, and 
  \\
   $\bullet$ $ \int_{-1}^1 p_n(z) p_k(z) \mathrm{d}z  = 0$ for all $0\le k \le n-1 $.
   \\
As a quick example, we have that
$$p_0(z ) =1, \quad   p_1( z ) = \sqrt{\frac{3}{2}} z, \quad \text{and} \quad 
p_2( z ) = \sqrt{\frac{5}{3}}\frac{ 3 z ^2-1 }{2}.
 $$
Let $q_k( z ) = \sqrt {2 }p_k(2z -1)  $. Then $\{q_k\}_{k=0}^\infty$ are the orthonormal polynomial system in $\lt([0,1])$.
 In this subsection, we show that 
     \Cref{assume: baised in projected space in operator norm} holds when   $\{ \phi_k  \}_{k=1}^\infty  =\{ q_k \}_{k=0}^\infty $.  
     More precisely, let 
     $$ \mathbb S_n (z) =\Span \{q_k (z) \}_{k=0}^n $$
     and $\mathcal P_{ \mathbb S_n (z) }$ denote the projection operator from $\lt([0,1])  $ to $\mathbb S_n(z)  $.  Then  $ \mathbb S_n (z) $ is the subspace of polynomials of degree at most $ n$. In addition, for any $ f\in \lt([0,1])$, $ \mathcal P_{ \mathbb S_n(z) }(f)$ is the best $n$-degree polynomial approximation of $f$ in the sense that 
      \begin{align} \label{eq:polynomial in 1D}  \| \mathcal P_{\mathbb S_n(z)} (f) -f  \|_{\lt([0,1])}  = \min_{g \in \mathbb S_n} \| g -f  \|_{\lt([0,1])} . 
      \end{align}
     
We begin with a well-known polynomial approximation result.
For $\alpha\in \mathbb Z^+$, denote $ C^{\alpha}([0,1])$ to be the class of functions that are $\alpha$ times  continuously differentiable.  
\begin{theorem} \label{theorem:polynomial interpolation}
Suppose $f  \in C^{\alpha}([0,1]) $. Then for any $n\in \mathbb Z^+$, there exists a   polynomial $g_{2n}(f ) $ of degree $2n$ such that
$$\| f  -g_{2n} (f )  \|_{\lt([0,1])} ^2  \le C n^{-2 \alpha }\| f^{(\alpha)}\|_{\lt([0,1])} ^2,  $$
where $C$ is an absolute constant.

\end{theorem}
\begin{proof}
This is Theorem 1.2 of  \cite{xu2018approximation}.
\end{proof}
\
\\
Therefore by \eqref{eq:polynomial in 1D} and \Cref{theorem:polynomial interpolation}, 
 \begin{align} \label{eq:polynomial in 1D 2} \| \mathcal P_{\mathbb S_n(z)} (f) -f  \|^2_{\lt([0,1])} \le  \| f  -g_{\lfloor n/2\rfloor }( f )  \|_{\lt([0,1])} ^2  \le C 'n^{-2 \alpha }\| f ^{(\alpha)}\|_{\lt([0,1])} ^2 .
  \end{align} 
Let $z=(z_1,\ldots, z_d)\in \mathbb R^{d}$ and let  $\mathbb S_n(z_j ) $ denote the linear space spanned by polynomials of $z_j$ of degree  at most $n$.

\begin{corollary}
\label{corollary:polynomial projection approximation}
Suppose $ B(z_1,\ldots, z_d)  \in C^{\alpha}([0,1]^d)$. Then for any $1\le k \le d $,
$$ \| B-B \times_{ k }\mathcal P_{\mathbb S_n (z_{k })}\|_{\lt([0,1]^d)} \le C  n^{-\alpha } \| B \|_{W^\alpha_2([0,1]^d)}. $$

\end{corollary}
\begin{proof}
It suffices to consider $k=1$. For any fixed $(z_2, \ldots, z_d) $,
$B(\cdot,z_2, \ldots, z_d) \in C^\alpha([0,1])$. Therefore by \eqref{eq:polynomial in 1D 2},
$$ \int \bigg\{ B(z_1, z_2,\ldots, z_d) -   B \times_1\mathcal P_{\mathbb S_n(z_1)}(z_1, z_2,\ldots, z_d)  \bigg\}^2 \mathrm{d}z_1  \le Cn^{-2 \alpha} \int\bigg\{  \frac{\partial^\alpha }{\partial^\alpha  z_1 }   B(z_1, z_2,\ldots, z_d) \bigg\}^2   \mathrm{d}z_1 .$$
Therefore  
\begin{align*}
 &\| B- B \times_{1}\mathcal P_{\mathbb S_n (z_{1})}\|_{\lt([0,1]^d)}  
 \\ =&\int \cdots \int \bigg\{ B(z_1, z_2,\ldots, z_d) -
 B \times_{1}\mathcal P_{\mathbb S_n (z_{1})} (z_1, z_2,\ldots, z_d)  \bigg\}^2 \mathrm{d}z_1   \mathrm{d}z_2 \cdots \mathrm{d}z_d 
\\
 \le  &\int\cdots \int  Cn^{-2\alpha} \int\bigg\{  \frac{\partial^\alpha }{\partial^\alpha  z_1 }   B(z_1, z_2,\ldots, z_d) \bigg\}^2   \mathrm{d}z_1  \mathrm{d}z_2 \cdots \mathrm{d}z_d .
\end{align*}
The desired result follows from the observation that  
$\int\cdots     \int\big \{  \frac{\partial^\alpha }{\partial^\alpha  z_1 }   B(z_1, z_2,\ldots, z_d) \big \}^2   \mathrm{d}z_1   \cdots \mathrm{d}z_d \le \| B \|^2_{W^\alpha_2([0,1]^d)} $.
\end{proof}

In what follows, we present a polynomial approximation theory in multidimensions.

\begin{lemma} \label{lemma:polynomial interpolation in R^D}For $ j \in [1,\ldots, d] $,
let $\mathbb S_{n_j}(z_j ) $ denote the linear space spanned by polynomials of $z_j$ of degree  $n_j$ and let  $\mathcal P_{\mathbb S_{n_j}(z_j )} $ be the corresponding  projection operator. Then for any $ B \in W^\alpha_2([0,1]^d)$, 
it holds that 
\begin{align} \label{eq:polynomial interpolation in R^D}\| B-B  \times_1 \mathcal P_{\mathbb S_{n_1} (z_1)} \cdots \times_{d}\mathcal P_{\mathbb S_{n_d} (z_d)}\|_{\lt([0,1]^d)} \le C  \sum_{j =1}^d n_j^{-\alpha } \| B \|_{W^\alpha_2([0,1]^d)} . 
\end{align} 
\end{lemma}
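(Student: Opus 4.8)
The plan is to prove the bound by a telescoping argument that isolates the contribution of each one-dimensional projection and reduces it to the already-established one-variable estimate. Write $\mathcal P_j := \mathcal P_{\mathbb S_{n_j}(z_j)}$, and for $0 \le k \le d$ set $B^{(k)} := B \times_1 \mathcal P_1 \cdots \times_k \mathcal P_k$, so that $B^{(0)} = B$ and $B^{(d)} = B \times_1 \mathcal P_{\mathbb S_{n_1}(z_1)} \cdots \times_d \mathcal P_{\mathbb S_{n_d}(z_d)}$. Then
\begin{align*}
B - B^{(d)} = \sum_{j=1}^d \big( B^{(j-1)} - B^{(j)} \big),
\end{align*}
and by the triangle inequality in $\lt([0,1]^d)$ it suffices to bound $\| B^{(j-1)} - B^{(j)} \|_{\lt([0,1]^d)}$ for each $j \in \{1,\ldots,d\}$.

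For a fixed $j$, linearity of the contraction gives $B^{(j-1)} - B^{(j)} = B^{(j-1)} \times_j (\mathrm{Id} - \mathcal P_j)$. Since the operations $\times_1 \mathcal P_1, \ldots, \times_{j-1} \mathcal P_{j-1}$ act on coordinates other than $z_j$, they commute with $\times_j (\mathrm{Id} - \mathcal P_j)$ (this is the tensor-space identification recorded in \Cref{eq:tensor multiple layers} together with \Cref{lemma:tensors operation are exchangable}), so
\begin{align*}
B^{(j-1)} - B^{(j)} = \big( B - B \times_j \mathcal P_j \big) \times_1 \mathcal P_1 \cdots \times_{j-1} \mathcal P_{j-1}.
\end{align*}
Applying \Cref{lemma: frobenius and operator norm bound} (recalling $\|\cdot\|_F = \|\cdot\|_{\lt([0,1]^d)}$ by \eqref{eq:sobolev space tensor}) and $\|\mathcal P_i\|_{\op} \le 1$ yields
\begin{align*}
\| B^{(j-1)} - B^{(j)} \|_{\lt([0,1]^d)} \le \| B - B \times_j \mathcal P_j \|_{\lt([0,1]^d)} \prod_{i=1}^{j-1} \|\mathcal P_i\|_{\op} \le \| B - B \times_j \mathcal P_j \|_{\lt([0,1]^d)}.
\end{align*}
Now \Cref{corollary:polynomial projection approximation} bounds the right-hand side by $C n_j^{-\alpha} \| B \|_{W^\alpha_2([0,1]^d)}$, and summing over $j = 1, \ldots, d$ gives \eqref{eq:polynomial interpolation in R^D}.

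The one genuine subtlety is the commutation/associativity step: one must check that the composition of the single-variable contractions is well defined on $W^\alpha_2([0,1]^d)$ and that the factors may be reordered, which rests on the tensor-product identifications of \Cref{sec:linear algebra in function spaces} (in particular $A \times_x \mathcal P_{\mathcal U_x} = A \times_1 \mathcal P_{\mathcal U_1} \cdots \times_k \mathcal P_{\mathcal U_k}$ and the operator-norm submultiplicativity of \Cref{lemma: frobenius and operator norm bound}). A secondary, routine point is that \Cref{corollary:polynomial projection approximation} is stated for $C^\alpha([0,1]^d)$ but is invoked here for $B \in W^\alpha_2([0,1]^d)$; this extends by density of $C^\alpha([0,1]^d)$ in $W^\alpha_2([0,1]^d)$ since both sides of the inequality are continuous in the $W^\alpha_2$-norm (the map $B \mapsto B - B \times_1 \mathcal P_1 \cdots \times_d \mathcal P_d$ is bounded on $\lt$). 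Note that the argument deliberately never needs to control the Sobolev norm of the intermediate tensors $B^{(j-1)}$: pulling the $z_j$-projection to the front and absorbing the remaining projections through their operator norms sidesteps that issue entirely, which is why the final constant depends only on $d$ and $\alpha$ through $C$.
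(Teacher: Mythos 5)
Your proof is correct and is essentially the same as the paper's: the paper proceeds by induction on the number of projections applied, while you unroll that induction into a telescoping sum $B - B^{(d)} = \sum_{j=1}^d (B^{(j-1)} - B^{(j)})$, but both arguments isolate one single-variable projection error per term, absorb the remaining projections via $\|\mathcal P_i\|_{\op}\le 1$ (the paper's \Cref{lemma: frobenius and operator norm bound}), and then invoke \Cref{corollary:polynomial projection approximation} for the one-dimensional rate, with the $C^\alpha \to W^\alpha_2$ extension handled by density in both cases. Your explicit commutation step (pulling the $z_j$-projection to the front before applying submultiplicativity) is implicit in the paper's line bounding $\|B\times_{p+1}\mathcal P_{p+1}-B\|\prod_i\|\mathcal P_i\|_{\op}$, so there is no substantive difference.
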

\begin{proof} Since $C^{\alpha}([0,1]^d)$ is dense in $W^\alpha_2([0,1]^d)$, it suffices to show \eqref{eq:polynomial interpolation in R^D} for all functions in $  \in C^{\alpha}([0,1]^d)$. To this end, 
we proceed by induction.  The base case 
\begin{align*} \| B-B \times_1 \mathcal P_{\mathbb S_{n_1} (z_1)}  \|_{\lt([0,1]^d)} \le C    n_1^{-\alpha } \| B  \|_{W^\alpha_2([0,1]^d)}  
\end{align*}
is a direct consequence of \Cref{corollary:polynomial projection approximation}. Suppose   by induction, the following inequality holds for $k$, 
\begin{align} \label{eq:polynomial porjection induction hypothesis}  \| B -B  \times_1 \mathcal P_{\mathbb S_{n_1} (z_1)} \cdots \times_{k}\mathcal P_{\mathbb S_{n_k} (z_{k})}\|_{\lt([0,1]^d)} \le C  \sum_{j =1}^k n_j^{-\alpha } \| B  \|_{W^\alpha_2([0,1]^d)} . 
\end{align} 
Then 
\begin{align}\nonumber  
&\| B -B  \times_1 \mathcal P_{\mathbb S_{n_1} (z_1)} \cdots \times_{k}\mathcal P_{\mathbb S_{n_k} (z_{k})}\times_{k+1}\mathcal P_{\mathbb S_{n_{k+1}} (z_{k+1} )} \|_{\lt([0,1]^d)} 
\\ \nonumber  
\le & \| B  \times_1 \mathcal P_{\mathbb S_{n_1} (z_1)} \cdots \times_{k}\mathcal P_{\mathbb S_{n_k} (z_{k})}\times_{k+1}\mathcal P_{\mathbb S_{n_{k+1}} (z_{k+1} )} -B  \times_1 \mathcal P_{\mathbb S_{n_1} (z_1)} \cdots \times_{k}\mathcal P_{\mathbb S_{n_k} (z_{k})} \|_{\lt([0,1]^d)} 
\\\nonumber   
+&\| B - B \times_1 \mathcal P_{\mathbb S_{n_1} (z_1)} \cdots \times_{k}\mathcal P_{\mathbb S_{n_k} (z_{k})}  \|_{\lt([0,1]^d)} .
\end{align} 
The desired result follows from \eqref{eq:polynomial porjection induction hypothesis} and the observation  that 
\begin{align}\nonumber  
&  \|B \times_1 \mathcal P_{\mathbb S_{n_1} (z_1)} \cdots \times_{k}\mathcal P_{\mathbb S_{n_k} (z_{k})}\times_{k+1}\mathcal P_{\mathbb S_{n_{k+1}} (z_{k+1} )} -B  \times_1 \mathcal P_{\mathbb S_{n_1} (z_1)} \cdots \times_{k}\mathcal P_{\mathbb S_{n_k} (z_{k})} \|_{\lt([0,1]^d)} 
  \\ \nonumber  
  \le & \| B   \times_{k+1}\mathcal P_{\mathbb S_{n_{k+1}} (z_{k+1} )} - B  \|_{\lt([0,1]^d)}  \|  \mathcal P_{\mathbb S_{n_1} (z_1)} \|_{\op } \cdots \| \mathcal P_{\mathbb S_{n_k} (z_{k})} \|_{\op}
  \\\nonumber  
  \le & Cn^{-\alpha}_{k+1}\| B  \|_{W^\alpha_2([0,1]^d)},
\end{align} 
where the last inequality follows from \Cref{corollary:polynomial projection approximation}  and that
$\|\mathcal P_{\mathbb S_{n_{j}} (z_{j} )}  \|_{\op}\le 1 $ for all $j$.
\end{proof}
\
\\
This   following lemma shows that    \Cref{assume: baised in projected space in operator norm}  is a consequence of \Cref{lemma:polynomial interpolation in R^D}.
\begin{lemma} \label{lemma:legendre bias} Let $1 \le d_1 \le  d$, 
$  
\mathcal M = \mathbb S_{\m}(z_1) \otimes \cdots \otimes  \mathbb S_{\m}(z_{d_1})$,    and  
$\mathcal N = \mathbb S_{\l}(z_{d_1+1}) \otimes \cdots \otimes  \mathbb S_{\l}(z_{d}). $    
Suppose 
$\|A\|_{W^{\alpha}_2([0,1]^{d} ) }< \infty$.
Then for $   1  \le \m \le \infty $ and $ 1  \le \l \le \infty$,
\begin{align}\label{eq:legendre bias 1}& \|A-A {\times_x \mathcal P_\mathcal M \times _y \mathcal P_\mathcal N} \|^2_{ \lt  ([0,1]^{d} )  }    =\bigO (\m^{-2\alpha}+  \l^{-2\alpha }  )
\\
\label{eq:legendre bias 2} 
&\|A-A { \times _y \mathcal P_\mathcal N} \|^2_{\lt  ([0,1]^{d} ) }     =  \bigO(  \l^{-2\alpha }   )
\quad \text{and} 
\\\label{eq:legendre bias 3}
&\| A{ \times _y \mathcal P_\mathcal N} -A {\times_x \mathcal P_\mathcal M \times _y \mathcal P_\mathcal N}\|^2_{ \lt ([0,1]^{d} )  }     = \bigO( \m^{-2\alpha }  ).
\end{align}

\end{lemma}
 
\begin{proof}
For \eqref{eq:legendre bias 1}, by \Cref{lemma:polynomial interpolation in R^D}, 
\begin{align*} \|A-A {\times_x \mathcal P_\mathcal M \times _y \mathcal P_\mathcal N} \| _{\lt([0,1]^{d} ) }  \le & C  \big(  d_1(\m-\alpha-1)^{- \alpha }+d_2(\l-\alpha-1)^{- \alpha } \big)  \|A\|_{W^{\alpha}_2([0,1]^{d } ) }  
\\
=& \bigO (\m^{-\alpha}+  \l^{-\alpha }   ) ,
\end{align*}
where the equality follows from the fact that $\alpha$, $d_1$ and $d$  are constants. 
\\
\\
For 
 \eqref{eq:legendre bias 2}, note that when $\m=\infty$, $\mathcal M= \lt([0,1]^{d_1})$. In this case 
 $ \mathcal P_\mathcal M$ becomes the  identity  operator and 
 $$A  {\times_x \mathcal P_\mathcal M \times _y \mathcal P_\mathcal N} = A { \times _y \mathcal P_\mathcal N}. $$ Therefore \eqref{eq:legendre bias 2} follows from 
    \eqref{eq:legendre bias 1} by taking $\m=\infty$.  
\\
\\
For \eqref{eq:legendre bias 3}, similar to \eqref{eq:legendre bias 2}, we have that  
$$ \|A-A \times_x \mathcal P_\mathcal M \|^2_{\lt([0,1]^{d} ) }    =\bigO( \m^{-2\alpha } ) .  $$
It follows that 
$$\| A{ \times _y \mathcal P_\mathcal N} -A{\times_x \mathcal P_\mathcal M \times _y \mathcal P_\mathcal N} \|^2_{\lt([0,1]^{d} ) }   
\\
\le \| A  -A\times_x\mathcal P_\mathcal M\|^2_{\lt([0,1]^{d} ) }    
\|\mathcal P_\mathcal N \|^2_{\op } =  \bigO( \m^{-2\alpha } )    , $$
where last inequality follows from the fact that $ \|\mathcal P_\mathcal N \|_{\op } \le 1$. 
\end{proof}

\section{Additional numerical studies}
\label{sec:numerical appendix}
\subsection{Neural network density estimator}

For neural network estimators, we use two popular density estimation architectures: Masked Autoregressive Flow (MAF) (\cite{papamakarios2017masked}) and Neural Autoregressive Flows (NAF) (\cite{huang2018neural}) for comparisons. Both neural networks are trained using the  Adam optimizer (\cite{kingma2014adam}). For MAF, we use 5 transforms and each transform is a 3-hidden layer neural network with width 128. For NAF, we choose 3 transforms and each transform is a 3-hidden layer neural network with width 128.

\subsection{Kernel methods}
In our simulated experiments and real data examples, we choose  Gaussian kernel for the kernel density estimators and Nadaraya–Watson kernel regression (NWKR) estimators.  The bandwidths in all the numerical examples are chosen using cross-validations. We refer interested readers to \cite{wasserman2006all} for an introduction to nonparametric statistics.

\end{document}